\documentclass[lettersize,journal]{IEEEtran}
\usepackage{algorithm}
\usepackage{array}
\usepackage{textcomp}
\usepackage{stfloats}
\usepackage{url}
\usepackage{verbatim}
\hyphenation{op-tical net-works semi-conduc-tor IEEE-Xplore}

\IEEEoverridecommandlockouts                              
\usepackage[utf8]{inputenc}
\usepackage[noadjust]{cite}
\usepackage{graphicx}       
\usepackage[T1]{fontenc}    
\usepackage{algorithm}      
\usepackage[noend]{algpseudocode}
\algrenewcommand\algorithmicindent{0.7em}
\usepackage{ifthen}         
\usepackage{xspace}         
\usepackage{hyperref}       
\hypersetup{
    colorlinks=true,
    linkcolor=blue,
    filecolor=magenta,      
    urlcolor=cyan,
}
\usepackage{microtype}

\usepackage{svg}
\usepackage{amsmath,amsfonts}
\usepackage[noabbrev]{cleveref}
\usepackage[utf8]{inputenc}
\usepackage{mathtools}
\usepackage{cleveref}
\usepackage{pdfpages}
\usepackage{amssymb}

\usepackage{tabularx}
\usepackage{bm}
\usepackage{pdfescape}
\graphicspath{{assets/}} 
\newcolumntype{M}[1]{>{\centering\arraybackslash}m{#1}}



\newcommand{\myproof}[1]{%
  \noindent\hspace{2em}{\itshape Proof of #1: }%
}

\newcommand{\myendproof}{\hfill\ensuremath{\blacksquare}\par\endtrivlist\unskip}



\newcommand\irisuu{\algname{IRIS-U$^2$}}

\newcommand\Nloc{\mathcal{N}_{\rm unc}\xspace}
\newcommand\Dloc{\mathcal{D}_{\rm unc}\xspace}

\newcommand{\ek}{{\ensuremath{(\eps,\kappa)}}\xspace}

\newcommand{\colProbHat}{\hat{\mathcal{C}}(\pi)}
\newcommand{\colProbBar}{\bar{\mathcal{C}}(\pi)}

\newcommand\expL{\bar{\ell}(\pi)\xspace}
\newcommand\expS{\bar{\mathcal{S}}(\pi)\xspace}
\newcommand\estL{\hat{\ell}(\pi)\xspace}
\newcommand\estS{\hat{\mathcal{S}}(\pi)\xspace}

\newcommand\Mexample{\mathcal{M}_{\text{toy}}}
\newcommand\Msimple{\mathcal{M}_{\text{simple}}}
\newcommand\MsimpleBridge{\hat{\mathcal{M}}_{\text{Simple-GNSS-INS}}}
\newcommand\Madvanced{\mathcal{M}_{\text{advanced}}}
\newcommand\Msimplified{\mathcal{M}_{\text{simplified}}}
\newcommand\Mexact{\mathcal{M}_{\text{exact}}}

\usepackage{caption}
\usepackage{subcaption}
\newtheorem{prob}{Problem}
\newtheorem{dfn}{Definition}

\newtheorem{theorem}{thm}[section]
\newtheorem{lemma}[theorem]{Lemma}

\newtheorem{assumption}{Assumption}

\def\kiril#1{\textcolor{blue}{(\textbf{KS:} #1)}}

\def\check#1{{ #1}}

\newcommand{\ignore}[1]{}
\newcommand{\ignoreStat}[1]{}

 \def\I{\mathcal{I}} \def\E{\mathcal{E}}
\def\S{\mathcal{S}} \def\G{\mathcal{G}} 
\def\I{\mathcal{I}}  
 \def\V{\mathcal{V}} 
  
 \def\X{\mathcal{X}} 
 
\def\R{\mathcal{I}}

\def\eps{\varepsilon}

\newcommand{\Cpp}{C\raise.08ex\hbox{\tt ++}\xspace}

\newcommand\algname[1]{\textsf{#1}\xspace}

\newcommand\astar{\algname{A*}}

\newcommand\iris{\algname{IRIS}}

\newcommand\lum{\algname{UP-\iris}}

\newcommand\ap{\algname{AP}}
\newcommand\pap{\algname{PAP}}
\newcommand\paps{{{\pap}s}\xspace}
\newcommand\pp{\algname{PP}}


\newboolean{HIDENOTES}
\setboolean{HIDENOTES}{false}

\ifthenelse{\boolean{HIDENOTES}}
    {
\newcommand{\MF}[1]{{}}
\newcommand{\OS}[1]{{}}
\newcommand{\RA}[1]{{}}
    }
    {
\newcommand{\MF}[1]{\textcolor{purple}{\textbf{[MF]:}~#1}}
\newcommand{\OS}[1]{\textcolor{orange}{\textbf{[OS]:}~#1}}
\newcommand{\RA}[1]{{\textcolor{blue}{\textbf{[RA]:}~#1}}}
    }

\renewcommand{\vec}{\mathbf}





\colorlet{pink}{red!40}
\colorlet{light_blue}{cyan!60}

\newboolean{POI}
\newboolean{ROI}

\setboolean{POI}{true}
\ifthenelse{\boolean{POI}}
	{\setboolean{ROI}{false}}
	{\setboolean{ROI}{true}}

\newcommand{\inspectionType}[2]
{\ifthenelse{\boolean{POI} }{{#1}}{}\ifthenelse{\boolean{ROI}}{#2}{}}

\title{\LARGE \bf
Inspection planning under execution uncertainty
}

\author{Shmuel David Alpert, Kiril Solovey, Itzik Klein, Oren Salzman}

\date{September 2022}

\begin{document}

\maketitle

\begin{abstract}%
\label{Sec:abstract}%
Autonomous inspection tasks necessitate path-planning algorithms to efficiently gather observations from \emph{points of interest}~(POI). However, localization errors commonly encountered in urban environments can introduce execution uncertainty, posing challenges to successfully completing such tasks. 
\check{Unfortunately, existing algorithms for inspection planning do not explicitly account for execution uncertainty, which can hinder their performance. To bridge this gap, we present \emph{~\iris-under uncertainty}~(\irisuu), the first inspection-planning algorithm that offers statistical guarantees regarding coverage, path length, and collision probability. 
Our approach builds upon \iris---our framework for \emph{deterministic} inspection planning, which is highly efficient and provably asymptotically-optimal. The extension to the much more involved uncertain setting is achieved by a refined search procedure that estimates POI coverage probabilities using Monte Carlo (MC) sampling.}
The efficacy of~\irisuu is demonstrated through a case study focusing on structural inspections of bridges. Our approach exhibits improved expected coverage, reduced collision probability, and yields increasingly precise \check{statistical guarantees} as the number of MC samples grows. Furthermore, we demonstrate the potential advantages of computing bounded sub-optimal solutions to reduce computation time while maintaining statistical guarantees.

\end{abstract}



\section{Introduction}
\label{Sec:Introduction and related work}
We consider the problem of planning in an offline phase a collision-free path for a robot to inspect a set of \emph{points of interest}~(POIs) using onboard sensors.
This can be challenging, especially in urban environments, where dynamics uncertainty and localization errors (e.g., inaccuracies in location estimates) increase the task's complexity. In particular, localization can be a significant source of uncertainty in urban environments, leading to missed POIs and compromising the efficiency and accuracy of inspection missions.

One application that motivates this work is the inspection of bridges using \emph{unmanned aerial vehicles}~(UAVs)~\cite{bircher2016three}. Almost~$40\%$ of the bridges in the United States of America exceed their~$50$-year design life~\cite{mcguire2016bridge}, and regular inspections are critical to ensuring bridge safety. UAVs can efficiently inspect bridge structures via visual assessment at close range without involving human inspectors or expensive  under-bridge inspection units~\cite{chan2015towards}.
In these scenarios, the UAV typically carries a camera for POI inspection and a navigation system that combines data from various sensors like a \emph{global navigation satellite system}~(GNSS) and inertial sensors. Yet, GNSS signal obstruction by the bridge can cause location inaccuracies, heavily relying on inertial measurements (which also suffer from inequaricies) and potentially compromising inspection effectiveness (see Fig.~\ref{fig:figure12}).

\begin{figure}[t!]
    \centering
    \includegraphics[width=0.5\textwidth]{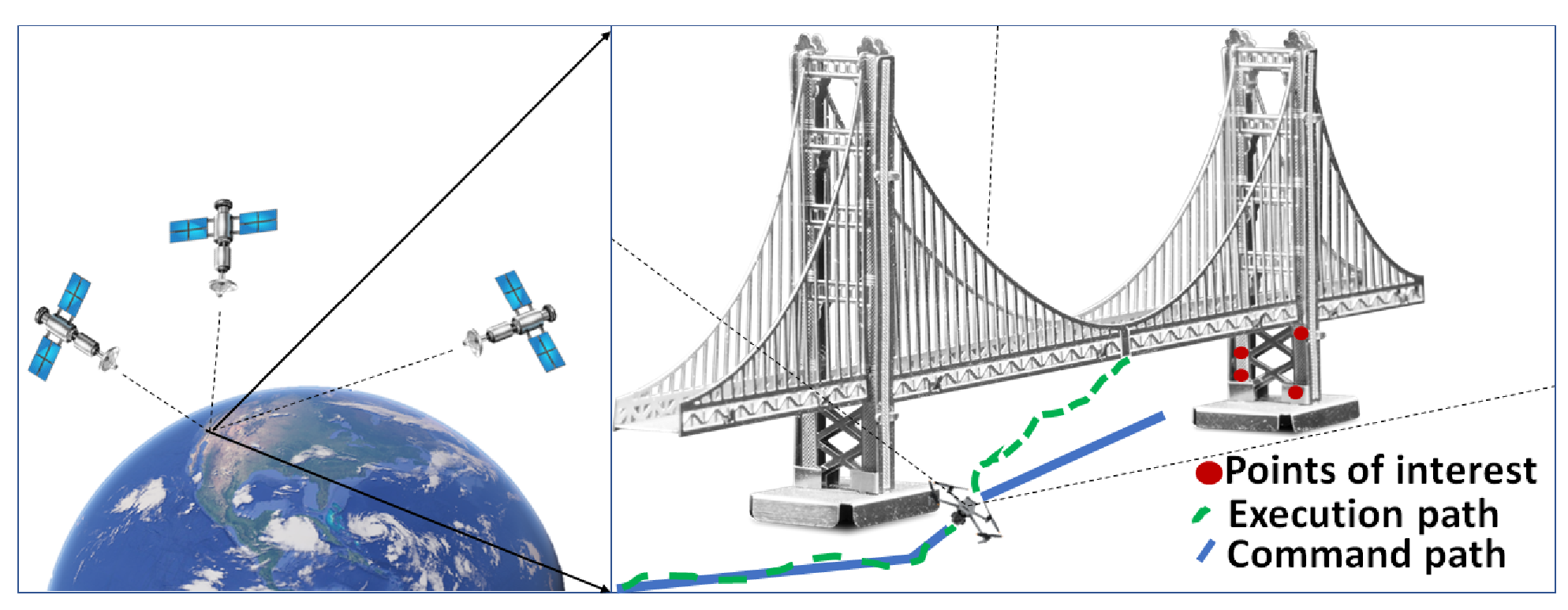}
    \caption{
    A UAV executing a bridge-inspection task, following a command path pre-computed offline (dark blue). 
    The UAV relies on GNSS satellite signals for navigation but the bridge obstructs one satellite signal during a particular segment of its path (light blue).
    This leads to a substantial deviation from the command path, which hinders the UAV's ability to inspect the  POIs and may result in a collision with the bridge.}
    \label{fig:figure12}
    \vspace{-4mm}
\end{figure}


Several approaches have been proposed to perform inspection planning without accounting for uncertainty~\cite{bircher2016three,fu2019toward,bircher2017incremental}. 
These methods can be used when the uncertainty is low by compensating for lack of GNSS by upgrading the navigation system to use 
RF tools such as  WIFI~\cite{sun2014wifi}, Bluetooth~\cite{rida2015indoor}
or  tactical-grade inertial sensors~\cite{klein2011vehicle}. 
However, these solutions require a supportive communication infrastructure or more expensive and heavy sensors.

\check{
As we detail in Sec.~\ref{sec:related-work}, uncertainty has been extensively considered when planning the motion of a robot without accounting for inspection considerations.
This has been done using various methods that allow to account for position uncertainty by e.g., using  
Monte Carlo localization~\cite{JansonSP15}, 
particle filters~\cite{MelchiorS07}, or 
Bayesian filtering to estimate and propagate uncertainty in the robot's position and environment~\cite{BergPA12}.
However, uncertainty in inspection tasks introduces further complications as it requires reasoning about an exceptionally large search space that captures the interaction between the POI locations, the locations from which POIs can be inspected, and the order of visitation of the latter locations. As a result, the computational complexity of inspection planning can be large, even in the absence of localization uncertainty~\cite{bircher2017incremental,DBLP:journals/ijrr/FuKSA23}.} 

\check{In our previous work, we introduced the \iris algorithm,  a highly effective approach for deterministic inspection planning~\cite{DBLP:journals/ijrr/FuKSA23}. Importantly, \iris guarantees asymptotic convergence to optimal solutions while being several orders of magnitude faster than previous work providing similar guarantees. }

\check{Unfortunately, extending \iris to the uncertain setting is highly nontrivial. Roughly speaking,  the efficiency of \iris is due to a novel graph-search algorithm that searches in the space of possible paths, where the search is guided by reasoning about which POIs have been seen along any given path. Unfortunately, in our context, execution uncertainty can lead to the robot deviating from its proposed path, which can lead to inspecting a different set of POIs from what it initially intended. 
%
}

\vspace{5pt}
\noindent \textbf{Contribution.}
\check{We present \iris-under uncertainty ({\irisuu})---the first algorithm for offline inspection planning that systematically accounts for execution uncertainty. Our algorithm combines the capabilities of (i) \iris, which efficiently explores the space of high-quality inspection plans in the deterministic setting, together with (ii) \emph{Monte Carlo}~(MC) sampling to reason about uncertainty via POI inspection probabilities. Importantly, our approach does not merely utilize those two components in a decoupled manner, by, e.g., interleaving between planning and uncertainty estimation~\cite{papachristos2019autonomous}, but rather uses the uncertainty model to (i) obtain statistical bounds (on the number of POIs inspected by each path, collision probability and length) and (ii) uses these statistical bounds to guide the exploration of the search space. 
To compute and integrate those probabilities within a search-based approach in a computationally efficient manner, we develop novel mechanisms for extending, subsuming, and dominating nodes.  The computed probabilities then serve as optimization objectives for the algorithm.}

On the theoretical side, \irisuu estimates with a certain \emph{confidence level}~(CL) the desired performance criteria (i.e., path length, coverage, and collision probabilities)  within some \emph{confidence interval}~(CI) depending on the number of the MC samples used~\cite{hazra2017using}. 
However, when selecting an execution path from multiple optional paths, straightforward statistical analysis may be biased toward false negatives and either requires offline prepossessing to provide guarantees or should be used as a guideline (see details in Appendix~\ref{subsec:Illustrative example for possible false negatives}). 
We choose the latter approach and outline a procedure to set a user-defined parameter for providing a CI, which becomes tighter as the number of samples increases.
Additionally, we highlight the potential benefits of using a bounded sub-optimal solution in certain situations to reduce computation time while still providing guarantees through the CI boundaries.

We demonstrate the effectiveness of \irisuu through a simulated case study of structural inspections of bridges using a UAV in an urban environment. Our results show that \irisuu is able to achieve a desired level of coverage, while also reducing collision probability and tightening the CI lower boundaries as the number of MC samples increases.

The rest of this paper is organized as follows. 
In Sec.~\ref{sec:related-work}, we review relevant related work.
In Sec.~\ref{Sec:ProblemDefinition}, we formulate the problem of offline path planning for inspection tasks under execution uncertainty. In Sec.~\ref{Sec:Algorithmic background}, we describe the algorithmic background and in Sec.~\ref{sec:method} we describe our proposed approach followed by a theoretical analysis presented in Sec.~\ref{sec:Theoretical guarantees}. 
Then in Sec.~\ref{Sec:DemoScenario} and~\ref{Sec:BridgeScenario} we present the results of our simulated experiments. Finally, in Sec.~\ref{Sec:Conclusion and future work}, we summarize our main contributions and discuss future work.

\section{Related Work}
\label{sec:related-work}
\subsection{Motion planning under uncertainty}
Various approaches have been proposed to address localization uncertainty for safe motion planning (which only considers a motion plan between states without inspection points).
For instance, minimum-distance collision-free paths can be computed while bounding the uncertainty by, e.g., a self-localization error ellipsoid, or uncertainty corridor~\cite{pepy2006safe, alami1994planning, candido2010minimum}. 
Another approach~\cite{delamer2019solving}, uses a mixed-observability Markov decision process approach to account for a-priori probabilistic sensor availability and path execution error propagation.
Englot et al.~\cite{englot2016sampling} suggest an RRT*-based method which minimizes the maximum uncertainty of any segment along a path. Sampling-based planners that explicitly reason about the evolution of the robot's belief have also been proposed~\cite{Wu.ea.22,zheng2023ibbt,DBLP:conf/icra/HoSL22,DBLP:journals/trob/PedramFT23}.

\subsection{Inspection planning}
Several approaches have been proposed for \emph{offline} inspection planning without accounting for uncertainty. Some algorithms decompose the region containing the POIs into sub-regions and solve for each sub-region separately~\cite{galceran2013survey}, while others divide the problem into two NP-hard problems:~(i)~solving the art gallery problem to compute a small set of viewpoints that collectively observe all the POIs, and~(ii)~solving the traveling salesman problem to compute the shortest path to visit this set of viewpoints~\cite{danner2000randomized, englot2010inspection, englot2012sampling, papachristos2019autonomous}. Another common approach is to simultaneously compute both inspection points (i.e., points from which POIs are inspected) and the trajectory to visit these inspection points using sampling-based techniques~\cite{papadopoulos2013asymptotically, bircher2017incremental}. However, these approaches do not consider the execution uncertainty, which can lead to missed POIs due to differences between the path executed and the path planned.

Alternative approaches consider the \emph{online} setting where a planner has to run online, deciding on a next step as the exploration or inspection advances~\cite{BircherKAOS18,PapachristosMKD19}.
Bircher et al.~\cite{BircherKAOS18} consider the setting where no uncertainty exists. By choosing different objective functions, their approach can be used for either the exploration of unknown environments or inspection of a given surface manifold in both a known and an unknown volume. 
Perhaps most closely related to our work is the  approach by Papachristos et al.~\cite{PapachristosMKD19} where uncertainty is considered during execution via a receding-horizon technique. Their work  starts by computing a path, then optimizes uncertainty, iterating to minimize localization and mapping uncertainty. However, as uncertainty optimization relies heavily on online updates (via tracked landmarks) of the robot's belief it cannot be used in an offline planning phase.


\section{Problem Definition}
\label{Sec:ProblemDefinition}
In this section, we provide a formal definition of the inspection-planning problem under execution uncertainty.
We start  by introducing basic definitions and notations, and discuss the uncertainity considerations (Sec.~\ref{subsec:notations}). Next, we formally describe the inspection-planning problem in the deterministic (uncertainity-free) regime (Sec.~\ref{subsec:inspection-planning}). 
Then, we  introduce the inspection-planning problem under execution uncertainty which will be the focus of this work (Sec.~\ref{subsec:Inspection-planning problem under execution uncertainty}).

\subsection{Basic definitions and notations}
\label{subsec:notations}
We have a holonomic robot~$\mathcal{R}$ operating in a workspace~$\mathcal{W} \subset~\mathbb{R}^3$ amidst a set of (known) obstacles~$ \mathcal{W}_{\rm obs} \subset \mathcal{W}$.\footnote{The assumption that the robot is holonomic is realistic as our motivating application is UAV inspection in which the UAV typically flies in low speed in order to accurately inspect the relevant region of interest. }
A \emph{configuration}~$q$ is a~$d$-dimensional vector uniquely describing the robot's pose (position and orientation) and let~$\mathcal{X}\subset\mathbb{R}^d$ denote the robot's \emph{configuration space}. \
Let~$\text{Shape}: \mathcal{X} \rightarrow 2 ^\mathcal{W}$ be a function mapping a configuration~$q$ to the workspace region occupied by~$\mathcal{R}$ when placed at~$q$ (here,~$2^\mathcal{W}$ is the power set of~$\mathcal{W}$).
We say that~$q \in \mathcal{X}$ is \emph{collision free} if~$\text{Shape}(q) \bigcap \mathcal{W}_{obs} = \emptyset$.

A path~$\pi = (q_0,q_1,\ldots) \subset \mathcal{X}$ is a sequence of configurations called  \emph{milestones} connected by straight-line edges~$\overline{q_iq_{i+1}}$.
A path~$\pi$ is \emph{collision free} if every configuration along the path (i.e., either one of the milestones or along the edges connecting milestones) is collision-free. We use the binary function~$\mathcal{C}(\pi)$ to express the \emph{collision state} of a path~$\pi$ where~$\mathcal{C}(\pi) = 0$ and~$\mathcal{C}(\pi)=1$ correspond to~$\pi$ being collision free and in-collision, respectively. Finally, we use~$\ell(\pi)$ to denote the length of a given path~$\pi$.

When a path $\pi$ is computed in an \emph{offline phase} to be later executed by the robot, we refer to it as a \emph{command path}.
Unfortunately, when following the command path, the system typically deviates from~$\pi$ due to different sources of uncertainty.
\check{In particular, we assume that the robot operates under two sources of uncertainty:~(i)~\emph{control uncertainty} and~(ii)~\emph{sensor uncertainty}. (A simple toy example demonstrating these concepts is detailed in Sec.~\ref{sec:method}.)}
The control uncertainty results from various sources, such as a mismatch between the robot model used during planning and the real robot model, and disturbances in the environment (e.g., wind gusts).
Sensor uncertainty (which is usually the main source of uncertainty in an urban environment) corresponds to navigation-model parameters that determine the robot's configuration which is not accurately known. Those parameters need to be modeled as random variables and may include biases of the inertial sensors or GNSS error terms~\cite{bookGroves}. \check{In order to counteract errors resulting from those two sources of uncertainty, a \emph{localization algorithm}, e.g., Kalman or particle filter, is invoked to compute the estimated robot location~\cite{thrun2005probabilistic}.}

To this end, we assume that we have access to distributions~$\Dloc$ from which the parameters governing the execution uncertainty~$\Nloc$ are drawn (see example in~Sec.~\ref{subsec:Running-example})\footnote{Having access to $\Dloc$ is a common assumption, see, e.g.,~\cite{gross2015robust,khaghani2016autonomous}.}. 
In addition, we assume to have access to the initial true location (i.e., there is no uncertainty in the initial configuration of any execution path regardless of the command path provided).
Finally, we assume that we have access to a black-box simulator, or motion model,~$\mathcal{M}$ that, given a command-path~$\pi$,
the uncertainty parameters~$\Nloc$ and the initial location, outputs the path that the robot will pursue starting from the initial location while following~$\pi$ under~$\Nloc$.  
Note that after~$\Nloc$ is drawn, the model is deterministic. \check{We mention that our approach is general and can be applied to any type of control and sensor uncertainty, as well as the accompanying estimation algorithm, so long that they can be faithfully simulated.}

\subsection{Inspection planning (without execution uncertainty)}
\label{subsec:inspection-planning}
In the inspection-planning problem, we receive as input a set of POI~$\mathcal{I} = \{\iota_1, \ldots, \iota_k \} \subset \mathcal{W}$ which should be inspected using some on-board inspection sensors (e.g., a camera). We model the inspection sensors as a mapping~$\mathcal{S}: \mathcal{X}\rightarrow 2^\mathcal{I}$ such that~$\mathcal{S}(q)$ denotes the subset of~$\mathcal{I}$ that are inspected from a configuration~$q\in \mathcal{X}$.
By a slight abuse of notation, we define~$\mathcal{S}(\pi):=\bigcup_{i} \mathcal{S}(q_i)$ to be the POI that can be inspected by traversing the path~$\pi=(q_0,q_1,\ldots)$. 
For simplicity, we only inspect POIs along milestones (rather than edges). We start with a simplified setting of the inspection problem which involves no uncertainty on the side of control or sensing. That is, a robot performing the inspection can precisely follow a path during execution time and its location is known exactly. Such a setting was  solved by~\iris~(see~Sec.~\ref{subsec:iris}). 

\vspace{3mm}
\begin{prob}[Deterministic problem]
\label{prob-1}
    In the \emph{inspection-planning problem} we wish to compute in an offline phase a path~$\pi$ that maximizes its coverage~$\vert {\mathcal{S}}(\pi) \vert$.
    Out of all such paths we wish to compute the paths whose length~${\ell}(\pi)$ is minimal.
\end{prob}
\vspace{3mm}

\textbf{Note.}~In practice, one may be interested in minimizing mission completion time or energy consumption (and not path length which is a first-order approximation for these metrics).
Optimizing for these metrics is slightly more complex and is left for future work.

Prob.~\ref{prob-1} can be defined for the 
\emph{continuous setting}~(i.e.,~when we consider all paths in~$\mathcal{X}$ between a given start configuration~$q_{\rm start}$ and goal configuration~$q_{\rm goal}$) or
for the \emph{discrete setting} where we restrict the set of available paths to those defined via a given \emph{roadmap}. Here, a roadmap~$\mathcal{G} = (\mathcal{V}, \mathcal{E})$ is a graph embedded in the configuration space~$\mathcal{X}$ such that each vertex is associated with a configuration and each edge with a local path connecting close-by configurations.
Roadmaps are commonly used in motion-planning algorithms (see, e.g.,~\cite{LaValle2006_Book,Salzman19})  and, as we will see, will be the focus of this work as well.

\subsection{Inspection-planning problem under execution uncertainty}
\label{subsec:Inspection-planning problem under execution uncertainty}

In the inspection-planning problem under execution uncertainty, we calculate a 
command-path~$\pi$ in an offline stage. To account for execution uncertainty, the following definitions extend the notion of path length, path coverage, and collision state to be their expected values: 
 \vspace{3mm}
\begin{dfn}
The \emph{expected} collision, coverage, and length probabilities are defined as:
\begin{subequations}
\begin{align}
\colProbBar
    &:= E_{\Nloc \sim \Dloc}
        \left[
            \mathcal{C}
                \left(
                    \mathcal{M}(\pi,\Nloc)
                \right)
              \right], \\
\vert \bar{\mathcal{S}}(\pi) \vert 
    &:=E_{\Nloc \sim \Dloc}
        \left[
            \vert
            \mathcal{S}
                \left(
                    \mathcal{M}(\pi,\Nloc)
                \right)
            \vert
        \right], \\
\bar{\ell}(\pi)
    &:= E_{\Nloc \sim \Dloc}
        \left[
            \ell
                \left(
                    \mathcal{M}(\pi,\Nloc)
                \right)
        \right],
        \end{align}
           \end{subequations}   
respectively. Here,~$\colProbBar \in \left[0,1 \right]$, 
where~$\colProbBar=0$ and~$\colProbBar=1$ correspond to~$\pi$ being guaranteed to be collision-free and in-collision, respectively.  
\end{dfn}

We are now ready to formally introduce the optimal inspection-planning problem under execution uncertainty.

\vspace{3mm}
\begin{prob}[Optimal problem]
\label{prob-2}
    In the \emph{optimal inspection-planning problem under execution uncertainty} 
    we are given a user-provided threshold~$\rho_\text{coll} \in [0, 1]$ and 
    we wish to compute in an offline phase a command-path~$\pi$ such that its expected execution collision probability is below~$\rho_\text{coll}$~(i.e.,~$\colProbBar\leq \rho_\text{coll}$), and which maximizes the expected coverage~$\vert \bar{\mathcal{S}}(\pi) \vert$. Of all such paths, we wish to choose the one whose expected length~$\bar{\ell}(\pi)$ is minimal.\footnote{\check{At first glance, one may be tempted to always provide a collision probability of zero. However, this comes at a computational cost---placing unnecessarily tight constraints may yield longer running times and, in extreme cases, problem infeasibility. See Sec.~\ref{sec:Theoretical guarantees} for more details.}} 
\end{prob}
\vspace{3mm}

Finally, we introduce a relaxation of the above problem to reduce its computational cost.
\vspace{3mm}
\begin{prob}[Sub-optimal problem]
\label{prob-3}
Let~$\pi^*$ be the solution to 
the optimal inspection-planning problem under uncertainty.
In addition, 
let~$\eps \geq 0$, and~$\kappa \in (0, 1]$ be user-provided approximation factors with respect to path length and coverage, respectively.
Then, in the \emph{sub-optimal inspection-planning problem under execution uncertainty} we wish to compute in an offline phase a command path~$\pi$ such that:
\begin{subequations}
\begin{align}
\colProbBar \leq & \rho_\text{coll},\\
\vert \bar{\mathcal{S}}(\pi) \vert \geq & \kappa \cdot \vert {\mathcal{S}}(\pi^*) \vert, \\
\bar{\ell}(\pi) \leq &  (1+\eps) \cdot {\ell}(\pi^*).
\end{align}
\end{subequations}
\end{prob}

Notice that, by setting~$\kappa=1$ and~$\eps=0$ the sub-optimal Prob.~\ref{prob-3} is equivalent to  Prob.~\ref{prob-2}. 
As we are only given a black-box model of~$\mathcal{M}$, it is infeasible to directly compute the expected values for coverage, collision probability, and length,~$\vert \bar{\mathcal{S}}(\pi) \vert$,~$\colProbBar$ and~$\bar{\ell}(\pi)$, respectively. 
As we will see, our approach will be to solve Prob.~\ref{prob-3} using estimates of these values.

\section{Algorithmic background}
\label{Sec:Algorithmic background}

In this section, we provide algorithmic background. We begin by describing  \iris~\cite{DBLP:journals/ijrr/FuKSA23}, a state-of-the-art algorithm for solving the continuous inspection-planning problem in the deterministic regime (Prob.~\ref{prob-1}). 
We then continue to outline the statistical methods we will use.
Throughout the text we assume familiarity with the A* algorithm~\cite{Hart1968_TSSC}.

\subsection{Incremental Random Inspection-roadmap Search (\iris)}
\label{subsec:iris}
\iris solves Prob.~\ref{prob-1} by incrementally constructing a sequence of increasingly dense graphs, or roadmaps, embedded in~$\X$ and computes an inspection plan over the roadmaps as they are constructed.
The roadmap~$\G = (\V,E)$ is a Rapidly-exploring Random Graph (RRG)~\cite{Karaman2011_IJRR} rooted at the start configuration (though other types of graphs, such as PRM*, can be used as well). 
For simplicity, when describing \iris below (and \irisuu later on), we focus on the behavior of the algorithm for a given roadmap. More information on how to construct such roadmaps can be found in~\cite{DBLP:journals/ijrr/FuKSA23}.

Let $\R_\G := \{\iota \in \R \vert \exists v \in \V \text{ s.t. }  \iota \in \S(v) \}$ be the set of all inspection points that can be inspected from some roadmap vertex.
To compute an inspection plan, \iris considers the \emph{inspection graph}~$\G_\S = (\V_\S, \E_\S)$  induced by the roadmap~$\G$.
Here, vertices are pairs comprised of a vertex~$u \in \V$ in the roadmap $\G$ and subsets of $\R_\G$. 
Namely, 
$\V_\S = \V \times 2^{\R_\G}$,
and note that~$\vert \V_\S \vert =  O\left(\vert \V \vert \cdot 2^{|\R_\G|} \right)$.
An edge~$e\in \E_\S$ between vertices~$(u, \R_u)$ and $(v, \R_v)$ exists if~$(u,v) \in \E$ and~$\R_u \cup \S(v) = \R_v$. The cost of such an edge  is simply the the length of the edge $(u,v)\in \E$, namely~$\ell(u,v)$.
The graph~$\G_\S$ has the property that a shortest path in the inspection graph corresponds to an optimal inspection path~$\pi^*$ over $\G$. 
However, the size of~$\G_\S$ is exponential in the number of POIs $\vert \I_\G  \vert$.
Thus, to reduce the runtime complexity \iris uses a search algorithm that approximates~$\pi^*$, which allows to prune the search space of paths over $\G_\S$.

{Specifically, the  approach for pruning the search space used by \iris is done through the notion of \emph{approximate dominance}, which allows to only consider paths that can significantly improve the quality (either in terms of length or the set of points inspected) of a given path. In particular, let~$\pi, \pi'$  be two paths in $\G$ that start and end at the same vertices and let $\varepsilon \geq 0$ and $\kappa \in (0,1]$ be some approximation parameters. We say that $\pi$ $(\eps,\kappa)$-dominates $\pi'$ if $\ell(\pi)\leq (1+\eps)\cdot \ell(\pi')$ and $|\S(\pi)|\geq \kappa\cdot |\S(\pi)\cup \S(\pi')|$. If $\pi$ indeed $(\eps,\kappa)$-dominates $\pi'$ then $\pi'$ can potentially be pruned. However, if we prune away approximate-dominated paths, we need to efficiently account for all paths that were pruned away in order to bound the quality of the solution obtained. This is done through the notion of \emph{potentially-achievable paths} described below.}

The search algorithm used by \iris employs an A*-like search over~$\G_\S$, where each node in the search tree is associated with a \emph{path pair} (\pp) corresponding to a vertex $v$ in $\G_\S$ (rather than only a path as in \astar). Here, a \pp is a tuple~$(\pi, \tilde{\pi})$, where~$\pi$ and~$\tilde{\pi}$ are the so-called \emph{achievable path} (\ap) and \emph{potentially achievable path} (\pap), respectively. The \ap~$\pi$ represents a realizable path in~$\G_\S$, from the start vertex to $v$, and is associated with two scalars corresponding to the path's length and coverage, respectively. The \pap~$\tilde{\pi}$ is a pair of scalars~$\tilde{\ell}, \tilde{S}$ representing length and coverage, respectively, which are used to bound the quality of any achievable paths to $v$ represented by a specific \pp. 
{Note that $\tilde{\pi}$ does not imply necessarily that there exists any path $\pi'$ from from the start to $v$ such that $\ell(\pi') = \ell(\tilde{\pi})$ and $\S(\pi) = \S(\tilde{\pi})$. It merely states that such a path \emph{could} exist.}

A \pp is said to be~$(\eps,\kappa)$-bounded if (i) the length of the \ap is no more than~$(1+\eps)$ times the length of the \pap and (ii)~the coverage of the \ap is at least~$\kappa$ percent of the coverage of the \pap. 

The search algorithm starts with a path pair rooted at the start vertex~$v_{\rm start}$ where both the \ap and the \pap  represent the trivial paths that only contain~$v_{\rm start}$ (i.e., the scalars associated with the length of the \ap and the \pap are zero and the scalars associated with the coverage of the \ap and the \pap is $\S(v_{\rm start})$). It operates in a manner similar to A*, with an OPEN list and a CLOSED set to track nodes that have not and have been considered, respectively.
Each iteration begins with popping a node from the OPEN list and checking if an inspection path has been found.
If this is not the case, the popped node from the OPEN list is inserted into the CLOSED set and the iteration continues.

The next step is \emph{extending} this node and testing whether its successors are \emph{dominated} by an existing node. If this is the case, the node is discarded. Otherwise, \iris tests whether the node can be \emph{subsumed} by or subsume another node.

These three core operations (extending, dominating, and subsuming) are key to the efficiency of \iris.
When extending a node, a path pair~$\pp_u = (\pi_u, \tilde{\pi}_u)$ to some vertex~$u$ is extended by an edge~$e = (u,v)\in \E_\S$ to create the path pair~$\pp_v = (\pi_v, \tilde{\pi}_v)$.
The length and coverage of the \ap of~$\pi_v$ are ~$\ell(\pi_u) + \ell(e)$ and~$\S(\pi_u) \cup \S(v)$, respectively. 
Similarly, the length and coverage of the \pap~$\tilde{\pi}_v$ are~$\ell(\tilde{\pi}_u) + \ell(e)$ and~$\S(\tilde{\pi}_u) \cup \S(\tilde{\pi}_v)$, respectively.
When testing  domination, two path pairs~$\pp_{u,1}$ and~$\pp_{u,2}$ to some vertex~$u\in \V_\S$ are considered.
$\pp_{u,1}$ is said to dominate~$\pp_{u,2}$ if both~$\ell(\pi_{u,1}) \leq \ell(\pi_{u,2})$ and~$\S(\pi_{u,2}) \subseteq \S(\pi_{u,1})$. In such a case,~$\pp_{u,1}$ is preferable to $\pp_{u,2}$ and so~$\pp_{u,2}$ can be discarded.
However, in many settings two path pairs will not dominate each other but their respective path lengths and coverage will be similar. To avoid maintaining and extending such similar path pairs, \iris uses the subsuming operation.
This operation is denoted by~$\pp_{u,1} \oplus \pp_{u,2}$, which creates a new path pair~$\pp_{u,3}$   whose \ap's length and coverage are identical to those of~$\pp_{u,1}$.
The \pap's length of~$\pp_{u,3}$ is the minimum \pap's length of~$\pp_1,\pp_2$, and the \pap's coverage is the union of the coverage of the \paps of~$\pp_1$ and $\pp_2$. 
Subsuming is only performed as long as the resultant \pp is~$(\eps,\kappa)$-bounded which allows to guarantee bounds on the solution quality.
For additional details, see~\cite{DBLP:journals/ijrr/FuKSA23}.

The algorithm's asymptotic convergence to an optimal solution is achieved through a process of iterative roadmap densification and parameter tightening. This iterative approach involves systematically refining the roadmap~$\G$ (i.e., adding vertices and edges) while progressively reducing the parameters~$\eps$ and~$\kappa$. 
%
Roadmap densification ensures that the algorithm considers larger sets of configurations, leading to more accurate and refined solutions. 
Meanwhile, the tightening of parameters focuses the algorithm on increasingly promising paths within the roadmap.

\subsection{Monte-Carlo methods \& confidence intervals}
\label{subsec:Monte-Carlo methods statistical tests}

Monte Carlo methods are a family of statistical techniques used to simulate and analyze complex systems that involve randomness. These methods involve generating multiple random samples and using them to estimate the value of a process being studied. Due to the finite number of samples, there is uncertainty regarding the true value of the process. To quantify this uncertainty, a common approach is CI and CL~\cite{hazra2017using,neyman1937outline}. 

A CI is a range of values likely to contain the true value of a population's parameter (such as its mean) with a certain confidence levels. For example, in our setting this could be the expected path length. The size of the CI reflects the uncertainty around the estimated value and is influenced by the number of samples used. As the number of samples increases, the accuracy of the estimate improves (i.e., CL increases) and the CI decreases. See additional background on the statistical tools we use in Appendix~\ref{app:stat}.


\section{Method}
\label{sec:method}
In this section, we present our method called \emph{IRIS under uncertainty}, or~\irisuu, to solve Prob.~\ref{prob-3}.
This is done by extending the algorithmic framework of \iris to consider execution uncertainty within the inspection-planning algorithm.
We start  with a general description of our algorithmic approach (Sec.~\ref{subsec:alg-approach}), and then 
describe how the operations used in \iris are modified to account for localization uncertainty (Sec.~\ref{subsec:Modifying primitive search operations}).
This is followed by a toy scenario that is used to demonstrate the key newly-introduced definitions and operations (Sec~\ref{subsec:Running-example})
We conclude  by describing  how those updated operations are used by \irisuu to compute an inspection path (Sec.~\ref{subsec:Algorithm flow}).

\subsection{\irisuu---Algorithmic approach}
\label{subsec:alg-approach}

A na\"ive approach to address execution uncertainty is to penalize paths with high localization uncertainty.
We describe one such approach as a baseline in Sec.~\ref{subsec:The compared methods}.
As we will see, while highly efficient in collision avoidance, even minor deviations from the command path due to execution uncertainty can lead to discrepancies between the intended POI coverage of the command path and the actual path taken during execution, particularly when obstacles are present. Thus, instead of reasoning about localization uncertainty, in \irisuu, we directly consider and maximize POI coverage.

Unfortunately, we cannot directly compute the expected values for coverage, collision probability, and length,~$\vert \bar{\mathcal{S}}(\pi) \vert$,~$\colProbBar$ and~$\bar{\ell}(\pi)$, respectively.
Thus, we tackle Prob.~\ref{prob-3} using the estimated values~$\estS, \colProbHat$ and~$\estL$ instead of the expected values~$\vert \bar{\mathcal{S}}(\pi) \vert,\colProbBar$ and~$\bar{\ell}(\pi)$ such that the command path~$\pi$, satisfies:
\begin{subequations}
\begin{align}\label{eq:estimate_prob_3}
\colProbHat \leq & \rho_\text{coll},\\
\vert \bar{\mathcal{S}}(\pi) \vert \geq & \kappa \cdot \vert \hat{\mathcal{S}}(\pi^*) \vert, \\
\bar{\ell}(\pi) \leq & (1+\varepsilon) \cdot \hat{\ell}(\pi^*).
\end{align}
\end{subequations}
Here, 
the estimated coverage~$\estS$, 
the estimated collision probability~$\colProbHat$ 
and the estimated  path length~$\estL$ are computed by simulating $m\geq 1$
different executions (with respect to uncertainty) of $\pi$ (see details below).

Similar to \iris, \irisuu solves the inspection-planning problem   by sampling an initial roadmap.
It then iteratively 
(i)~plans a command path on this roadmap and 
(ii)~densifies the roadmap and refines the algorithm's parameters.
Importantly, the focus of this work is on the path-planning part of the algorithmic framework wherein a command path is computed for a given roadmap.
This is visualized in Fig.~\ref{fig:AlgorithmicFlow} where the path-planning part is highlighted.
For completeness, we reiterate that graph refinement is done by continuing to grow the RRG as described in Sec.~\ref{Sec:Algorithmic background}.

%


\begin{figure}[tb]
    \centering
     \includegraphics[width=0.5\textwidth]{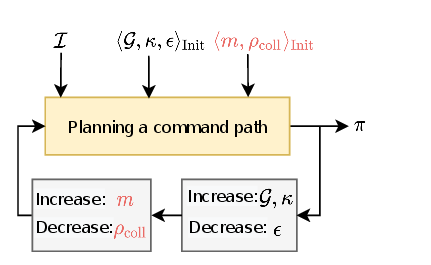}
     \caption{Planning a command path~$\pi$ to inspect set of POI~$\mathcal{I}$ given the input of \iris---~$\G$,~$\kappa$ and~$\varepsilon$, and the additional parameters of \irisuu---~$m$ the~$\rho_{\text{coll}}$.
    }
\label{fig:AlgorithmicFlow}
\end{figure}

Specifically, we start by initializing  our algorithm by sampling~$m$ different parameters~$\Nloc^1, \ldots, \Nloc^m$ from~$\Dloc$.
%
For any command path~$\pi$ considered by the algorithm, we will estimate~$\expL$ and~$\expS$ by simulating~$m$ execution paths~$\pi^e_1, \ldots, \pi^e_m$ using the~$m$ motion models. Namely, 
\begin{equation}\label{eq:motion_model_paths}
    \pi^e_i:=\mathcal{M}(\pi, \Nloc^i).
\end{equation}
%

To estimate the expected coverage, let~$\xi_{i,j} \in \{0,1\}$ be a variable that will be set to one if path~$\pi^e_i$ covers the~$j$'th POI.
Namely,
\begin{equation}\label{eq:calculatePOI-vertex} 
\xi_{i,j}:=
    \begin{cases}
     1 & \iota_j \in \mathcal{S}(\pi^e_i), \\
     0 & \text{else}.
    \end{cases}
\end{equation}
Then, we define for the command path~$\pi$ the \emph{inspection probability vector}~(IPV):
\begin{equation}
\label{eq:IPV}
    \textup{IPV}(\pi) := \{ \hat{p}_1^\pi, \ldots, \hat{p}_k^\pi \}.
\end{equation}
where:
\begin{equation}
    \hat{p}_j^\pi: = \frac{1}{m} \sum_{i=1}^{i=m} \xi_{i,j},
\end{equation}
is the estimated probability that the~$j$'th POI is viewed when executing the command path~$\pi$.
Finally, the estimated expected coverage is defined as:
\begin{equation}\label{eq:estimated expected coverage}
    \vert \estS \vert 
    := 
    \sum_{j=1}^{j=k} \hat{p}^\pi_{j}.
\end{equation}

\begin{assumption}
\label{ass:iid}
Here, we assume that the probability~$\hat{p}^\pi_{j}$ of inspecting each POI $j$ is independent of other POIs. 
\end{assumption}
As we will see, 
Assumption~\ref{ass:iid} will both
(i)~simplify the analysis and 
(ii)~will not hinder the guarantees obtained from the analysis in practice.
Relaxing the assumption is left for future work.

To estimate the collision probability~$\colProbHat$ of a path~$\pi$,~\irisuu maintains for the command path~$\pi$ a \emph{collision vector}~(CV):
\begin{equation}
\label{eq:CV}
    \rm{CV}(\pi) := \{ \zeta_1^\pi,\zeta_2^\pi, \ldots  \}.
\end{equation}
Here,~$\zeta_j^\pi$ indicates  whether the path~$\pi_j^e$ was found to be in a collision.
Namely:
\begin{equation}
\zeta_{j}^\pi:=
    \begin{cases}
     1 & \pi_j^e~ \text{is in collision}, \\
     0 & \text{else}.
    \end{cases}
\end{equation}

Subsequently, the estimated collision probability~$\colProbHat$ of a path~$\pi$ is defined as:
\begin{equation}\label{eq:definition_of_p_coll}
    \colProbHat: = \frac{1}{m} \sum_{j=1}^{j=m} \zeta_j^\pi.
\end{equation}

Similarly, the estimated expected path length~$\expL$ is defined as:
\begin{equation}\label{eq:estimated expected length}
\estL:=
    \frac{1}{m} \sum_{j=1}^{j=m} \ell~(\pi^e_j).
\end{equation}

\subsection{\irisuu---Modified search operations}
\label{subsec:Modifying primitive search operations}
Recall that, while \iris maintains the POIs inspected using a  set representation, \irisuu maintains an inspection probability vector (IPV) for each path.
In addition, \irisuu maintains estimations of the expected path length and expected  collision probability of its command path.   
This requires to modify  node operations used by \iris's search algorithm to account for the uncertainty values. 
Next we will explain how to modify the operations used in \irisuu's \astar-like search that were originally used in \iris. 
We start by formally defininig nodes in Sec.~\ref{subsec:Node definition} and detail the node extension, collision, and domination operations in Sec.~\ref{subsec:Node extension},~\ref{subsec:Node collision} and~\ref{subsec:Node domination}, respectively. Finally, we describe the node subsuming operation in Sec.~\ref{subsec:Node subsuming} and termination criteria in Sec.~\ref{subsec:Termination criteria}.

\vspace{2mm}
\subsubsection{Node definition}
\label{subsec:Node definition}
A node~$n$ in our search algorithm is a tuple:
\begin{equation}
n = 
\langle
    u, \pi_u, \Pi^e_u, \hat{\rm{IPV}}_u, \hat{\ell}_u,
    \tilde{\rm{IPV}}_u,\tilde{\ell}_u,\hat{\mathcal{C}}_u
\rangle,
\end{equation}
where $u\in \V$ is a roadmap vertex,
$\pi_u$ is a path from the start vertex~$v_{\rm start}\in \G$ to~$u$,
$\Pi_u^e = \{ \pi_{1,u}^e, \ldots, \pi_{m,u}^e\}$ are~$m$ simulated execution paths calculated using Eq.~\eqref{eq:motion_model_paths}, 
$\hat{\rm{IPV}}_u = \{ \hat{p}_1^{\pi_u}, \ldots, \hat{p}_k^{\pi_u} \}$ is the estimated IPV (see Eq.~\eqref{eq:IPV}),
and~$\hat{\ell}_u$ is the estimated path length (see Eq.~\eqref{eq:estimated expected length}) with respect to the command path $\pi_u$.
In addition,~$\tilde{\textup{IPV}}_u = \{ \hat{p}_1^{\tilde{\pi}_u}, \ldots, \hat{p}_k^{\tilde{\pi}_u} \}$ and~$\tilde{\ell}_u$ are the estimated IPV and estimated length of the \pap.
Finally,~$\hat{\mathcal{C}}_u$ is the estimated collision probability of~$\pi_u$~(see Eq.~\eqref{eq:definition_of_p_coll}).

We define the \emph{initial node} as:
\begin{equation}
n_\text{init}:=
    \langle
    v_{\rm start}, \{ v_{\rm start} \}, \Pi^e_{v_{\rm start}}, 
    \text{IPV}_{v_{\rm start}}, 0,
    \text{IPV}_{v_{\rm start}},0,0
\rangle,
\end{equation}
where the command path~$\pi$ of~$n_\text{init}$ consists of the trivial path starting and ending at~$v_{\rm start}$ whose estimated length and collision are initialized to zero (we assume that~$v_{\rm start}$ corresponds to a collision-free configuration).
In addition, the~$m$ MC starting configurations are set to be~$v_{\rm start}$ (namely,~$\Pi^e_{v_{\rm start}} = \{ v_{\rm start}, \ldots, v_{\rm start}\}$)
which induce the initial inspection probability vector~$\text{IPV}_{v_{\rm start}} = \{ \S(v_{\rm start}), \ldots, \S(v_{\rm start}) \}$. 
Finally, analogously to \iris, the IPV and estimated length of the \pap are initialized to be the same as the \ap.

\vspace{2mm}
\subsubsection{Node extension}
\label{subsec:Node extension}
Let~$u,v \in \mathcal{V}$ be two roadmap vertices such that~$(u,v) \in \mathcal{E}$.
and let~$n_u= 
\langle
    u, \pi_u, \Pi^e_u, \hat{\rm{IPV}}_u, \hat{\ell}_u, \tilde{\rm{IPV}}_u,\tilde{\ell}_u,\hat{\mathcal{C}}_u
\rangle$ be a node in the search algorithm associated with vertex~$u$.
We define the operation of \emph{extending}~$n_u$ by the edge~$(u,v)$ as creating a new node 
$n_v= 
\langle
    v, \pi_v, \Pi^e_v, \hat{\rm{IPV}}_v, \hat{\ell}_v, \tilde{\rm{IPV}}_v,\tilde{\ell}_v,\hat{\mathcal{C}}_v
\rangle$
such that:
\begin{itemize}
    \item $\pi_v$ is the result of concatenating~$\pi_u$ with the path~$\pi_{u\rightarrow v}$ from~$u$ to~$v$.
    Namely,
       \begin{equation}
                  \pi_v:= \pi_u \circ \pi_{u\rightarrow v}.
        \end{equation}

    \item $\Pi^e_v := \{\pi_{1,v}^e \ldots \pi_{m,v}^e \}$ is a set of execution paths such that~$\pi^e_{j,v}:=\mathcal{M}(\pi_v, \Nloc^j)$. Notice that this can be efficiently computed by denoting: 
  \begin{equation}
        \pi^e_{j,u\rightarrow v}:=\mathcal{M}(\pi_{u\rightarrow v}, \Nloc^j, \pi^e_{j,u}),
    \end{equation}
    and setting: 
    \begin{equation}
        \pi^e_{j,v}:= \pi^e_{j,u} \circ \pi^e_{j,u\rightarrow v}.
    \end{equation}
    \item $\hat{\rm{IPV}}_v: = \{ \hat{p}_1^{\pi_v}, \ldots, \hat{p}_k^{\pi_v} \}$ is obtained by leveraging the assumptions, that (i)~POI inspections are conducted at vertices (see Sec.~\ref{subsec:notations}) and that~(ii)~each inspection is independent~(see Assumption.~\ref{ass:iid}).
   \begin{equation}\label{eq:extend_IPV_ap}
    \forall j\in [1,k]~\hat{p}_j^{\pi_v} := 
     1-(1-\hat{p}^{\pi_{u \rightarrow v}}_{j}) \cdot~(1-\hat{p}_j^{\pi_{u}}).     
   \end{equation}
   
    \item $\hat{\ell}_v$ is the estimated path length of~$\pi_v$. Notice that this can be efficiently computed by setting:
  \begin{equation}
              \hat{\ell}_v: = \hat{\ell}_u + \hat{\ell}(\pi_{u\rightarrow v}).
    \end{equation}
    
    \item The estimated IPV of the \pap is updated such that: 
  \begin{equation}
            \forall j\in [1,k]~\tilde{p}_j^{\pi_v} := 1-(1-\hat{p}^{\pi_{u \rightarrow v}}_{j}) \cdot~(1-\hat{p}_j^{\tilde{\pi}_{u}}).
    \end{equation}
      and the estimated path length of the \pap is:
  \begin{equation}
            \tilde{\ell}_v: = \tilde{\ell}_u + \hat{\ell}(\pi_{u\rightarrow v}).
    \end{equation}

    \item $\hat{\mathcal{C}}_v$ is the collision probability of node~$v$, such that:
  \begin{equation}
          \hat{\mathcal{C}}_v :=
    1-\left(1-\hat{\mathcal{C}}_u \right) \cdot \left(1-\hat{\mathcal{C}}_{u \rightarrow v} \right).
    \end{equation}

\end{itemize}

\vspace{2mm}
\subsubsection{Node collision}
\label{subsec:Node collision}
Recall that the collision probability of a node estimates the probability that the command path associated with~$n$ will intersect an obstacle.
Now, let~$n_u= 
\langle
    u, \pi_u, \Pi^e_u, \hat{\rm{IPV}}_u, \hat{\ell}_u, \tilde{\rm{IPV}}_u,\tilde{\ell}_u,\hat{\mathcal{C}}_u
\rangle$ be a node in the search algorithm associated with vertex~$u$.
Then, given a user-defined threshold~$\rho_\text{coll} \in [0,1]$ a node~$n$ will be considered \emph{in collision}~(and hence pruned by the search) if its collision probability~$\hat{\mathcal{C}}_u$ satisfies $\hat{\mathcal{C}}_u \geq \rho_\text{coll}$.

\vspace{2mm}
\subsubsection{Node domination}
\label{subsec:Node domination}
As in many \astar-like algorithms, node domination is used to prune away nodes that cannot improve the solution compared to other nodes expanded by the algorithm.
We introduce a similar notion that accounts for both converge~(via IPVs) and path length.

Specifically, let
$n_1$ and~$n_2$ be two nodes that start at $v_{\text{start}}\in \V$ and end at the same vertex~$u\in \V$.
Here, we assume that for each~$i \in \{1,2\}$,
\begin{equation}
    n_i= 
\langle
    u, \pi_{u,i}, \Pi^e_{u,i}, \hat{\rm{IPV}}_{u,i}, \hat{\ell}_{u,i}, \tilde{\rm{IPV}}_{u,i},\tilde{\ell}_{u,i},\hat{\mathcal{C}}_u
\rangle.
\end{equation}
\check{Then, we say that~$n_1$ \emph{dominates}~$n_2$ if $n_1$'s PAP is strictly better than $n_2$'s. Namely, if}
\begin{equation}
    \forall j \in [1 \ldots k],~
    \hat{p}_j^{\tilde{\pi}_1} \geq \hat{p}_j^{\tilde{\pi}_2}  
\quad
\text{and} 
\quad
    \tilde{\ell}(\pi_1) \leq \tilde{\pi}(\pi_2). 
    \end{equation}

\vspace{2mm}
\subsubsection{\ek-bounded nodes \& node subsuming  }
\label{subsec:Node subsuming}
Similar to \iris, we need to ensure that the \pap bounds the \ap given the user-provided parameters~$\eps >0$ and~$\kappa \in [0,1]$.
Specifically, let 
$
    n_u = 
        \langle
            u, \pi_u, \Pi^e_u, \hat{\rm{IPV}}_u, \hat{\ell}_u,
            \tilde{\rm{IPV}}_u,\tilde{\ell}_u,\hat{\mathcal{C}}_u
        \rangle
$
be a node such that 
$\hat{\rm{IPV}}(\pi_{u}) := \{ \hat{p}_1^{\pi_{u}}, \ldots, \hat{p}_k^{\pi_{u}} \}$
and
$\tilde{\rm{IPV}}(\pi_{u}) := \{ \tilde{p}_1^{\pi_{u}}, \ldots, \tilde{p}_k^{\pi_{u}} \}$
are the IPVs of~$n$'s \ap and \pap, respectively.
Similarly, let~$\hat{\ell}_u$ and~$\tilde{\ell}_u$ be the estimated lengths of~$n$'s \ap and \pap, respectively.
We say that~$n$ is \emph{\ek-bounded} if:
\begin{equation}\label{eq:ek-bounded}
    \sum_{j=1}^{j=k}  \hat{p}_j^{\pi_u} 
    \geq 
\kappa  \cdot
\sum_{j=1}^{j=k}  \tilde{p}_j^{{\pi}_u} 
\quad
    \text{and} 
\quad
\hat{\ell}_u \leq~(1+\varepsilon) \cdot \tilde{\ell}_u.
\end{equation}

Similar to \iris, \ek-bounded nodes  will be used together with node subsuming to reduce the number of paths considered by the search while retaining bounds on path quality.
Specifically we define node subsuming  as follows:
Let~$n_1$ and~$n_2$ be two nodes both starting at the same vertex and ending at the same vertex~$u$ 
such that:
\begin{equation}
    n_i= 
\langle
    u, \pi_{u,i}, \Pi^e_{u,i}, \hat{\rm{IPV}}_{u,i}, \hat{\ell}_{u,i}, \tilde{\rm{IPV}}_{u,i},\tilde{\ell}_{u,i},\hat{\mathcal{C}}_{u,i}
\rangle.
\end{equation}
Then, the operation of~$n_1$ \emph{subsuming}~$n_2$ 
(denoted as~$n_1 \oplus n_2$) 
will create the new node (that will be used to replace $n_1$ and prune $n_2$): 
\begin{equation}
    \begin{split}    
 n_1 \oplus n_2: &= \\ 
 n_3 &=  
        \langle
            u, \pi_{u,3}, \Pi^e_{u,3}, \hat{\rm{IPV}}_{u,3}, \hat{\ell}_{u,3},
            \tilde{\rm{IPV}}_{u,3},\tilde{\ell}_{u,3},\hat{\mathcal{C}}_{u,3}
        \rangle.
    \end{split}
\end{equation}

Here, the components  of~$n_3$'s \ap are identical to~$n_1$'s. 
Namely,~$n_3$'s components associated with the \ap are defined as follows:
    \begin{align*}
        \pi_{u,3} &= \pi_{u,1},\\  
        \Pi^e_{u,3} &= \Pi^e_{u,1},\\
        \hat{\rm{IPV}}_{u,3} &= \hat{\rm{IPV}}_{u,1},\\
        \hat{\ell}_{u,3}&= \hat{\ell}_{u,1},\\
        \hat{\mathcal{C}}_{u,3} &= \hat{\mathcal{C}}_{u,1},
    \end{align*}
and~$n_3$'s components associated with the \pap are defined as follows:
\begin{equation*}
    \begin{split}
      \tilde{\ell}_{u,3} &:= \min{(\tilde{\ell}_{u,1},\tilde{\ell}_{u,2})},\\
      \tilde{p}_j^{\pi_{u,3}} &:= \max{(\tilde{p}_j^{\pi_{u,1}},\tilde{p}_j^{\pi_{u,2}})}.\\
      ~
    \end{split}
\end{equation*}

\subsubsection{Termination criteria}
\label{subsec:Termination criteria}
To terminate the search in \irisuu given a node~$n_v$ we check whether~$n_v$ satisfies:
\begin{equation}
\label{eq:goal_check}
    \sum_{j=1}^{j=k} \hat{p}_j^{\hat{\pi}_u}  \geq  k \cdot \kappa.
\end{equation}
Here~$\{ \hat{p}_1^{\tilde{\pi}_v}, \ldots, \hat{p}_k^{\tilde{\pi}_v} \}$ is the IPV of~$n$'s \ap and~$k$ is the number of POIs.



\vspace{2mm}
\subsection{Modified search operation---illustrative example}
\label{subsec:Running-example}
\begin{figure}[tb]
    \centering
     \includegraphics[width=0.5\textwidth]{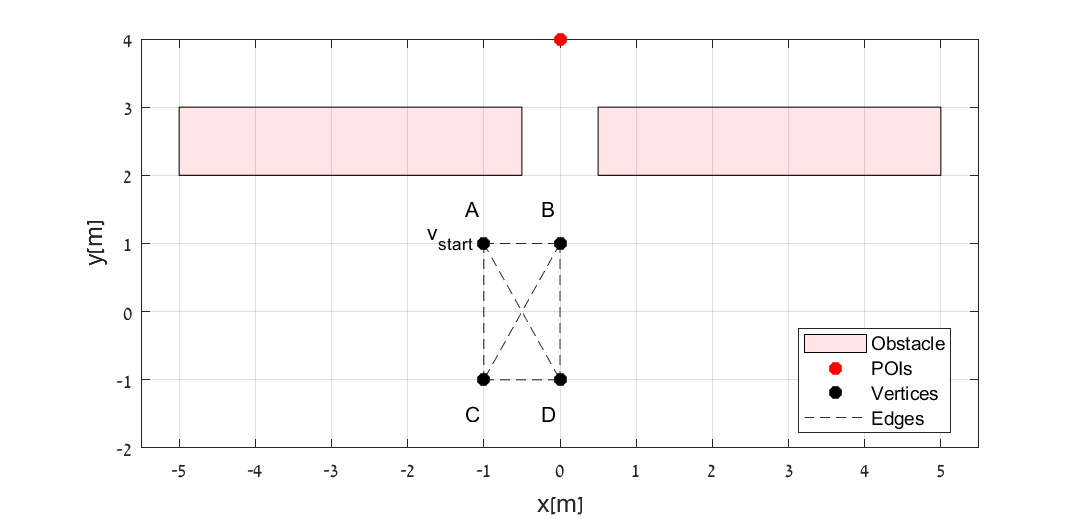}
     \caption{Toy scenario for running example with one POI (red dot), two obstacles (red rectangles), and four vertices (black dots) connected with six edges (black dashed lines).
    }
\label{fig:RunningExample}
\end{figure}

Consider the toy problem illustrated in Fig.~\ref{fig:RunningExample}. 
The roadmap~$\G$ contains four vertices~$A$,~$B$,~$C$, and~$D$ which represent configurations of a point robot (namely, each configuration defines the~$(x,y)$ location of the point robot).
Here, the single POI can be seen from any configuration as long as the straight-line connecting them does not intersect an obstacle.

We use a simple toy motion model~$\Mexample$ to define the execution localization uncertainty in which we assume that the position of every configuration along the command path is normally distributed around the position of the corresponding configuration.
Specifically, parameters $\Nloc:=(r,\theta)$ are drawn from the distribution $\Dloc$ such that~$r \sim |\mathcal{N}(0,1)|$, and~$\theta \sim \mathcal{N}(0,2\pi)$.
Now, given a command path~$\pi^c$ with $n>1$ configurations~$
\pi^c = \{ \langle x_1^c,y_1^c \rangle, \ldots \langle x_n^c,y_n^c \rangle \}
$,
the corresponding executed path (which is also a random variable) is~$\pi_e = 
\{ \langle x_1^e,y_1^e \rangle, \ldots \langle x_n^e,y_n^e \rangle \}
$
such that~$x_i^e = x_i^c + r \cdot \cos \theta$ and~$y_i^e = y_i^c + r \cdot \sin \theta$ for $i>1$. 

Note that 
(i)~as we assume that there is no uncertainty in the initial location, $\langle x_1^c,y_1^c \rangle = \langle x_1^e,y_1^e \rangle$
and that
(ii)~in contrast to the general setting, here uncertainty is only a function of the current configuration and \emph{not} of the entire command path. 

Finally, in the running example, we assume that the algorithm uses three MC planning samples~$(m = 3)$ and the  start configuration is located  at vertex~$A$.

\vspace{2mm}
\subsubsection{Node definition---example}
For the running example in our toy problem,  the first location of each MC planning sample is  at~$v_{\rm start}$ and the POI cannot be inspected from that location. 
Specifically,~$n_\text{init}$ is defined such that:
\begin{align*}
    v_{\rm start} =& A,\\
    \Pi^e_{v_{\rm start}} =& \{ \langle -1,1 \rangle; \langle -1,1 \rangle;\langle -1,1 \rangle \},\\ 
    \text{IPV}_{v_{\rm start}} =& \{ \hat{p}_1^{\pi_{v_{\rm start}}}\} = \{ 0\}.
\end{align*}

\vspace{2mm}
\subsubsection{Node extension---example}
Extending the node~$n_\text{init}$  by  edge~$e(A,B)$ will result in a new node:
$$
n_B = \langle
B, \pi_B, \Pi^e_B, \hat{\rm{IPV}}_B, \hat{\ell}_B, \tilde{\rm{IPV}}_B,\tilde{\ell}_B,\hat{\mathcal{C}}_B
\rangle,$$
where~$\pi_B = \{A,B\}$ and~$\Pi^e_B$ is calculated using~$\Mexample$ to be: 
\begin{align*}
\Pi^e_B = \{ & \langle -1,1 \rangle,\langle 0.6,2.1 \rangle;\\
&\langle -1,1 \rangle, \langle 0.5,1.2 \rangle;\\
&\langle -1,1 \rangle, \langle -0.2,1.6 \rangle \}.
\end{align*}

As a result, since POI~$1$ can only be seen from~$\Pi^e_{B,2}$ and~$\Pi^e_{B,3}$, then~$\hat{p}^{\pi_{A \rightarrow B}}_{1} \cong 0.67$.
Using these values, we can calculate~$\hat{p}_1^{\pi_B}$ and~$\hat{p}_1^{\tilde{\pi}_B}$ as:
\begin{align*}
   \hat{p}_1^{\pi_B} =& 1 - (1 - \hat{p}^{\pi_{A \rightarrow B}}_{1}) \cdot (1 - \hat{p}_1^{\pi_{A}}) =\\
   =& 1 - (1-0.67) \cdot (1-0) = 0.67,\\
   \hat{p}_1^{\tilde{\pi}_B} =& \max(\hat{p}^{\pi_{A \rightarrow B}}_{1},\hat{p}_1^{\tilde{\pi}_{A}}) =0.67.
\end{align*}
Thus,~$\hat{\rm{IPV}}_B = \tilde{\rm{IPV}}_B = 0.67$.
As~$\hat{\ell}(\pi_{u\rightarrow v}) \cong 1.48$ (using~$\Pi^e_B$), we have that~$\hat{\ell}_B = \tilde{\ell}_B = 0+1.48$.
Finally,~$\Pi^e_{B,1}$ collides  with the middle obstacle, and~$\hat{\mathcal{C}}_B = \frac{1}{3}$.

\vspace{2mm}
\subsubsection{Node collision---example}
Assuming~$\rho_\text{coll} =0$ (i.e., we only allow collision-free paths), then~$\hat{\mathcal{C}}_B = \frac{1}{3} \geq \rho_\text{coll}$ is considered in  collision and this extension is discarded.

\vspace{2mm}
\subsubsection{Node domination---example}
Assume that node~$n_{{D},1}$ represents the command path~$A\mbox{-}D $ in the running example and its IPV and estimated length of the \ap and the \pap are:
  \begin{align*}
     \hat{\rm{IPV}}_{{D},1} = \{0.67\}, \quad   &\hat{\ell}_{{D},1} = 2.4  \\
     \tilde{\rm{IPV}}_{{D},1} = \{ 0.67\}, \quad & \tilde{\ell}_{{D},1} = 2.4.
  \end{align*}
  In addition, assume that later in the search node~$n_{{D},2}$ represents another command path~$A\mbox{-}C\mbox{-}D$. 
  This path  also reached vertex~$D$ and its IPV and estimated length of the \ap and the \pap are:
  \begin{align*}
      \hat{\rm{IPV}}_{{D},2} = \{ 0.77 \}, \quad &\hat{\ell}_{{D},2} = 2.9  \\
     \tilde{\rm{IPV}}_{{D},2} = \{ 0.77\}, \quad & \tilde{\ell}_{{D},2} = 2.9.
  \end{align*}
  Here, the estimated length of~$2.9$ is the sum of the estimated lengths of~$A\mbox{-}C$ and~$C\mbox{-}D$. 
  In addition, the IPV of the \ap and \pap contains a probability of~$0.33$ to inspect the POI from vertex~$C$ and~$0.67$ to inspect it from vertex~$D$.
  Thus, the value of~$\hat{\rm{IPV}}_{{D},2}$ equals~$1-(1-0.67)\cdot (1-0.33) \cong 0.77$ (using Eq.~\eqref{eq:extend_IPV_ap}). 
  Similarly, the value of~$\tilde{\rm{IPV}}_{{D},2}$ is also equal to~$0.77$.
  Notice that, the path lengths and IPVs of the \ap and \pap are identical here. This will change shortly when we introduce node subsuming (Sec~\ref{subsec:Node subsuming}).
  
  Here, despite that 
  $\tilde{\ell}_{{D},1} = 2.4\leq  2.9 = \tilde{\ell}_{{D},2}$ 
  (namely, the path to~$n_{D,1}$ is shorter than the path to $n_{D,2}$),
  we have that 
  $\hat{p}_1^{\tilde{\pi}_{{D},1}} =  0.67 \leq  0.77 = \hat{p}_1^{\tilde{\pi}_{{D},2}}$
  (namely, the path to~$n_{D,1}$ has a smaller probability of inspecting the POI).
  Thus,~$n_{{D},1}$ does not dominates~$n_{{D},2}$.

\vspace{2mm}
\subsubsection{Node subsuming---example}
Consider the operation~$n_{{D},1} \oplus n_{{D},2}$.
This results in  a new node~$n_{{D},3}$ such that its IPV and estimated length of the \ap and the \pap are:
    \begin{align*}
     \hat{\rm{IPV}}_{{D},3} &= \{ 0.67\},\\
     \tilde{\rm{IPV}}_{{D},3} &= \max ( \{ 0.67\}, \{ 0.77 \}) =  \{0.77\} \\ 
     \hat{\ell}_{{D},3} &= 2.4,\\
     \tilde{\ell}_{{D},3}& = \min (2.4,2.9) = 2.4.
  \end{align*}
  Here, if we choose~$\kappa = 0.85$ and $\varepsilon = 0$, then~$n_{{D},3}$ is \ek-bounded, namely:
  \begin{align*}
    \sum_{j=1}^{i=k}  \hat{p}_j^{\pi_{{D},3}} = 0.67;
    &
    \quad
    \kappa  \cdot \sum_{j=1}^{i=k}  \hat{p}_j^{\tilde{\pi}_{{D},3}}  = 0.85 \cdot 0.77 \cong 0.66;\\
    \hat{\ell}_{{D},3} = 2.4;
    &
    \quad
    (1+\varepsilon) \cdot \tilde{\ell}_{{D},3} = 2.4.
  \end{align*}
and indeed both $0.67 \geq 0.66$ and $2.4 \geq 2.4$

\vspace{2mm}
\subsubsection{Termination citeria---example}
As a result of the subsume operation~$n_{{D},3} = n_{{D},2} \oplus n_{{D},1}$ we have that the IPV of~$n_{{D},3}$'s  \ap and \pap are:
    \begin{align*}
     \hat{\rm{IPV}}_{{D},3} = \tilde{\rm{IPV}}_{{D},3} =  \{ 0.77\}. 
  \end{align*}
Now, assume that~$\kappa = 0.97$ and recall that we have one POI.
    As $0.77 < 1 \cdot 0.97$, the algorithm cannot terminate.
    However, assume we extend this path by returning to vertices~$C$ and~$D$ to obtain the command path~$A\mbox{-}C\mbox{-}D\mbox{-}C\mbox{-}D\mbox{-}C\mbox{-}D$ in which we perform inspection three times from~$C$ and and three times from~$D$ and   recall that the coverage probability of~$C$ and~$D$ equals~$0.33$ and~$0.67$, respectively.
    Thus, the coverage of the command path~$A\mbox{-}C\mbox{-}D\mbox{-}C\mbox{-}D\mbox{-}C\mbox{-}D$ equals:
    $$1-\underbrace{(1-0.33)^3}_{\text{Three times~$C$}}\cdot\underbrace{(1-0.67)^3}_{\text{Three times~$D$}} \cong 0.98.$$
    Finally,
    $$
    \sum_{j=1}^{j=k} \hat{p}_j^{\hat{\pi}_{A-C-D-C-D-C-D}} = 0.98 \geq  1 \cdot 0.97,
    $$ 
    and the algorithm terminates.

\vspace{2mm}
\subsection{\irisuu---Algorithmic description}
\label{subsec:Algorithm flow}

\begin{algorithm}
    \caption{\irisuu}
    \label{alg:irisuu}
    \hspace*{\algorithmicindent} \textbf{Input: } 
       $\langle \mathcal{G},m,\kappa,\varepsilon,\rho_{\text{coll}} \rangle$
        \\
    \hspace*{\algorithmicindent} \textbf{Output: } 
       \text{Command path } $\pi$
    \begin{algorithmic}[1]
    \State Initialize~$n_{\text{init}}$
            \hfill \texttt{//See~\ref{subsec:Node definition}}
    \State OPEN~$\gets~n_{\text{init}}$,
            \hspace{2mm}
           CLOSED~$\gets~\emptyset$
    \While {OPEN~$\ne~\emptyset$} 
    
        \State~$n\gets$ OPEN.extract\_node\_with\_max\_coverage() \label{line:check_goal_init}
        \State CLOSED.insert($n$)
        \If {$n$.is\_goal\_node()}
            \hfill \texttt{//See~\ref{subsec:Termination criteria}}
            \State 
                \Return~$ n.\pi~$
                \hfill \texttt{//command path }
        \EndIf \label{line:check_goal_end}

        \vspace{1mm}
    \For{$v \in \text{neighbour}(u,\G)$} \label{line:extending_init}
        \hfill \texttt{//$u$ is~$n$'s vertex}
        {\State~$n' \gets$~$n$.extend($u,v$) 
            \hfill \texttt{//See~\ref{subsec:Node extension}} \label{line:extending_end}
        \If{$n$.is\_in\_collision() }
            {\hfill \texttt{//See~\ref{subsec:Node collision}}\label{line:collision_init}}
        \State \textbf{continue}
        \EndIf \label{line:collision_end}
        
        \vspace{1mm}
        \State valid = \textbf{True}
        \For{$n'' \in$ CLOSED with vertex~$v$}  \label{line:dominates_init}
            {\If{$n'' \text{ dominates } n'$}
                \hfill \texttt{//See~\ref{subsec:Node domination}}
                \State valid = \textbf{False}
                \State \textbf{break}
            \EndIf}
        \EndFor}
        \vspace{1mm}
        \If{!valid}
            \hfill \texttt{//$n'$ was dominated}
            \State \textbf{continue}
        \EndIf  \label{line:dominates_end}
        \vspace{1mm}
        \For{$n'' \in$ OPEN with vertex~$v$} \label{line:subsuming_open1_init}
            {\If{$n'' \oplus n'$ is~$(\varepsilon,\kappa)-\text{bounded}$}
            \hfill \texttt{//See~\ref{subsec:Node subsuming}}
                \If{$n''$ has a better \ap coverage than~$n'$ } \label{line:better_coverage_open1}
                \State~$n'' \gets n'' \oplus n'$
                \State valid = \textbf{False}
                \State \textbf{break}
                \EndIf
            \EndIf}
        \EndFor
        \vspace{1mm}
        \If{!valid} 
            \hfill \texttt{//$n'$ was subsumed}
            \State \textbf{continue}
        \EndIf \label{line:subsuming_open1_end}
        \vspace{1mm}
        \For{$n'' \in$ OPEN with vertex~$v$} \label{line:subsuming_open2_init}
            {\If{$n' \oplus n''$ is~$(\varepsilon,\kappa)-\text{bounded}$}
            \hfill \texttt{//See~\ref{subsec:Node subsuming}}
            \If{$n'$ has a better \ap coverage than~$n''$ } \label{line:better_coverage_open2}
                \State OPEN.remove($n''$)
                    \hfill \texttt{//$n''$ was subsumed}
                \State~$n' \gets n' \oplus n''$
                \EndIf
            \EndIf}%
        \EndFor
    \EndFor \label{line:subsuming_open2_end}
    \vspace{1mm}
\State OPEN.insert$(n')$  \label{line:insert_to open}
\EndWhile

\State \Return NULL
\end{algorithmic}
\end{algorithm}

In the previous sections, we described how \irisuu modifies the search operations of \iris to account for localization uncertainty. In this section, we complete the description of the algorithm.
As we will see, despite these modifications, the high-level framework of \iris remains the same, and subsequently, its original guarantees, such as asymptotic convergence to an optimal solution.
This is done while also incorporating execution uncertainty and providing statistical guarantees (Sec.~\ref{sec:Theoretical guarantees}).
To this end, we proceed to outline an iteration of \irisuu given a tuple~$\langle \mathcal{G},m,\kappa,\varepsilon,\rho_{\text{coll}} \rangle$ (see Fig.~\ref{fig:AlgorithmicFlow}) whose pseudo-code is detailed in Alg.~\ref{alg:irisuu}.

In particular, similar to \iris's graph search, \irisuu uses a priority queue OPEN and a set CLOSED
while ensuring that all nodes are always \ek-bounded.
\irisuu starts with an empty CLOSED list and with an OPEN list initialized with the start node~$n_\text{init}$.
At each step, the search proceeds by iteratively popping a node~$n$ from the OPEN list whose \pap coverage~$\tilde{\rm{IPV}}_{u}$ is maximal.\footnote{As described in the original exposition of \iris, we can order the OPEN list either according to the \pap coverage or the \ap coverage.}
Then, if~$n$ satisfies the termination criteria~(see Eq.~\eqref{eq:goal_check}) we terminate the search and return~$n$'s command path~(Lines~\ref{line:check_goal_init}-\ref{line:check_goal_end}). 
Otherwise, we create a new node~$n'$ by extending~$n$ along its neighboring edges (Lines~\ref{line:extending_init}--\ref{line:extending_end}). However, if the estimated probability of collision for node~$n'$ exceeds a certain threshold~$\rho_\text{coll}$, then the newly created node~$n'$ is discarded (Lines~\ref{line:collision_init}--\ref{line:collision_end}).

If~$n'$ was not discarded, then, we perform the following operations (here we assume that the vertex corresponding with~$n'$ is~$v$):
\begin{itemize}
    \item We start by discarding~$n'$ if there exists a node~$n''$ in CLOSED that also reaches~$v$ and  dominates~$n'$~(Lines~\ref{line:dominates_init}-\ref{line:dominates_end}).
    \item We continue by testing whether there exists a node~$n''$ in OPEN that also reaches~$v$ that may subsume~$n'$, be \ek-bounded~(Lines~\ref{line:subsuming_open1_init}-\ref{line:subsuming_open1_end}) and has a better \ap coverage than~$n'$ (Line.~\ref{line:better_coverage_open1}).
    If so,~$n'$ is discarded and~$n''$ is set to be~$n'' \oplus n'$.
    \item Then, we test whether there exists a node~$n''$ in OPEN that also reaches~$v$ that may be subsumed by~$n'$ while the resultant node being \ek-bounded~(Lines~\ref{line:subsuming_open2_init}-\ref{line:subsuming_open2_end}), and the \ap coverage of~$n'$ is better than~$n''$ (Line ~\ref{line:better_coverage_open2}).
    If so,~$n''$ is removed from OPEN and~$n'$ is set to be~$n' \oplus n''$.
\end{itemize}

Finally, if~$n'$ was not discarded, it is inserted into the OPEN list~(Line~\ref{line:insert_to open}).

\textbf{Note.}
When there is no execution uncertainty, running \irisuu with $m=1$ is identical to \iris.


\section{\irisuu---Statistical guarantees}
\label{sec:Theoretical guarantees}
In this section, we detail in Sec.~\ref{subsec:Guarantees for a test path} different statistical guarantees regarding a given command path (proofs are provided in Appendix.~\ref{app:Lemma-proofs}).
Then, we  discuss the implication for the command path computed by \irisuu in Sec.\ref{subsec:Implication of statistical guarantees to irisuu} and provide guidelines on how to choose parameters for \irisuu given the statistical guarantees and the aforementioned implications.

\subsection{Guarantees for a given command path}
\label{subsec:Guarantees for a test path}
Consider a command path~$\pi$ and consider $m$ MC simulated executions of~$\pi$ such that 
$\hat{\rm{IPV}}(\pi) := \{ \hat{p}_1^{\pi}, \ldots, \hat{p}_k^{\pi} \}$ 
is the associated inspection probability vector,
$\hat{\mathcal{C}}(\pi)$ 
is the associated  estimated collision probability,
and
$\hat{\ell}(\pi)$ and~$\hat{s}_{\hat{\ell}}(\pi)$ are the associated average and standard deviation of the path's length, respectively. 
%

\vspace{2mm}
\begin{lemma}[Executed path's expected coverage]
\label{lemma:Bounding executed path's expected coverage}
%
For any desired CL of~$1 - \alpha\in [0,1]$, 
the expected coverage of an executed path following $\pi$, 
denoted by~$\vert \bar{\S}(\pi) \vert$, 
is at least:
\begin{equation}\label{eq:CI-coverage-lemma2}
    \vert \bar{\S}(\pi) \vert^- :=  \sum_{j=1}^{j=k}  \hat{p}^-(\hat{p}_j^{\pi},m,\alpha). 
\end{equation}
Here, the function~$\hat{p}^-$ 
is obtained from the Clopper-Pearson method~\cite{habtzghi2014modified}
and is defined in Eq.~\eqref{eq:Clopper-Pearson method} in Appendix~\ref{app:stat}.  
\end{lemma}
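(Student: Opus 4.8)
The plan is to reduce the lemma to a collection of per-POI binomial confidence statements and then aggregate them by linearity of expectation. First I would rewrite the target quantity as a sum of scalar parameters. Let $p_j := \Pr_{\Nloc \sim \Dloc}[\iota_j \in \S(\mathcal{M}(\pi,\Nloc))]$ denote the (unknown) true probability that the $j$-th POI is inspected during an execution of $\pi$. Using the identity $\vert \S(\cdot) \vert = \sum_{j=1}^{k} \mathbf{1}[\iota_j \in \S(\cdot)]$ and linearity of expectation, the expected coverage factorizes as $\vert \bar{\S}(\pi) \vert = \sum_{j=1}^{k} p_j$. I emphasize that this step needs only linearity of expectation and is valid regardless of Assumption~\ref{ass:iid}; it converts the target into a sum of $k$ parameters, each of which can be bounded separately.

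Next I would connect the MC estimates to these parameters. Since the executions $\pi^e_1,\dots,\pi^e_m$ arise from i.i.d. draws $\Nloc^1,\dots,\Nloc^m \sim \Dloc$, for a \emph{fixed} POI $j$ the indicators $\xi_{1,j},\dots,\xi_{m,j}$ defined in Eq.~\eqref{eq:calculatePOI-vertex} are i.i.d. Bernoulli$(p_j)$, so the count $m\,\hat{p}_j^{\pi} = \sum_{i=1}^{m}\xi_{i,j}$ is Binomial$(m,p_j)$. (This uses independence across the MC samples, which is guaranteed by i.i.d. sampling, and should not be conflated with Assumption~\ref{ass:iid}, which concerns independence across distinct POIs.) This is exactly the regime in which the Clopper--Pearson construction applies: by the definition of $\hat{p}^-$ in Eq.~\eqref{eq:Clopper-Pearson method}, the value $\hat{p}^-(\hat{p}_j^{\pi},m,\alpha)$ is a one-sided lower confidence bound obeying $\Pr[\hat{p}^-(\hat{p}_j^{\pi},m,\alpha) \le p_j] \ge 1-\alpha$. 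Summing this per-coordinate bound over $j$ produces the claimed estimate $\vert \bar{\S}(\pi) \vert^- = \sum_{j=1}^{k} \hat{p}^-(\hat{p}_j^{\pi},m,\alpha)$ as a lower bound on $\vert \bar{\S}(\pi) \vert = \sum_{j=1}^{k} p_j$.

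The main obstacle I anticipate is the precise sense in which the \emph{summed} bound inherits the confidence level $1-\alpha$. Each Clopper--Pearson bound is valid \emph{marginally} for its own POI, but the event that all $k$ bounds hold simultaneously has probability only at least $(1-\alpha)^k$ (under Assumption~\ref{ass:iid}) or $1-k\alpha$ (by a union bound), both of which degrade as $k$ grows. I would therefore argue the statement coordinate-wise---treating $\hat{p}^-(\hat{p}_j^{\pi},m,\alpha)$ as a valid $(1-\alpha)$ lower bound for each $p_j$ and presenting $\vert \bar{\S}(\pi) \vert^-$ as the resulting aggregate estimate---rather than as a single joint high-probability bound on the entire sum. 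This reading is consistent with the paper's stated position that such statistical bounds are intended as a guideline (the flagged bias toward false negatives and the parameter-setting procedure), and I would defer any tightening of the aggregate confidence to the discussion in Sec.~\ref{subsec:Implication of statistical guarantees to irisuu}.
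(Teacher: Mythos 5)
Your proposal follows essentially the same route as the paper's own proof: write the expected coverage as $\sum_{j=1}^{k} p_j$, apply the Clopper--Pearson lower bound $\hat{p}^-(\hat{p}_j^{\pi},m,\alpha)$ to each POI's marginal inspection probability, and sum the per-POI bounds to obtain $\vert \bar{\mathcal{S}}(\pi) \vert^-$. The one place you diverge is also the place where you are more careful than the paper: the paper invokes Assumption~\ref{ass:iid} to assert that the summed bound holds ``with a CL of at least $1-\alpha$,'' but independence across POIs only yields a simultaneous coverage probability of order $(1-\alpha)^k$ (or $1-k\alpha$ via a union bound), which is precisely the degradation you flag; your choice to state the result coordinate-wise as an aggregate estimate, deferring the joint-confidence question to the guideline discussion in Sec.~\ref{subsec:Implication of statistical guarantees to irisuu}, is the more defensible reading of what this argument actually establishes.
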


\ignore{
\vspace{2mm}
\begin{proof}
We treat the executed path's coverage probability as a random variable and recall that~$\hat{p}_j^{\pi}$ is the estimated probability to inspect POI~$j$ computed by~$m$ independent samples of execution paths. 
Then, for any desired CL of~$1 - \alpha$, the lower bound on the coverage is defined as:
$$
    \hat{p}{_j^{\pi}}^- =  \hat{p}^-(\hat{p}_j^{\pi},m,\alpha).
$$
Here, the function~$\hat{p}^-$ is defined in Eq.~\eqref{eq:Clopper-Pearson method}.
Since we assume that the inspection of each POI is independent  of the inspection outcomes of the other POIs (see sec~\ref{subsec:notations},
we can add up the lower bounds of the individual POIs to obtain a lower bound on the executed path's coverage.
Namely, we can say with a CL of {at least}~$1-\alpha$, a lower bound on the executed path's coverage is:
$$
    \vert \bar{\S}(\pi) \vert^-  
        =  \sum_{j=1}^{k}  \hat{p}{_j^{\pi}}^- 
        = \sum_{j=1}^{k} (\hat{p}_j^{\pi},m,\alpha).
$$
\end{proof}
}

\vspace{2mm}
\begin{lemma}[Executed path's collision probability]
\label{lemma:Bounding executed path's collision}
%
For any desired CL of~$1 - \alpha\in [0,1]$,
the expected collision probability of an executed path following~$\pi$, denoted by~$\bar{\mathcal{C}}(\pi)$ is at most: 
\begin{equation}\label{eq:lemma2-collision}
    \bar{\mathcal{C}}(\pi)^+ := \hat{p}^+(\hat{\mathcal{C}}(\pi),m,\alpha). 
\end{equation}
Here, the function~$\hat{p}^+$ is defined in Eq.~\eqref{eq:Clopper-Pearson method}. 
\end{lemma}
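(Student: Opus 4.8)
The plan is to mirror the structure of the proof of Lemma~\ref{lemma:Bounding executed path's expected coverage}, but to exploit the fact that a collision of the executed path is a single Bernoulli event per simulated execution rather than a sum over POIs. This makes the argument considerably simpler and, notably, removes any dependence on Assumption~\ref{ass:iid}, since no independence between POIs is needed here.

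First I would fix the probabilistic model precisely. Recall that once the uncertainty parameters $\Nloc$ are drawn from $\Dloc$, the motion model $\mathcal{M}$ is deterministic, so $\mathcal{C}\!\left(\mathcal{M}(\pi,\Nloc)\right)\in\{0,1\}$ is a Bernoulli random variable whose success probability is, by definition, the expected collision probability $\bar{\mathcal{C}}(\pi)=E_{\Nloc\sim\Dloc}\!\left[\mathcal{C}(\mathcal{M}(\pi,\Nloc))\right]$. Because the $m$ uncertainty realizations $\Nloc^1,\ldots,\Nloc^m$ are drawn i.i.d.\ from $\Dloc$, the indicator variables $\zeta_1^\pi,\ldots,\zeta_m^\pi$ of Eq.~\eqref{eq:definition_of_p_coll} are i.i.d.\ Bernoulli$\big(\bar{\mathcal{C}}(\pi)\big)$ draws. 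Consequently their sum $m\cdot\hat{\mathcal{C}}(\pi)=\sum_{j=1}^{m}\zeta_j^\pi$ follows a Binomial$\big(m,\bar{\mathcal{C}}(\pi)\big)$ distribution, so estimating $\bar{\mathcal{C}}(\pi)$ is exactly the problem of upper-bounding an unknown binomial proportion from $m$ trials.

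The core step is then to invoke the one-sided Clopper--Pearson upper confidence bound~\cite{habtzghi2014modified}. Treating $\bar{\mathcal{C}}(\pi)$ as the unknown population parameter and $m\cdot\hat{\mathcal{C}}(\pi)$ as the observed number of collisions, the Clopper--Pearson construction produces the value $\hat{p}^+(\hat{\mathcal{C}}(\pi),m,\alpha)$ of Eq.~\eqref{eq:Clopper-Pearson method} such that the event $\{\bar{\mathcal{C}}(\pi)\le \hat{p}^+(\hat{\mathcal{C}}(\pi),m,\alpha)\}$ holds with probability at least $1-\alpha$. Substituting the definition of $\hat{p}^+$ yields precisely the claimed bound $\bar{\mathcal{C}}(\pi)^+$.

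I expect the main subtlety to be statistical interpretation rather than computation. The Clopper--Pearson interval is exact and, because of the discreteness of the binomial distribution, conservative, so its coverage probability is \emph{at least} $1-\alpha$ rather than exactly $1-\alpha$; I would therefore phrase the guarantee as holding with a CL of \emph{at least} $1-\alpha$, consistent with the coverage lemma. The only independence invoked is that between the $m$ MC executions, which follows directly from the i.i.d.\ sampling of the $\Nloc^i$, so Assumption~\ref{ass:iid} need not be cited at all.
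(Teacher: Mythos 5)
Your proposal is correct and follows the same route as the paper, which simply treats the executed path's collision state as a Bernoulli random variable and applies the Clopper--Pearson bound of Eq.~\eqref{eq:Clopper-Pearson method}; you merely spell out the i.i.d.\ binomial setup that the paper leaves implicit. Your observations that Assumption~\ref{ass:iid} is not needed here and that the guarantee holds with CL \emph{at least} $1-\alpha$ are both accurate and consistent with the paper's treatment.
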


\ignore{

\vspace{2mm}
\begin{proof}
As we can treat the executed path's collision probability as a random variable,
Eq.~\eqref{eq:lemma2-collision} is an immediate application of the Clopper-Pearson method~(see Eq.~\eqref{eq:Clopper-Pearson method}).
\end{proof}
}

\vspace{2mm}
\begin{lemma}[Executed path's expected length]
\label{lemma:Bounding executed path's expected length}
%
For any desired CL of~$1 - \alpha\in [0,1]$, 
the expected length of an executed path following $\pi$, denoted by~$\bar{\ell}$ is bounded such that:
\begin{equation}
\label{eq:CI-length-lemma1}
    \bar{\ell}(\pi) \in \left[\bar{X}^-(\hat{\ell}(\pi),m,\alpha), \bar{X}^+(\hat{\ell}(\pi),m,\alpha) \right].
\end{equation}
Here,~$\bar{X}^-$ and~$\bar{X}^+$ are defined in Eq.~\eqref{eq:Mean of a population}.
\end{lemma}

\ignore{
\begin{proof}
As we can treat the executed path's expected length as random variable,
Eq.~\eqref{eq:CI-length-lemma1} is an immediate application of Eq.~\eqref{eq:Mean of a population-base}.
\end{proof}
}

\ignoreStat{
\vspace{2mm}
\begin{lemma}[Executed path's length variance]
\label{lemma:Bounding executed path's expected variance length}
%
For any desired CL of~$1 - \alpha\in [0,1]$,
the variance of the length of an executed path following $\pi$, denoted by~~$\bar{s}_{\hat{\ell}}(\pi)^2$ is bounded such that: 
\begin{equation}\label{eq:CI-length-variance}
    \bar{s}_{\hat{\ell}}^2(\pi) \in \left[\bar{s}^-(\hat{s}_{\hat{\ell}}(\pi),m,\alpha), \bar{s}^+(\hat{s}_{\hat{\ell}}(\pi),m,\alpha) \right].
\end{equation}
Here,~$\bar{s}^-$ and~$\bar{s}^+$ are defined in Eq.~\eqref{eq:variance_bound_Sheskin}.
\end{lemma}
}

\ignore{
\begin{proof}
As we can treat the executed path's expected length as a random variable,
Eq.~\eqref{eq:CI-length-variance} is an immediate application of Eq.~\eqref{eq:variance_bound_Sheskin}.
\end{proof}
}

\ignoreStat{
\vspace{2mm}
\begin{lemma}[Distribution of  executed path's length]
\label{lemma:Bounding executed path's expected possible length}
%
Let~$\bar{X}^-(\hat{\ell}(\pi),m,\alpha)$ 
    and~$\bar{X}^+(\hat{\ell}(\pi),m,\alpha)$ 
be the CI bound of~$\hat{\ell}(\pi)$ (see Lemma~\ref{lemma:Bounding executed path's expected length}). 
Similarly, 
    let~$\bar{s}^-(\hat{s}_{\hat{\ell}}(\pi),m,\alpha)$ 
    and~$\bar{s}^+(\hat{s}_{\hat{\ell}}(\pi),m,\alpha)$ 
be the CI bound of~$\hat{s}_{\hat{\ell}}(\pi)$ (see Lemma~\ref{lemma:Bounding executed path's expected variance length}).

Let~$\ell$ be the length of an execution path. Then, for any CL of~$1-\alpha$  where~$\alpha \in [0,1]$, we have a probability~$p_{\text{sig lvl}}(n_l)$ that any possible value of~$\ell$ will be within the following CI:

    \begin{equation}\label{eq:CI_sigma_level_possible}
\begin{split}
    [ &\bar{X}^-(\hat{\ell}(\pi),m,\alpha) - n \bar{s}^+(\hat{s}_{\hat{\ell}}(\pi),m,\alpha), \\
    &\bar{X}^+(\hat{\ell}(\pi),m,\alpha) + n \bar{s}^+(\hat{s}_{\hat{\ell}}(\pi),m,\alpha)
    ]
\end{split}
\end{equation}
Here,~$n_l$ is the sigma level and~$p_{\text{sig lvl}}(n_l)$ is the safety probability as detailed 
Appendix~\ref{app:stat} and in Eq.~\eqref{eq:CI_sigma_level}.
\end{lemma}
}

Before stating our final Lemma, we introduce the following assumption:
\begin{assumption}
    \label{ass:mon-convex}
   For any fixed values of~$m$ and~$\alpha$, the function~$\hat{p}^-(\hat{p}_j^{\pi},m,\alpha)$, which depends solely on~$\hat{p}_j^{\pi}$, is both monotonically increasing and strictly convex.
\end{assumption}

\vspace{2mm} 
\begin{lemma}[\parbox{0.48\linewidth}{\centering {Bounding executed path's \\ sub-optimal coverage}}]
\label{lemma:Bounding executed path's sub-optimal coverage}
Recall that for any desired CL of $1-\alpha\in [0,1]$, $\vert \bar{\S}(\pi) \vert^-$ is the lower bound value of the expected coverage of an executed path following~$\pi$ and can be expressed as $\sum_{j=1}^{k} \hat{p}^-(\hat{p}_j^{\pi},m,\alpha)$ (see  Lemma~\ref{lemma:Bounding executed path's expected coverage} and Eq.~\eqref{eq:Clopper-Pearson method}).

If Assumption~\ref{ass:mon-convex} holds,  then minimizing $\vert \bar{\S}(\pi) \vert^-$ subject to 
(i)~$\hat{p}_j^{\pi} \in [0,1]$ for all $j$, 
and~(ii)~$\sum_{j=1}^{k} \hat{p}_j^{\pi} \geq \kappa \cdot k$
yields that~$\forall j,\hat{p}_j^{\pi} = \kappa$. 
Consequently, 
\begin{equation}
\label{eq:lemma_pj_kappa}
\min_{\hat{\rm{IPV}}(\pi)} \vert \bar{\S}(\pi) \vert^-
= k \cdot \hat{p}^-(\kappa,m,\alpha).
\end{equation}

As we will see, Assumption~\ref{ass:mon-convex} is used to prove Lemma~\ref{lemma:Bounding executed path's sub-optimal coverage}.
Proving that Assumption~\ref{ass:mon-convex} holds is non-trivial.
However, in Appendix~\ref{app:Lemma-proofs} we numerically demonstrate that it holds for all tested values.

\ignore{
%
\begin{color}{red}
  For a desired CL~$1 - \alpha$, with~$\alpha \in [0,1]$, and a parameter~$\kappa \in [0,1]$, we aim to minimize~$\vert \bar{\S}(\pi) \vert^-$ while satisfying the following conditions:~(i)~$\hat{p}_j^{\pi} \in [0,1]$ for all $j$, and~(ii)~$\sum_{j=1}^{k} \hat{p}_j^{\pi} \geq \kappa \cdot k$.
In the worst case, where the minimum lower bound is achieved, it follows that all elements of~$\hat{\rm{IPV}}(\pi)$ are equal and given by~$\forall j,\hat{p}_j^{\pi} = \kappa$. 
In other words:
\begin{equation}
\label{eq:lemma_pj_kappa}
\min_{\hat{\rm{IPV}}(\pi)} \vert \bar{\S}(\pi) \vert^-
= \sum_{j=1}^{k} \hat{p}^-(\hat{p}_j^{\pi},m,\alpha)
= k \cdot \hat{p}^-(\kappa,m,\alpha).
\end{equation}
\end{color}
}

\end{lemma}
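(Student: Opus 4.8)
The plan is to treat this as a constrained optimization problem and exploit the structural assumption directly. We wish to minimize $\vert \bar{\S}(\pi) \vert^- = \sum_{j=1}^{k} \hat{p}^-(\hat{p}_j^{\pi},m,\alpha)$ over the feasible region defined by (i)~$\hat{p}_j^{\pi} \in [0,1]$ and (ii)~$\sum_{j=1}^{k} \hat{p}_j^{\pi} \geq \kappa \cdot k$. The objective is a separable sum of identical functions $g(x) := \hat{p}^-(x,m,\alpha)$ applied to each coordinate, and by Assumption~\ref{ass:mon-convex} each $g$ is monotonically increasing and strictly convex. First I would observe that because $g$ is increasing, decreasing any $\hat{p}_j^\pi$ decreases the objective; hence the constraint (ii) must be tight at the optimum, i.e.\ $\sum_j \hat{p}_j^\pi = \kappa \cdot k$. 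This reduces the problem to minimizing a separable strictly convex function over the simplex-like set $\{\sum_j x_j = \kappa k,\ x_j \in [0,1]\}$.

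The core step is then a standard convexity/symmetrization argument. I would show that the unique minimizer of a sum of identical strictly convex functions subject to a fixed-sum constraint is the balanced point $\hat{p}_j^\pi = \kappa$ for all $j$. One clean way is Jensen's inequality: by strict convexity of $g$,
\begin{equation}
\frac{1}{k}\sum_{j=1}^{k} g(\hat{p}_j^{\pi}) \;\geq\; g\!\left(\frac{1}{k}\sum_{j=1}^{k} \hat{p}_j^{\pi}\right) = g(\kappa),
\end{equation}
with equality if and only if all $\hat{p}_j^\pi$ are equal. Multiplying through by $k$ gives $\sum_j g(\hat{p}_j^\pi) \geq k \cdot g(\kappa) = k \cdot \hat{p}^-(\kappa,m,\alpha)$, which is exactly Eq.~\eqref{eq:lemma_pj_kappa}. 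Alternatively, a Lagrange/KKT argument (stationarity forces $g'(\hat{p}_j^\pi)$ equal across all active coordinates, and strict convexity makes $g'$ injective, so all coordinates coincide) yields the same conclusion while also confirming that the box constraints $\hat{p}_j^\pi \in [0,1]$ are inactive at $\kappa \in (0,1)$. I would prefer the Jensen route since it is shortest and directly delivers the equality condition.

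The main obstacle is not the optimization itself---which is routine once convexity and monotonicity are in hand---but rather justifying that we are in fact computing a \emph{worst-case} (minimal) lower bound and that the box constraints do not bind. I would need to note that since $\kappa \in (0,1]$ and the balanced solution sets every coordinate to $\kappa \in [0,1]$, feasibility with respect to (i) is automatic, so the interior solution is the global minimizer over the full feasible set. The genuinely hard part, as the paper itself flags, lies upstream: Assumption~\ref{ass:mon-convex}, namely that the Clopper-Pearson lower-bound function $\hat{p}^-(\cdot,m,\alpha)$ is monotonically increasing and strictly convex. Monotonicity is intuitive, but strict convexity of the Clopper-Pearson lower confidence limit (an inverse of an incomplete Beta / binomial tail) is analytically delicate; the paper sidesteps a full proof by verifying it numerically. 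Given that the lemma is stated conditionally on Assumption~\ref{ass:mon-convex}, I would explicitly invoke the assumption at the two places it is needed (tightness via monotonicity, and the strict Jensen inequality via strict convexity) and leave the verification of the assumption to the numerical demonstration referenced in Appendix~\ref{app:Lemma-proofs}.
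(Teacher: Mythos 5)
Your proof is correct, and it reaches the same conclusion from the same two ingredients (monotonicity of $\hat{p}^-$ for tightness of the sum constraint, strict convexity for balancing the coordinates), but via a genuinely more compact route than the paper. The paper's proof splits into the cases $\kappa=1$ and $\kappa<1$, and for the latter runs a two-step contradiction: first an explicit pairwise exchange argument (perturbing two coordinates $x_1^*\mapsto x_1^*-\delta$, $x_2^*\mapsto x_2^*+\delta$ and invoking the strict-convexity inequality $f(\lambda x_1+(1-\lambda)x_2)<\lambda f(x_1)+(1-\lambda)f(x_2)$) to show all coordinates must be equal to some $\kappa'\geq\kappa$, and then a monotonicity argument to force $\kappa'=\kappa$. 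Your Jensen step subsumes the entire exchange argument in one line, including the equality condition that identifies the balanced point as the unique minimizer, and it has the additional advantage of not presupposing that a minimizer exists (the paper's contradiction argument implicitly assumes one does, which is fine by compactness but is never stated). In fact your argument can be tightened further by chaining $\tfrac{1}{k}\sum_j g(\hat{p}_j^{\pi}) \geq g\bigl(\tfrac{1}{k}\sum_j \hat{p}_j^{\pi}\bigr) \geq g(\kappa)$, where the second inequality uses monotonicity together with constraint (ii) directly; this dispenses with the separate tightness step altogether. Your observation that the box constraints are automatically satisfied at the balanced point, and your explicit flagging that the real burden lies in Assumption~1 (verified only numerically in the appendix), match the paper's treatment. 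What the paper's longer exchange argument buys is elementariness — it avoids citing Jensen and its equality case — but it proves nothing more.
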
  

\ignore{
\vspace{2mm} 
\begin{proof}
    See proof in Appendix.\ref{app:Proof of lemma  sub-optimal coverage}.
\end{proof}
}

\subsection{Implication of statistical guarantees to \irisuu}
\label{subsec:Implication of statistical guarantees to irisuu}
One may be tempted to use the bounds on 
the expected coverage (Lemma~\ref{lemma:Bounding executed path's expected coverage})
and
the collision probability (Lemma~\ref{lemma:Bounding executed path's collision})
on the command path computed by \irisuu.
Indeed, these bounds hold if the estimations (e.g., the path's IPV) computed via MC simulated executions were computed \emph{after} the command path was computed by \irisuu and not on the fly \emph{while} the command path is computed by \irisuu.
That is, using these guarantees may lead to false negatives (e.g., estimation of a collision-free path despite the expectation of a collision occurring) since the command path is computed from a pool of multiple optional paths.
A detailed illustrative example to explain this is provided in Appendix~\ref{subsec:Illustrative example for possible false negatives}.

To summarize, the different statistical guarantees provided in Sec.~\ref{subsec:Guarantees for a test path} can be used if the command path outputed by  \irisuu is simulated multiple times (an alternative is to use the Bonferroni correction, which is a multiple-comparison correction used when conducting multiple dependent or independent statistical tests simultaneously~\cite{weisstein2004bonferroni}).
However, we can use them to understand the relationship between the system's parameters~$m$,~$\kappa$, and~$\rho_\text{coll}$ and as \emph{guidelines} on how to choose them. 

\vspace{2mm} 
\paragraph*{Lemma~\ref{lemma:Bounding executed path's sub-optimal coverage}---implications}
Recall that Lemma~\ref{lemma:Bounding executed path's sub-optimal coverage} states that for any desired CL, and regardless of the values of~$m$ and~$\kappa$, the minimum lower bound on the executed path's coverage is 
$
 \min_{\hat{\rm{IPV}}(\pi)} \vert \bar{\S}(\pi) \vert^- 
    = k \cdot \hat{p}^-(\kappa,m,\alpha)
$.
This allows us to provide guidelines on how to choose the algorithm's parameters~$m$ and~$\kappa$ according to the desired CL which is application specific.
As an example, in Fig.~\ref{subfig:CI lower bound of the POI coverage} we plot~$\hat{p}^-(\kappa,m,\alpha)$ for~$\alpha = 0.05$ for various values of~$m$ and~$\kappa$.
Now, consider a user requirement that the POI coverage of the executed path will exceed~$93\%$ with a CL of~$95\%$ (i.e.,~$\alpha = 0.05)$.
This corresponds to choosing any point on the line of~$93\%$ which can be, e.g.,~$m=70$ and~$\kappa = 0.99$ or~$m=95$ and~$\kappa=0.98$.

\vspace{2mm} 
\paragraph*{Lemma~\ref{lemma:Bounding executed path's collision}---implications}
Similar to Lemma~\ref{lemma:Bounding executed path's sub-optimal coverage}, Lemma~\ref{lemma:Bounding executed path's collision} can be used as a guideline on how to choose the algorithm's parameters~$m$ and~$\rho_\text{coll}$
according to the desired CL.
As an example, in Fig.~\ref{subfig:paretoCICollision} we plot~$\hat{p}^+(\rho_\text{coll},m,\alpha)$ for~$\alpha=0.05$ and various values of~$m$ and~$\rho_\text{coll}$.
Now, consider a user requirement that the executed path's collision probability does not exceed~$7\%$ with a CL of~$95\%$ (i.e.,~$\alpha=0.05$). This can be achieved by selecting a point on the~$7\%$ line, for instance,~$m=94$ and~$\rho_\text{coll}=0.02$.

\paragraph*{Lemma~\ref{lemma:Bounding executed path's expected length}---note}
Both Lemma~\ref{lemma:Bounding executed path's sub-optimal coverage} and 
Lemma~\ref{lemma:Bounding executed path's collision} gave clear guidelines on how to choose parameters for desired confidence levels.
This was possible because there exists a bound on the best possible outcome (i.e.,~$100\%$ coverage of POIs and~$0\%$ collision probability) and the system parameters~$\kappa$ and~$\rho_\text{coll}$ are defined with respect to these bounds.
In contrast, there is no a-priory bound on  path length and the parameter~$\eps$ is only defined with respect to the (unknown) optimal length~(i.e.,~$\hat{\ell}_u~\leq~(1+\varepsilon)~\cdot~\tilde{\ell}_u$).

\begin{figure}[tb]
    \centering
    \begin{subfigure}[tb]{0.235\textwidth}
        \centering
          \includegraphics[width=\textwidth]{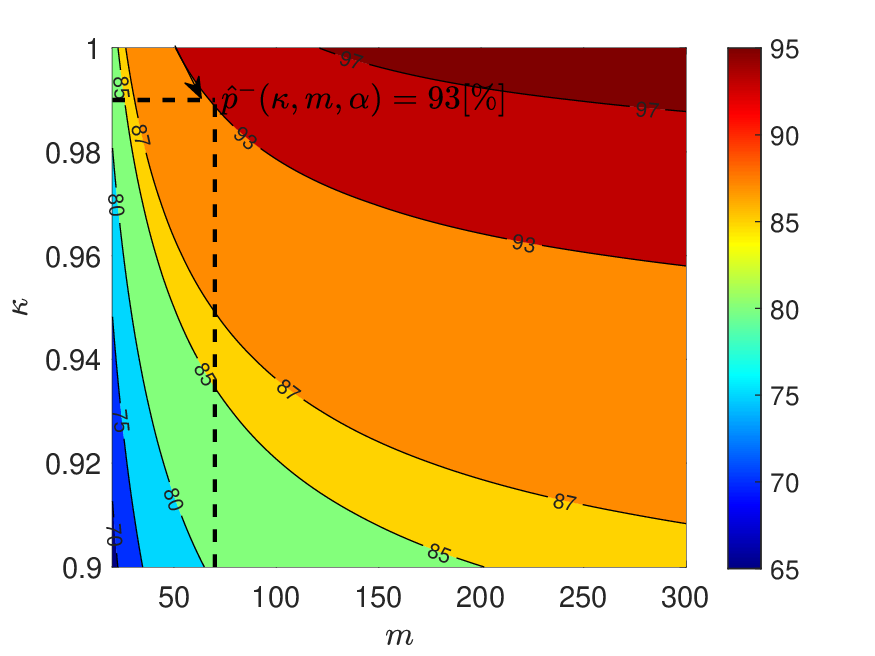}
        \caption{}
        \label{subfig:CI lower bound of the POI coverage}
    \end{subfigure}
    \begin{subfigure}[tb]{0.235\textwidth}
        \centering
        \includegraphics[width=\textwidth]{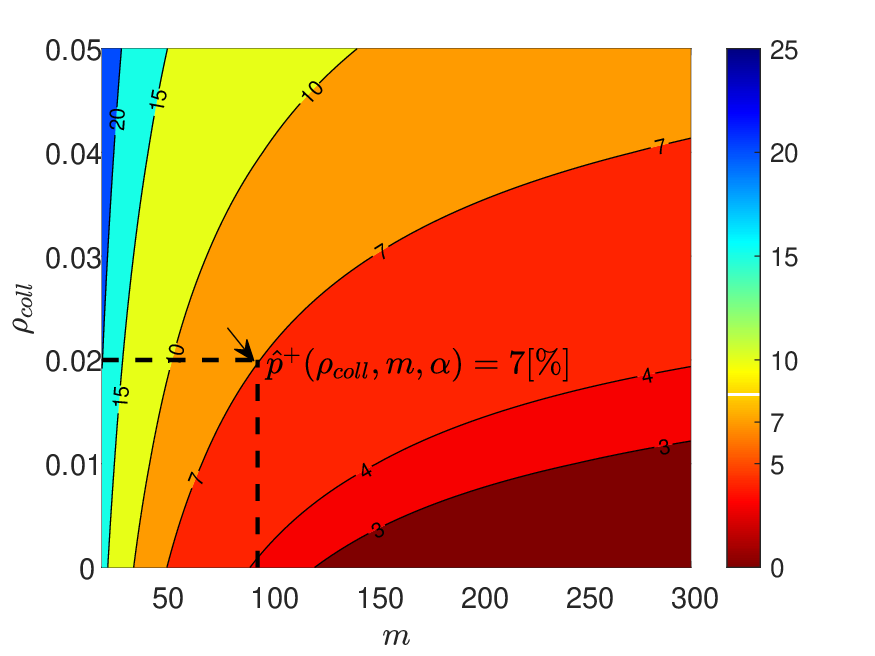}
        \caption{}
        \label{subfig:paretoCICollision}
    \end{subfigure}
    \caption{
    (\subref{subfig:CI lower bound of the POI coverage}), \protect (\subref{subfig:paretoCICollision})
    Values (in percentage) 
    of~$\hat{p}^-(\kappa,m,\alpha)$ 
    and~$\hat{p}^+(\rho_\text{coll},m,\alpha)$    
    for~$\alpha = 0.05$
    as a function of 
    the number of MC samples~$m$ ($x$-axis) and
    the coverage approximation factors~$\kappa$ and~$\rho_\text{coll}$
    ($y$-axis), respectively.
    }
    \label{fig:params}
\end{figure}

\ignore{
\subsection{Implication for \irisuu's command path}
\label{subsec:Implication for irisuu command path}
\kiril{I didn't get the following discussion at all. Was the intention here to clarify that the bounds reported in the lemmas cannot be computed within \irisuu on the fly for each individual path under consideration? Rather, those bounds can only be used on the solution that is obtained by the algorithm?}
One may me tempted to use the bounds on 
the expected coverage (Lemma~\ref{lemma:Bounding executed path's expected coverage}),
the collision probability (Lemma~\ref{lemma:Bounding executed path's collision})
and the safety probability of the length of an executed (Lemma~\ref{lemma:Bounding executed path's expected possible length})
on the command path computed by \irisuu.
Indeed, these bounds hold if the estimations computed via $m$ MC simulated executions were computed \emph{after} the command path was computed and not \emph{while} the command path is computed.
That is, using these guarantees may lead to false negatives (i.e., estimation of a collision-free path despite the expectation of a collision occurring) since the command path is computed from a pool of multiple optional paths.

\ignore{
To understand this, consider the following illustrative example where we run the algorithm with  $\rho_{\rm coll} = 0$, namely, the algorithm only considers paths that are assumed to be collision free (i.e., if the algorithm outputs a path $\pi$ then $\hat{C}(\pi) = 0$).
Furthermore, assume that the algorithm uses $m=120$. Thus, using Lemma~\ref{lemma:Bounding executed path's collision} we have that for a given path $\pi$ and for $\alpha = 0.05$, $\bar{C}(\pi)^+ \approx 0.03$.
Namely, there is a probability of $95\%$ that if this path is executed $100$ times, at most $3$  paths will be in collision. \kiril{What is mean by "executed"? What is the relation between $100$ and $120$ here? Should be explained more formally. }
}

Now assume that we have $k=100$ paths $\pi_1, \ldots \pi_k$ connecting the start and the goal, each having a collision probability of $\bar{C}(\pi_i) = 0.04$. \kiril{Explain motivation. Are those paths different instantiations of the same path?}
The probability that a specific path will be estimated to be collision-free is:
$$
(\bar{C}(\pi_i))^m =0.96^{120} \approx 0.0075.
$$
However, the probability that one of the $k$ paths will be estimated to be a collision-free path is
$$
1 - \left(1 - (\bar{C}(\pi_i))^m \right)^k
\approx
1 - \left(1 - 0.0075\right)^{100}
\approx
0.52.
$$
Namely, there is more than $50\%$ chance that the algorithm will output a collision-free path $\pi_i$ whose collision probability is $4\%$
(which, of course, is larger than the upper bound of $\bar{C}(\pi_i)^+ \approx 0.03$ guaranteed with $95\%$ confidence if Lemma~\ref{lemma:Bounding executed path's collision} was wrongly used).

\kiril{I should revisit the following text, as I didn't get it earlier.}
\begin{color}{red}
    To address this bias and establish a confidence level of approximately $1-\alpha$ for $\bar{C}(\pi)^+$, the upper bound of the collision probability, we aim to satisfy the following equation:
$$
1 - \left(1 - (\bar{C}(\pi)^+)^m \right)^k \leq \frac{\alpha}{2}
$$

let's consider the values $m=120$, $k=100$, and $\alpha = 0.05$. If we set $\bar{C}(\pi)^+ \approx 0.067$, then the inequality holds:
$$
1 - \left(1 - (0.067)^{120} \right)^{100} \leq \frac{0.05}{2}.
$$

In general, we can employ the Bonferroni correction, which is a multiple-comparison correction used when conducting multiple dependent or independent statistical tests simultaneously~\cite{weisstein2004bonferroni}. By applying the Bonferroni correction to our guarantees for the command path generated by the \irisuu algorithm, we can update the value of $\alpha$ to $\alpha'$, where $\alpha' = \frac{\alpha}{k}$, with $k$ representing the number of possible options for the command path.

Determining the exact number of possible options may be challenging, so we opt to provide an upper bound by considering the total number of extending operations performed during the search. Therefore, even if the actual number of options is lower than the total number of extending operations, we observe from Fig.\ref{fig:BonferroniCorrectionExample} that the bias effect diminishes with an increased number of samples and options. Hence, selecting a larger number of options than the true number has a reduced effect.

\begin{figure}[tb]
    \centering
     \includegraphics[width=0.5\textwidth]{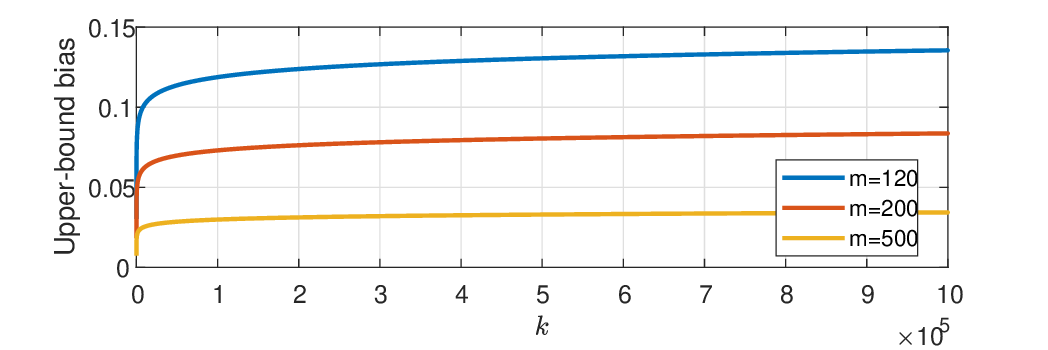}
     \caption{
     The upper bound bias($y$-axis) versus number of possible options ($x$-axis) for several value of~$m$
    }
\label{fig:BonferroniCorrectionExample}
\end{figure}

\end{color}
}

\ignore{ 
Before discussing the command path computed by \irisuu, let's begin with some path~$\pi_{\text{test}}$, which represents a command path that were tested~$m$ times to obtain reliable statistical results.
\vspace{2mm}
\begin{lemma}[Bounding executed path's expected coverage]
\label{lemma:Bounding executed path's expected coverage}
Let~$\hat{\rm{IPV}}(\pi_{\text{test}}) := \{ \hat{p}_1^{\pi_{\text{test}}}, \ldots, \hat{p}_k^{\pi_{\text{test}}} \}$ be the inspection probability vector of~$m$ execution paths following the command-path~$\pi_{\text{test}}$.
Then, for any desired CL of~$1 - \alpha$, with~$\alpha \in [0,1]$
the executed path's expected coverage, denoted by~$\vert \bar{\S}(\pi_{\text{test}}) \vert$, has a lower bound value of:
\begin{equation}\label{eq:CI-coverage-lemma2}
    \vert \bar{\S}(\pi_{\text{test}}) \vert^- =  \sum_{j=1}^{k}  \hat{p}^-(\hat{p}_j^{\pi_{\text{test}}},m,\alpha). 
\end{equation}
Here, the function~$\hat{p}^-$ is defined in Eq.~\eqref{eq:Clopper-Pearson method}.   
\end{lemma}

\vspace{2mm}
\begin{proof}
We treat the executed path's coverage probability as a random variable and recall that~$\hat{p}_j^{\pi_{\text{test}}}$ is the estimated probability to inspect POI~$j$ computed by~$m$ independent samples of execution paths. 
Then, for any desired CL of~$1 - \alpha$, with~$\alpha \in [0,1]$ the lower bound coverage is defined as:
$$
    \hat{p}{_j^{\pi_{\text{test}}}}^- =  \hat{p}^-(\hat{p},m,\alpha).
$$
Here, the function~$\hat{p}^-$ is defined in Eq.~\eqref{eq:Clopper-Pearson method}.
Since the inspection of each POI is independent of the inspection outcomes of the other POIs, we can add up the lower bounds of the individual POIs to obtain a lower bound on the executed path's coverage.
Namely, we can say with \textbf{at least} CL of~$1-\alpha$, that the lower bound on the executed path's coverage is:
$$
    \vert \bar{\S}(\pi_{\text{test}}) \vert^-  = \sum_{j=1}^{k}  \hat{p}{_j^{\pi_{\text{test}}}}^- = \sum_{j=1}^{k} (\hat{p}_j^{\pi_{\text{test}}},m,\alpha).
$$
Notice that, resulting in each POI with the worst case of~$\hat{p}{j^{\pi{\text{test}}}}^-$ has a lower statistical chance than~$\alpha$. Thus, the lower bound of the sum is a non-tight lower bound, but it is still a valid lower bound and will be used later in Lemma~\ref{lemma:Bounding executed path's sub-optimal coverage} implications.
\end{proof}

\vspace{2mm}
\begin{lemma}[\parbox{0.48\linewidth}{\centering {Bounding executed path's \\ collision probability}}]
\label{lemma:Bounding executed path's collision}
~

Let~$\hat{\mathcal{C}}(\pi_{\text{test}})$ be the estimated collision probability of~$m$ execution paths following the command-path~$\pi_{\text{test}}$.
Then, for any desired CL of~$1 - \alpha$, with~$\alpha \in [0,1]$, 
the expected collision probability of the executed path~~$\bar{\mathcal{C}}(\pi_{\text{test}})$ has an upper bound of: 
\begin{equation}\label{eq:lemma2-collision}
    \bar{\mathcal{C}}(\pi_{\text{test}})^+ = \hat{p}^+(\hat{\mathcal{C}}(\pi_{\text{test}}),m,\alpha). 
\end{equation}
Here, the function~$\hat{p}^+$ is defined in Eq.~\eqref{eq:Clopper-Pearson method}. 
\end{lemma}

\vspace{2mm}
\begin{proof}
We treat the executed path's collision probability as a random variable and recall that~$\hat{\mathcal{C}}(\pi_{\text{test}})$ represents the collision probability of the command-path~$\pi_{\text{test}}$ given~$m$ execution samples.
Thus, Eq.~\eqref{eq:lemma2-collision} is an immediate application of the Clopper-Pearson method~(see Eq.~\eqref{eq:Clopper-Pearson method}).
\end{proof}

\begin{lemma}[Bounding executed path's expected length]
\label{lemma:Bounding executed path's expected length}
Let~$\hat{\ell}(\pi_{\text{test}})$ and~$\hat{s}_{\hat{\ell}}(\pi_{\text{test}})$ be the average and standard deviation of~$m$ execution paths following the command-path~$\pi_{\text{test}}$, respectively.
Then, for any desired CL of~$1 - \alpha$, with~$\alpha \in [0,1]$, 
the expected collision probability of the executed path~$\bar{\mathcal{C}}(\pi_{\text{test}})$ is bounded such that: 
\begin{equation}
\label{eq:CI-length-lemma1}
    \bar{\ell}(\pi_{\text{test}}) \in \left[\bar{X}^-(\hat{\ell}(\pi_{\text{test}}),m,\alpha), \bar{X}^+(\hat{\ell}(\pi_{\text{test}}),m,\alpha) \right].
\end{equation}
Here,~$\bar{X}^-$ and~$\bar{X}^+$ are defined in Eq.~\eqref{eq:Mean of a population} and are calculated also using the estimated standard deviation~$\hat{s}_{\hat{\ell}}(\pi_{\text{test}})$.
\end{lemma}

\begin{proof}
We treat the executed path's expected length as random variable and recall that~$\hat{\ell}(\pi_{\text{test}})$ and~$\hat{s}_{\hat{\ell}}(\pi_{\text{test}})$ are the average and standard deviation of~$\pi_{\text{test}}$'s length given~$m$ samples.
Thus, Eq.~\eqref{eq:CI-length-lemma1} is an immediate application of Eq.~\eqref{eq:Mean of a population-base}.
\end{proof}

\begin{lemma}[\parbox{0.58\linewidth}{\centering {Bounding the variance of \\ executed path's expected length}}]
\label{lemma:Bounding executed path's expected variance length}
~
Let~$\hat{s}_{\hat{\ell}}(\pi_{\text{test}})$ be the standard deviation of~$m$ execution paths following the command-path~$\pi_{\text{test}}$.
Then, for any desired CL of~$1 - \alpha$, with~$\alpha \in [0,1]$, 
the variance of the expected path length~$\bar{s}_{\hat{\ell}}^2$ is bounded such that: 
\begin{equation}\label{eq:CI-length-variance}
    \bar{s}_{\hat{\ell}}^2(\pi_{\text{test}}) \in \left[\bar{s}^-(\hat{s}_{\hat{\ell}}(\pi_{\text{test}}),m,\alpha), \bar{s}^+(\hat{s}_{\hat{\ell}}(\pi_{\text{test}}),m,\alpha) \right].
\end{equation}
Here,~$\bar{s}^-$ and~$\bar{s}^+$ are defined in Eq.~\eqref{eq:variance_bound_Sheskin}.
\end{lemma}

\begin{proof}
We treat the executed path's expected length as a random variable and recall that~$\hat{s}_{\hat{\ell}}(\pi_{\text{test}})$ is the standard deviation of~$\pi_{u}$'s length given~$m$ samples.
Thus, Eq.~\eqref{eq:CI-length-variance} is an immediate application of Eq.~\eqref{eq:variance_bound_Sheskin}.
\end{proof}

 \vspace{2mm}
\begin{lemma}[\parbox{0.48\linewidth}{\centering {Bounding executed path's \\ expected possible length}}]
\label{lemma:Bounding executed path's expected possible length}
~ 

Let~$\hat{\ell}(\pi_{\text{test}})$ and~$\hat{s}_{\hat{\ell}}(\pi_{\text{test}})$ be the average and standard deviation of~$m$ execution paths following the command-path~$\pi_{\text{test}}$, respectively.
In addition, let~$\bar{X}^-(\hat{\ell}(\pi_{\text{test}}),m,\alpha)$ and~$\bar{X}^+(\hat{\ell}(\pi_{\text{test}}),m,\alpha)$ be the CI bound of~$\hat{\ell}(\pi_u)$ (see Lemma~\ref{lemma:Bounding executed path's expected length}). 
Similarly, let~$\bar{s}^-(\hat{s}_{\hat{\ell}}(\pi_{\text{test}}),m,\alpha)$ and~$\bar{s}^+(\hat{s}_{\hat{\ell}}(\pi_{\text{test}}),m,\alpha)$ be the CI bound of~$\hat{s}_{\hat{\ell}}(\pi_{\text{test}})$ (see Lemma~\ref{lemma:Bounding executed path's expected variance length}). 
Then, for any desired CL of~$1 - \alpha$, with~$\alpha \in [0,1]$, we have a safety probability~$p_{\text{sig lvl}}(n)$ that any possible value of the path length will be within the following CI:
\begin{equation}\label{eq:CI_sigma_level_possible}
\begin{split}
    \text{Pr}[X] \in 
    ([ &\bar{X}^-(\hat{\ell}(\pi_{\text{test}}),m,\alpha) - n \bar{s}^+(\hat{s}_{\hat{\ell}}(\pi_{\text{test}}),m,\alpha), \\
    &\bar{X}^+(\hat{\ell}(\pi_{\text{test}}),m,\alpha) + n \bar{s}^+(\hat{s}_{\hat{\ell}}(\pi_{\text{test}}),m,\alpha)
    ])\\
    &\geq p_{\text{sig lvl}}(n).
\end{split}
\end{equation}
Where~$n$ is the sigma level and~$p_{\text{sig lvl}}(n)$ is the safety probability as detailed in the explanation of Eq.\eqref{eq:CI_sigma_level}.
\end{lemma}

\vspace{2mm}
\begin{proof}
We treat the executed path's expected length as a random variable and recall that~$\hat{\ell}(\pi_{\text{test}})$ and~$\hat{s}_{\hat{\ell}}(\pi_{\text{test}})$ are the average and standard deviation of~$\pi_{\text{test}}$'s~$m$ samples respectively.
Thus, Eq.~\eqref{eq:CI_sigma_level_possible} is an immediate application of Eq.~\eqref{eq:CI_sigma_level}.
\end{proof}
}

\ignore{
\subsection{Choosing~$m$,~$\kappa$, and~$\rho_\text{coll}$}
\label{subsec:Choosing m kappa and rho_coll}

\begin{figure}[tb]
    \centering
    \begin{subfigure}[tb]{0.235\textwidth}
        \centering
        \includegraphics[width=\textwidth]{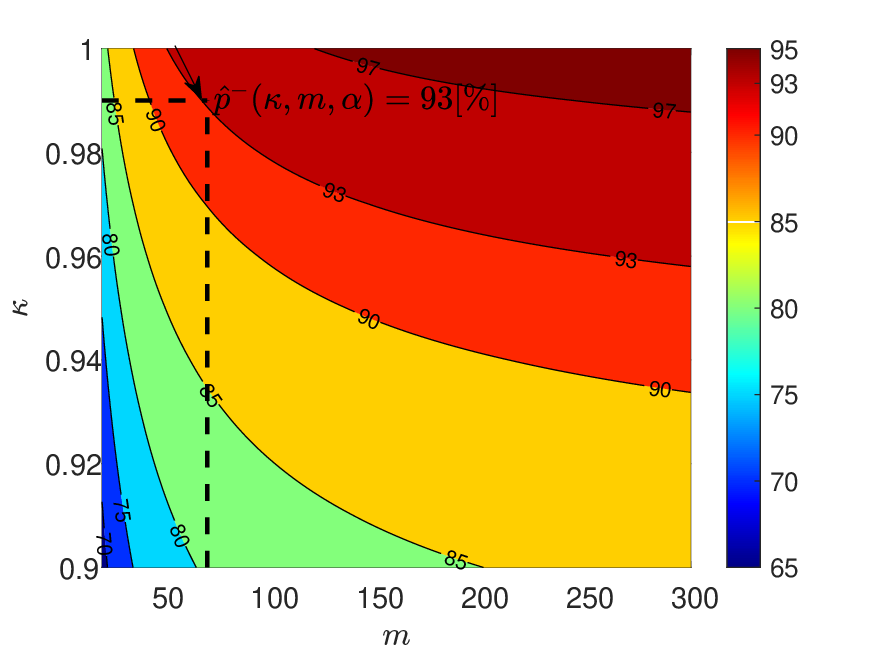}
        \caption{}
        \label{subfig:CI lower bound of the POI coverage}
    \end{subfigure}
    \begin{subfigure}[tb]{0.235\textwidth}
        \centering
        \includegraphics[width=\textwidth]{figs/paretoCICollision.eps}
        \caption{}
        \label{subfig:paretoCICollision}
    \end{subfigure}
    \caption{
    (\subref{subfig:CI lower bound of the POI coverage}), \protect (\subref{subfig:paretoCICollision})
    Values (in percentage) 
    of~$\hat{p}^-(\kappa,m,\alpha)$ 
    and~$\hat{p}^+(\rho_\text{coll},m,\alpha)$    
    for~$\alpha = 0.05$
    as a function of 
    the number of MC samples~$m$ ($x$-axis) and
    the coverage approximation factors~$\kappa$ and~$\rho_\text{coll}$
    ($y$-axis), respectively.
    }
    \label{fig:params}
\end{figure}

\begin{color}{red}
  In the previous section, we demonstrated that the statistical guarantees presented in Sec.~\ref{subsec:Monte-Carlo methods statistical tests} can be used to bound the command path computed by \irisuu if we consider also the Bonferroni correction.
In this section, we use them in order to understand the relationship between the system's parameters~$m$,~$\kappa$, and~$\rho_\text{coll}$ and as \emph{guidelines} on how to choose them.  
\end{color}


\vspace{2mm} 
\paragraph*{Lemma~\ref{lemma:Bounding executed path's sub-optimal coverage}---implications}
Recall that Lemma~\ref{lemma:Bounding executed path's sub-optimal coverage} states that for any desired CL, and regardless of the values of~$m$ and~$\kappa$, the minimum lower bound on the executed path's coverage is 
$
 \min_{\hat{\rm{IPV}}(\pi)} \vert \bar{\S}(\pi) \vert^- 
    = k \cdot \hat{p}^-(\kappa,m,\alpha)
$.
This allows us to provide guidelines on how to choose the algorithm's parameters~$m$ and~$\kappa$ according to the desired CL which is application specific.
As an example, in Fig.~\ref{subfig:CI lower bound of the POI coverage} we plot~$\hat{p}^-(\kappa,m,\alpha)$ for~$\alpha = 0.05$ for various values of~$m$ and~$\kappa$.
Now, consider a user requirement that the POI coverage of the executed path will exceed~$93\%$ with a CL of~$95\%$ (i.e.,~$\alpha = 0.05)$.
This corresponds to choosing any point on the line of~$93\%$ which can be, e.g.,~$m=70$ and~$\kappa = 0.99$ or~$m=95$ and~$\kappa=0.98$.

\ignore{
\begin{figure}[tb]
    \centering
     \includegraphics[width=0.5\textwidth]{figs/paretoCI.eps}
     \caption{
     Values (in percentage) of~$\hat{p}^-(\kappa,m,\alpha)$ for~$\alpha = 0.05$
     as a function of 
     the number of MC samples~$m$ ($x$-axis) and
     the coverage approximation factor~$\kappa$ ($y$-axis).
    }
\label{fig:CI lower bound of the POI coverage}
\end{figure}
}

\vspace{2mm} 
\paragraph*{Lemma~\ref{lemma:Bounding executed path's collision}---implications}
Similar to Lemma~\ref{lemma:Bounding executed path's sub-optimal coverage}, Lemma~\ref{lemma:Bounding executed path's collision} can be used as a guideline on how to choose the algorithm's parameters~$m$ and~$\rho_\text{coll}$
according to the desired CL.
As an example, in Fig.~\ref{subfig:paretoCICollision} we plot~$\hat{p}^+(\rho_\text{coll},m,\alpha)$ for~$\alpha=0.05$ and various values of~$m$ and~$\rho_\text{coll}$.
Now, consider a user requirement that the executed path's collision probability does not exceed~$7\%$ with a CL of~$95\%$ (i.e.,~$\alpha=0.05$). This can be achieved by selecting a point on the~$7\%$ line, for instance,~$m=94$ and~$\rho_\text{coll}=0.02$.

\ignore{
\begin{figure}[tb]
    \centering
     \includegraphics[width=0.5\textwidth]{figs/paretoCICollision.eps}
     \caption{Values (in percentage) of~$\hat{p}^+(\rho_\text{coll},m,\alpha)$ for~$\alpha = 0.05$ as a function of 
     the number of MC samples~$m$ ($x$-axis) and
     the coverage approximation factor~$\rho_\text{coll}$.}
\label{fig:paretoCICollision}
\end{figure}
}

\paragraph*{Lemma~\ref{lemma:Bounding executed path's expected length}---note}
Both Lemma~\ref{lemma:Bounding executed path's sub-optimal coverage} and 
Lemma~\ref{lemma:Bounding executed path's collision} gave clear guidelines on how to choose algorithms parameters for desired confidence levels.
This was possible because there exists a bound on the best possible outcome (i.e.,~$100\%$ coverage of POIs and ~$0\%$ collision probability) and the system parameters~$\kappa$ and~$\rho_\text{coll}$ are defined with respect to these bounds.
In contrast, there is no a-priory bound on the path length and the system parameter~$\eps$ is only defined with respect to the (unknown) optimal length~(i.e.,~$\hat{\ell}_u~\leq~(1+\varepsilon)~\cdot~\tilde{\ell}_u$).
%
}

\section{Illustrative Scenario}
\label{Sec:DemoScenario}

In this section, we demonstrate the performance of \irisuu in a toy scenario using a simple motion model. 
We start (Sec.~\ref{subsce:toy scenario with a simple motion model}) by describing the setting 
and continue to describe the methods we will be comparing \irisuu with (Sec.~\ref{subsec:The compared methods}).
We  finish with a discussion of the results and their implications (Sec.~\ref{subsec:Result and implications of the toy scenario}).
All tests were run on an Intel(R) Core(TM) i7-4510U CPU @ 2.00GHz with 12GB of RAM.
The implementation \irisuu algorithm is available at 
\href{https://github.com/CRL-Technion/IRIS-UU.git}{https://github.com/CRL-Technion/IRIS-UU.git}.

\subsection{Setting}
\label{subsce:toy scenario with a simple motion model}
Here we consider the toy scenario depicted in Fig.~\ref{subfig:simpleScenario_illustration} in which we have a two-dimensional workspace that 
contains~$10$ obstacles (red rectangles) 
and~$27$ POIs (red points in seven groups of three).
The robot is described by three degrees of freedom---its location~$(x,y)$ and heading~$\psi$.
We model its sensor as having a field-of-view of~$94^\circ$ and a range of~$10[m]$.

We choose a roadmap~$\G$ with~$27$ vertices~(where vertex~$0$ is the initial vertex) such that the configuration associated with every vertex is facing up towards the POIs. 
We note that in the toy scenario, we hand-pick the roadmap only for illustrative purposes. In practical settings the roadmap would be generated by the systematic approach of  \iris  (as in the following section). 
The first three POIs can be seen from vertices~$0,9$ and~$18$.
Similarly, the next three POIs can be seen  can be seen from vertices~$1,10$ and~$19$ and so on.

Finally, we assume that the environment contains two types of regions corresponding to different levels of uncertainty (to be explained shortly): the first level (pink, containing vertices~$0-8$) with a standard deviation of~$\sigma = 3[m]$ and the second level (light blue, containing vertices~$9-26$) with~$\sigma = 1[m]$.

The motion model we use here, denoted as~$\Msimple^{2d}$, is an extension of~$\Mexample$ (Section~\ref{subsec:Running-example}) where the parameters $\Nloc:=(r,\theta)$ are drawn from the distribution $\Dloc$ such that~$r = |\mathcal{N}(0,1)|$ if the robot is located in low (pink) uncertainty region and $r = |\mathcal{N}(0,3)|$ if the robot is located in high (light blue) uncertainty region. Additionally~$\theta = \mathcal{N}(0,2\pi)$.
Just as in~$\Mexample$, given a command path~$\pi^c$ with $n>1$ configurations~$
\pi^c = \{ \langle x_1^c,y_1^c \rangle, \ldots \langle x_n^c,y_n^c \rangle \}
$,
the corresponding executed path is~$\pi_e = 
\{ \langle x_1^e,y_1^e \rangle, \ldots \langle x_n^e,y_n^e \rangle \}
$
such that~$x_i^e = x_i^c + r \cdot \cos \theta$ and~$y_i^e = y_i^c + r \cdot \sin \theta$ for $i>1$.

\ignore{
\begin{figure}[tb]
    \centering
     \includegraphics[width=0.5\textwidth]{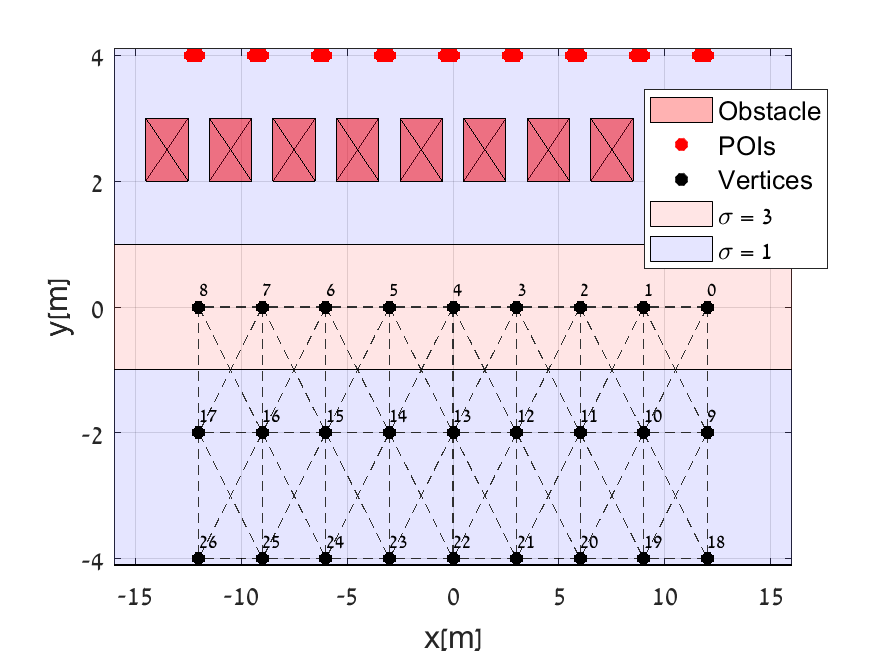}
    \caption{
    Toy scenario for empirical evaluation.}
     \label{fig:simpleScenario_illustration}
\end{figure}
}

\begin{figure}[tb]
    \centering
    \begin{subfigure}[tb]{0.235\textwidth}
        \centering
        \includegraphics[width=\textwidth]{figs/SimpleScenario.eps}
        \caption{}
        \label{subfig:simpleScenario_illustration}
    \end{subfigure}
    \hfill
    \begin{subfigure}[tb]{0.235\textwidth}
        \centering
        \includegraphics[width=\textwidth]{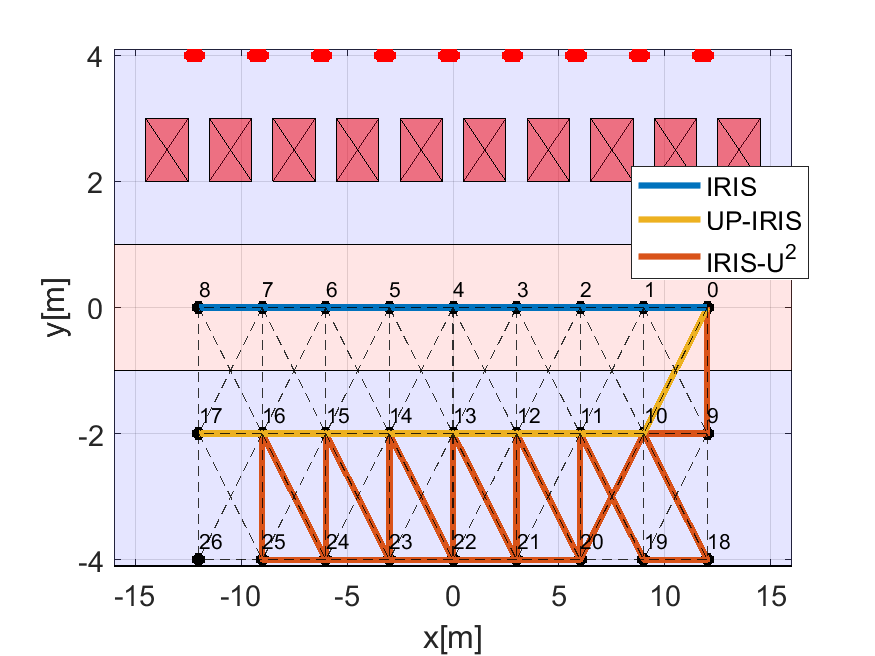}
        \caption{}
        \label{subfig:toy scenario path compared methods}
    \end{subfigure}
    \caption{
    (\subref{subfig:simpleScenario_illustration})
    Illustrative scenario for empirical evaluation depicting a roadmap, (black points being vertices and dashed lines being edges), POIs (red points), obstacles (red rectangles) and regions with high and low uncertainty.
    (\subref{subfig:toy scenario path compared methods})~Command paths computed by \iris (blue), \lum (yellow), and \irisuu (orange) with~$m=100$.
    }
\end{figure}

\ignore{
In our simple motion model denoted as~$\Msimple$, we assume that the path~$\pi^e$ obtained by executing a command path~$\pi^c$
is defined as follows:
there is uncertainty with respect to the location of~$\pi^e$'s vertices but not with regards to their heading.
In particular, let~$d_{x_i} = r \cdot \cos \theta$ and
$d_{y_i} = r \cdot \sin \theta$ be the location error of specific configuration~$i$, where,~$r\sim\mathcal{N}(0,\sigma)$, and~$\theta \sim \mathcal{N}(0,2\pi)$.
Here~$\sigma$ is equal~$1$ or~$3$ depending on the vertex location.
Then, the location uncertainty of the execution path~$ \{ \langle x_1^e,y_1^e \rangle; \langle x_2^e,y_2^e \rangle; \ldots \}$ of following the command path~$ \{ \langle x_1^c,y_1^c \rangle; \langle x_2^c,y_2^c \rangle; \ldots \}$ is defined such that~$x_i^e = x_i^c + d_{x_i}$ and~$y_i^e = y_i^c + d_{y_i}$.
}

\subsection{Baselines}
\label{subsec:The compared methods}
We consider two baselines---the original \iris algorithm and a straw-man approach which we call uncertainty-penalizing \iris (\lum).
In \lum, the cost of an edge is its length added to a penalty factor which is proportional to the uncertainty associated with the edge.
As the original \iris minimizes path length, this modification will compute paths that are both short and have low uncertainty.
%

\ignore{
We consider two baselines---the original \iris algorithm and a straw-man approach which we call localization uncertainty minimization (\lum).
In \lum, we attempt to maximize the POI coverage by minimizing the localization uncertainty of the command path and \emph{not} account directly for inspection uncertainty.
Here, \lum can be seen as an intermediate between \iris and \irisuu.
Specifically, we implemented the \lum algorithm in a manner similar to the \iris, with one key modification. In \lum, the algorithm computes a path exclusively through regions of low uncertainty, specifically those with a~$\sigma=1$. That is, the modified \iris version completely avoids vertices and edges with uncertainty greater than $\sigma=1$, unless this is the first edge along the path (this is because we assume that there is no uncertainty in the first vertex due to the manual positioning of the UAV). This restriction enables \lum to effectively minimize the localization uncertainty experienced during the path execution in the toy scenario. We also experimented with an alternative to \lum which penalizes regions of higher uncertainty through additional cost incurred during the \iris search, which lead to similar behavior to \lum. 
}


For each algorithm, we compute a command path~$\pi^c$ and compare performance in the execution phase.
In preparation for the results, we highlight the distinction between MC samples used in planning and in execution. 
Planning MC samples are used by \irisuu to compute the command-path (referred to as~$m$). 
In contrast, execution MC samples do not affect the command-path and are only used to evaluate the performance of the executed path and \irisuu's CI boundaries.

\begin{figure*}[t]
    \centering
    \begin{subfigure}[tb]{0.49\textwidth}
        \centering
        \includegraphics[width=\textwidth]{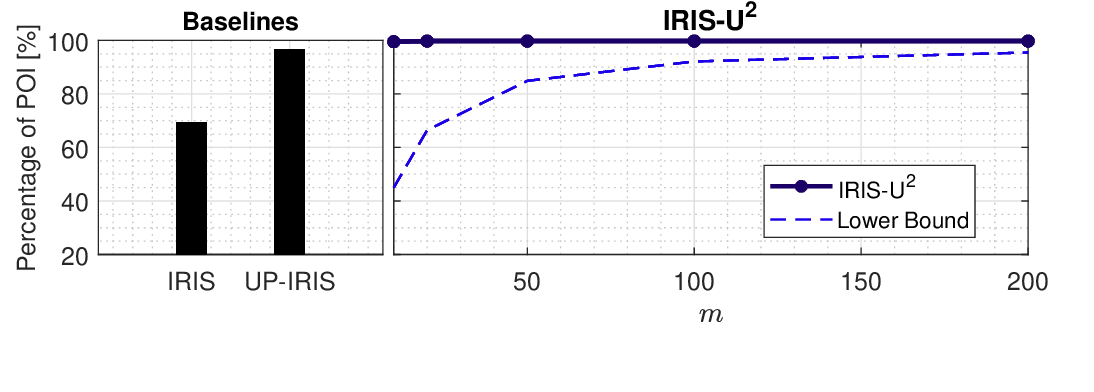}
        \caption{}
        \label{fig:sub1-poi-coverage-toy}
    \end{subfigure}
    \hfill
    \begin{subfigure}[tb]{0.49\textwidth}
        \centering
        \includegraphics[width=\textwidth]{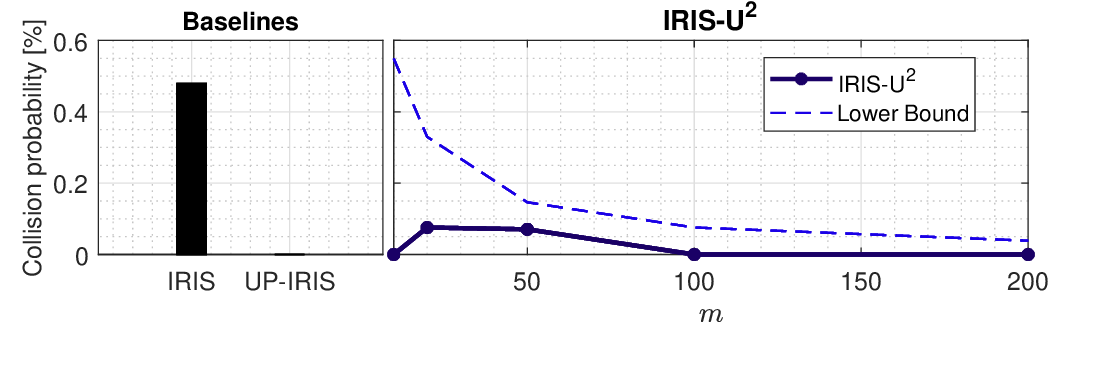}
        \caption{}
        \label{fig:sub2-coll-prob-toy}
    \end{subfigure}

    \vskip\baselineskip

    \begin{subfigure}[tb]{0.49\textwidth}
        \centering
        \includegraphics[width=\textwidth]{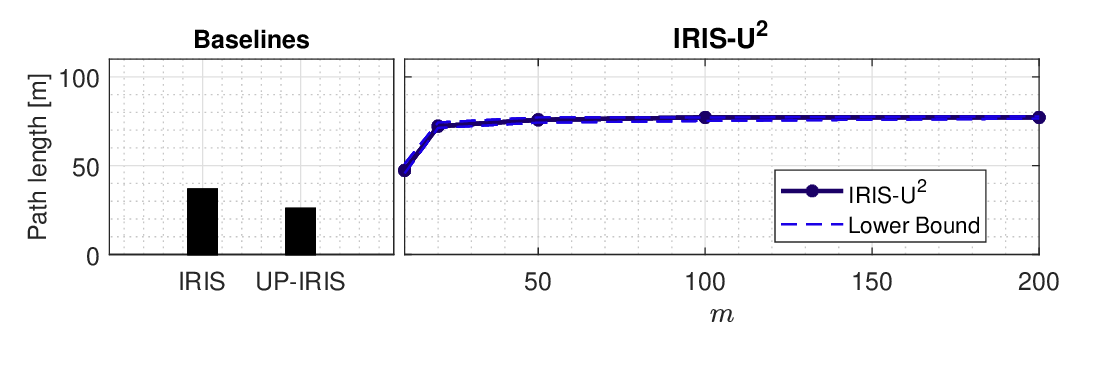}
        \caption{}
        \label{fig:sub3-path-length-toy}
    \end{subfigure}
    \hfill
    \begin{subfigure}[tb]{0.49\textwidth}
        \centering
        \includegraphics[width=\textwidth]{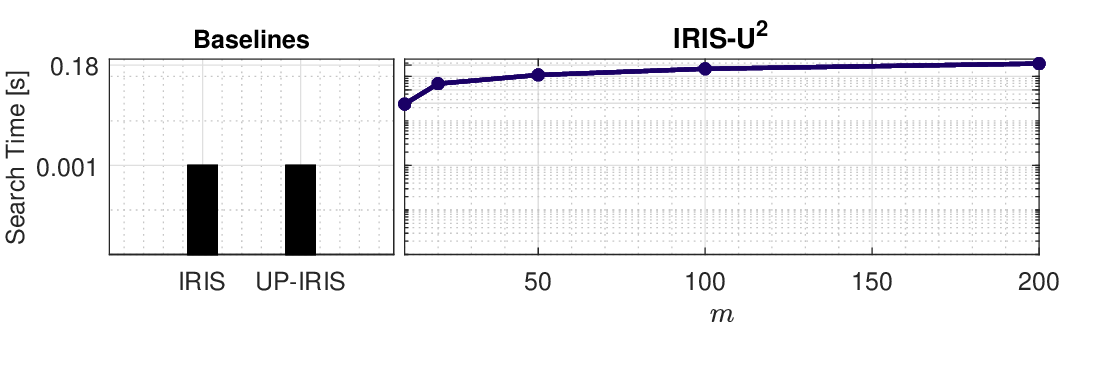}
        \caption{}
        \label{fig:sub4-search-time-toy}
    \end{subfigure}
    \vskip\baselineskip
    \caption{
    \protect 
    (\subref{fig:sub1-poi-coverage-toy})-\protect (\subref{fig:sub4-search-time-toy})
    POI coverage, collision probability,  path length, and search time as a function of $m$
    for \irisuu (right) and for the baselines  (left) for the illustrative scenario.
    Each of figures \protect (\subref{fig:sub1-poi-coverage-toy})-\protect (\subref{fig:sub3-path-length-toy}) displays the average of~$10,000$ MC execution samples, with the corresponding CI of the expected performance as detailed in Lemma.~\ref{lemma:Bounding executed path's expected coverage},~\ref{lemma:Bounding executed path's collision}, and~\ref{lemma:Bounding executed path's expected length}.
    }
    \label{fig:four_subfigures}
\end{figure*}

\subsection{Results}
\label{subsec:Result and implications of the toy scenario}
To compare \irisuu with our baselines, \iris and \lum, we ran the planning phase with~$\varepsilon=3$ and~$\kappa=0.99$. 
As expected, \iris (without accounting for uncertainty) computed a command path which is the straight line connecting vertices~$0$ and~$8$ (blue path in Fig.~\ref{subfig:toy scenario path compared methods}) as it is the shortest path that results in full coverage (when ignoring uncertainty).
\lum  on the toy scenario computes a command path (depicted as the yellow path in Fig.~\ref{subfig:toy scenario path compared methods}) which prioritizes regions with low localization uncertainty. This path moves towards vertex~$10$ and then proceeds in a straight line towards vertex~$17$ while staying within the low-uncertainty region.
Similarly, the command path computed by \irisuu (represented by the red path in Fig.~\ref{subfig:toy scenario path compared methods}) also prioritizes regions with low localization uncertainty, even though this is not its explicit objective. However, instead of simply traversing through the low-uncertainty region once, \irisuu revisits multiple vertices to ensure full coverage.

Numerical results concerning these command paths in the execution phase are depicted in Fig.~\ref{fig:four_subfigures}.
\iris and \lum
achieved an average coverage of~$69\%$ and~$96\%$, respectively. 
Both being lower than the at least~$99\%$ achieved by  \irisuu for all values of $m$.
Additionally, the collision probability of \iris's path was found to be~$47\%$ as opposed to the collision-free execution path of \lum and \irisuu (for $m\geq 100$).

When looking at the performance of \irisuu as a function of the number of MC samples~$m$, one can see (as to be expected) that the expected POI covered increases up to~$99\%$ and that the collision probability decreases down to~$0\%$ as~$m$ increases
(Fig.~\ref{fig:sub1-poi-coverage-toy} and~\ref{fig:sub2-coll-prob-toy}, respectively).
This comes at the price of longer paths and longer computation times 
(Fig.~\ref{fig:sub3-path-length-toy} and~\ref{fig:sub4-search-time-toy}, respectively).

Note that  in all cases, the CI bounds hold empirically.
This is important as our analysis relies on the  fact that the probability of inspecting each POI is independent of other POIs (Assumption~\ref{ass:iid}) which may not necessarily hold.

\ignore{
\begin{figure}[tb]
    \centering
     \includegraphics[width=0.5\textwidth]{figs/SimpleScenario-ResultPaths.eps}
     \caption{Command paths computed by \iris (blue), \lum (yellow) and \irisuu (orange) in the toy scenario.}
\label{fig:toy scenario path compared methods}
\end{figure}
}


\section{Bridge Scenario}
\label{Sec:BridgeScenario}


\check{In the following set of experiments, we consider a realistic bridge-inspection scenario to assess the performance of \irisuu. 
The evaluation consists of two parts wherein both parts use a well-established approximation of the full motion model, which we call the \emph{simplified model}, in the planning stage. 
In the first part, we use the same simplified motion model to evaluate the command path in the execution phase. This experiment demonstrates our statistical bounds (which hold under the assumption that the planning-stage model is accurate). 
In the second part, we use a more accurate model to evaluate the command path in the execution phase. In this model, which is much more computationally demanding, uncertainty accumulates in GNSS-denied regions. This part aims to demonstrate the performance of our algorithm on a more realistic motion model and  explores the implications of having a mismatch between the planning and execution models (both of which accumulate errors). In our setting, the mismatch arises due to computational considerations (see discussion below). We first describe below the scenario and then proceed to the experimental results for each of the motion models.}

\subsection{Setting}
We consider a UAV with six degrees of freedom corresponding to its location~$(x,y,z)\in \mathbb{R}^3$ and its orientation~$(\phi,\theta,\psi)\in \mathbb{R}^3$.
We model the sensor as having a field-of-view of~$94^\circ$ and a range of~$10m$.
We use the 3D model of a bridge\footnote{Model taken from~\href{https://github.com/UNC-Robotics/IRIS}{https://github.com/UNC-Robotics/IRIS}} as depicted in Fig.~\ref{fig:bridgeScenario} and set the values~$\varepsilon=3$ and~$\kappa=0.9$. 
Following~\cite{DBLP:journals/ijrr/FuKSA23} the roadmap~$\G$ was generated using an RRG~\cite{Karaman2011_IJRR} with~$100$ vertices and~$1224$ edges. 

\begin{figure}[tb]
          \includegraphics[width=0.5\textwidth]{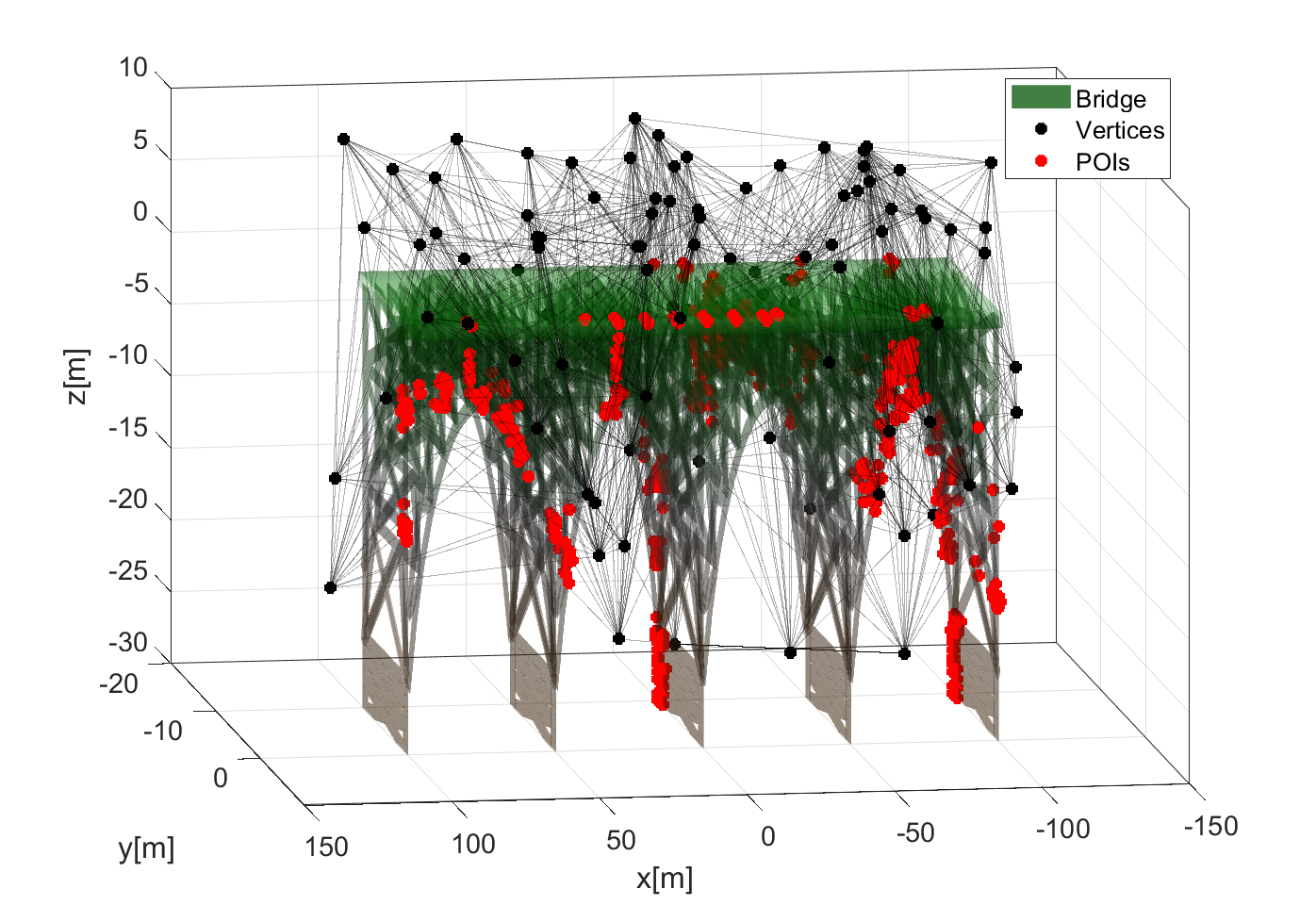}
        \caption{ Bridge scenario with~$533$ POIs (marked in red). 
        Search is performed on a roadmap~$\G$ (marked in black).
        \label{fig:bridgeScenario}
   }    
\end{figure}

\ignore{
\paragraph{Simplified motion model}
We extend the simple motion model~$\Msimple^{2d}$ described in Section~\ref{subsce:toy scenario with a simple motion model} to the 3D setting, which we denote by $\Msimple^{3d}$.
Namely, now we sample normally in a sphere around the given location.
Specifically, the position error of a specific configuration~$(x_i,y_i,z_i,\phi_i,\theta_i,\psi_i)\in \mathbb{R}^6$ is given by~$d_{x_i} = r \cdot \cos \theta \cos \psi$,~$d_{y_i} = r \cdot \cos \theta \sin \psi$, and~$d_{z_i} = -r \sin \theta$, where~$r \sim |\mathcal{N}(0,\sigma)|$,~$\theta \sim \mathcal{N}(0,2\pi)$, and $\psi = \mathcal{N}(0,2\pi)$, where the value~$\sigma$ is location dependent.
The location uncertainty of an execution path,~${ \langle x_1^e,y_1^e,z_1^e \rangle; \langle x_2^e,y_2^e,z_1^e \rangle; \ldots }$, when following the command path,~${ \langle x_1^c,y_1^c,z_1^c \rangle; \langle x_2^c,y_2^c,z_2^c \rangle; \ldots }$, is defined such that (i)~$x_i^e = x_i^c + d_{x_i}$, (ii)~$y_i^e = y_i^c + d_{y_i}$, and (iii)~$z_i^e = z_i^c + d_{z_i}$.
We set~$\sigma$  to be either~$1$ or~$3$ corresponding to low and high uncertainty regions, respectively.
Specifically, the high uncertainty region is motivated by GNSS outages which typically occur under bridges.
}

\paragraph{Exact motion model}
We make use of a highly realistic model, which we denote by $\Mexact$ which requires fusing measurements from the GNSS and the inertial navigation system~(INS) via an EKF~\cite{bookGroves}, known as inertial navigation system GNSS-INS fusion.
%
\check{Our implementation\footnote{See \url{https://github.com/CRL-Technion/Simulator-IRIS-UU.git}} of $\Mexact$ which uses an EKF is based on the quadcopter dynamics of ArduCopter\footnote{See \url{https://wilselby.com/research/arducopter/}}, an open-sourced quadrotor system with an adaptation of the EKF fusion.}
%
%
\paragraph{Simplified motion model}
During the planning stage, we consider a computationally efficient and well-established approximation model from~\cite{bookGroves}, which we denote by $\Msimplified$. Indeed, our experimental results below demonstrate that~$\Msimplified$ serves as a good proxy for $\Mexact$ as the predicted behavior during the planning stage correlates with the execution behavior.
Specifically, in our model~$\Msimplified$, we assume that outside the bridge, the GNSS-INS system can achieve an accuracy with an error of~$1\sigma$ (in meters) around a vertex location.
In contrast, beneath the bridge, where GNSS signal reception is typically compromised, we assume accumulating uncertainty over time.
For this region, we adopt a model~\cite{bookGroves} that assumes that the robot moves at a constant speed in a straight line between nodes. Specifically, the location uncertainty of an execution path,~${ \langle x_1^e,y_1^e,z_1^e \rangle; \langle x_2^e,y_2^e,z_1^e \rangle; \ldots }$, when following the command path,~${ \langle x_1^c,y_1^c,z_1^c \rangle; \langle x_2^c,y_2^c,z_2^c \rangle; \ldots }$, can be expressed as follows:
\begin{equation}\label{eq:qt_wden}
[x_i^e,y_i^e,z_i^e] = [x_i^c,y_i^c,z_i^c] +\frac{1}{2}\bm{C_{b}} \vec{b_a} t^2 + \frac{1}{6}\bm{C_{b}} (\vec{b_g} \times \vec{g}) t^3.
\end{equation}
Here,~$\vec{g} = \left( 0,0,-g \right)$ is the gravity vector,~$\vec{b_a}$ and ~$\vec{b_g}$ are the accelerometer and gyro biases. 
Additionally,~$C_{b}$ denotes the rotation matrix transforming from the body frame to the inertial frame, and~$t$ denotes the continuous time spent in GNSS outage regions (beneath the bridge).

\ignore{\check{Unfortunately, a straightforward implementation of such a model within our Matlab code base leads to overly long planning times for \irisuu. Although a more efficient implementation of the model can be achieved in C++, it is outside of the scope of our current work. Thus, we consider this full model only during execution to evaluate our approach.}  \kiril{Can we say a few words about why the C++ solution would be much faster?}

\check{During the planning stage we consider instead a computationally efficient and well-established approximation of the full model~\cite{bookGroves}, which we denote by $\Madvanced$. \kiril{Mention that this is true under the assumption of constant speed and straight line.} Indeed, our experimental results below demonstrate that $\Madvanced$ serves as a good proxy for $\mathcal{M}_{\text{full}}$ as the predicted behavior during the planning stage correlates with the execution behavior.}
}

\ignore{Specifically, in our model~$\Madvanced$, we assume that the GNSS-INS system can achieve the same level of accuracy as in the simple motion model described in Sec.~\ref{subsec:Planning with simple motion model}.
Underneath the bridge, where GNSS signal reception is typically compromised, the uncertainty accumulates over time.
For this region, we adopt a simplified motion model where the robot moves at a constant speed in a straight line between nodes. Following~\cite{bookGroves}, the location uncertainty of an execution path,~${ \langle x_1^e,y_1^e,z_1^e \rangle; \langle x_2^e,y_2^e,z_1^e \rangle; \ldots }$, when following the command path,~${ \langle x_1^c,y_1^c,z_1^c \rangle; \langle x_2^c,y_2^c,z_2^c \rangle; \ldots }$, can be expressed as follows:
\begin{equation}\label{eq:qt_wden}
[x_i^e,y_i^e,z_i^e] = [x_i^c,y_i^c,z_i^c] +\frac{1}{2}\bm{C_{b_i}} \vec{b_a} t^2 + \frac{1}{6}\bm{C_{b_i}} (\vec{b_g} \times \vec{g}) t^3.
\end{equation}
Here,~$\vec{g} = \left( 0,0,-g \right)$ is the gravity vector,~$\vec{b_a}$ and ~$\vec{b_g}$ are the accelerometer and gyro biases. In addition,~${C_{b_i}}$ is the rotation matrix that transforms from the body frame~$b$ to the inertial frame~$i$, and~$t$ is the continuous time spent in the GNSS outages region.
}

\subsection{Evaluation}
\check{Similar to Sec.~\ref{Sec:DemoScenario}, we compare \irisuu with our two baselines \iris and \lum. However, for each of the three algorithms, we consider the two different motion models in the execution phase.
Thus, we use \textsc{ALG-SEM} to refer to the setting where an algorithm $\textsc{ALG} \in \{ \irisuu, \iris, \lum\}$ 
uses~$\Msimplified$ both in the planning and in the execution phase (here, `SEM' refers to Simplified Execution Model).
Similarly, we use \textsc{ALG-EEM} to refer to the setting where an algorithm $\textsc{ALG} \in \{ \irisuu, \iris, \lum\}$ 
uses  $\Msimplified$ in the planning phase while using $\Mexact$ in the execution phase (here, `EEM' refers to Exact Execution Model).
}

\check{Fig.~\ref{fig:DEMOBridge_scenario_Result} contains the coverage, collision probability, path length and search time for the different algorithms.}

\begin{figure*}
    \centering
    \begin{subfigure}[tb]{0.49\textwidth}
        \centering
        \includegraphics[width=\textwidth]{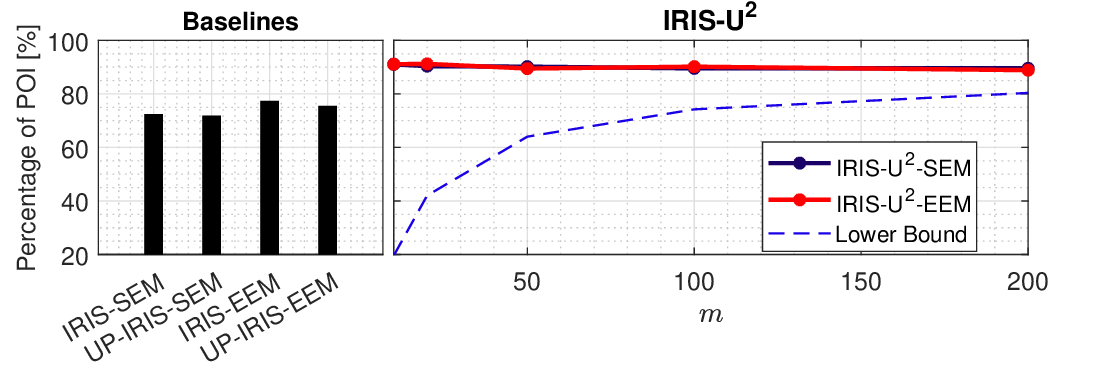}
        \caption{}
        \label{fig:DEMOBridgesub1-poi-coverage}
    \end{subfigure}
    \hfill
    \begin{subfigure}[tb]{0.49\textwidth}
        \centering
        \includegraphics[width=\textwidth]{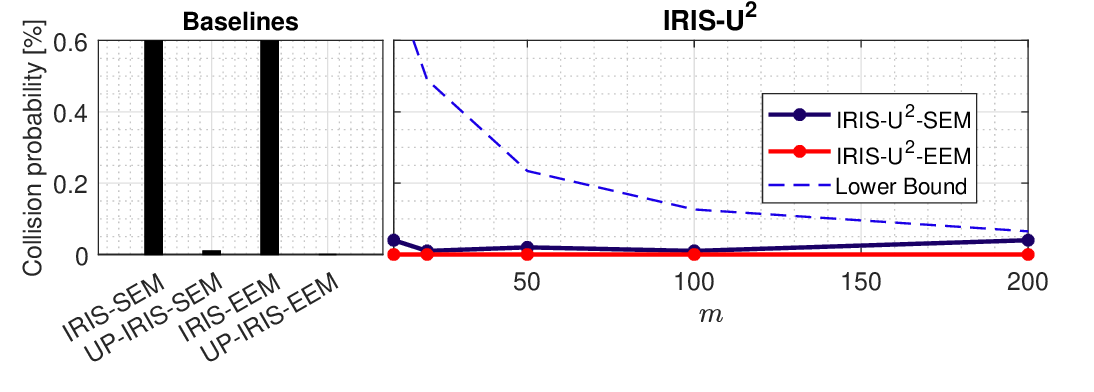}
        \caption{}
        \label{fig:DEMOBridgesub2-coll-prob}
    \end{subfigure}
    
    \vskip\baselineskip

    \begin{subfigure}[tb]{0.49\textwidth}
        \centering
        \includegraphics[width=\textwidth]{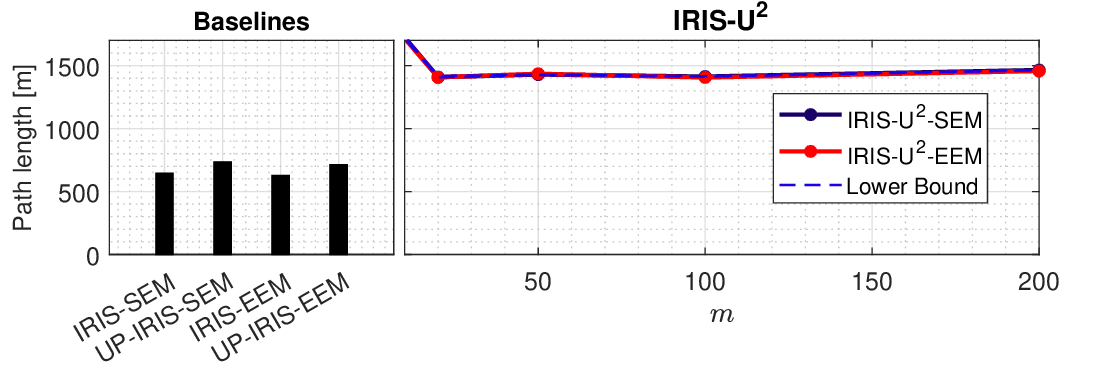}
        \caption{}
        \label{fig:DEMOBridgesub3-path-length}
    \end{subfigure}
    \hfill
    \begin{subfigure}[tb]{0.49\textwidth}
        \centering
        \includegraphics[width=\textwidth]{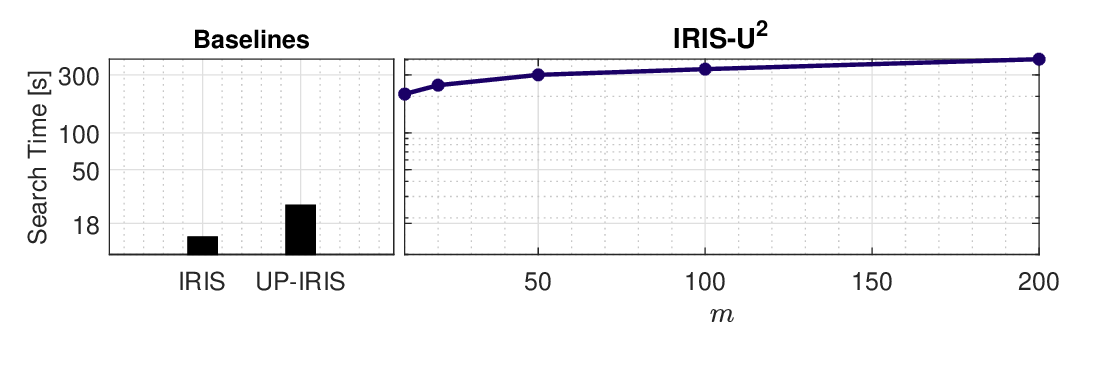}
        \caption{}
        \label{fig:DEMOBridgesub4-search-time}
    \end{subfigure}
       \vskip\baselineskip
    \caption{
    \protect Experimental results for the bridge scenario. (\subref{fig:DEMOBridgesub1-poi-coverage})-\protect (\subref{fig:DEMOBridgesub4-search-time})
    POI coverage, collision probability,  path length, and planning search time as a function of $m$
    for \irisuu (right) both with the baseline performance of \iris and \lum (left).
    Each of figures \protect (\subref{fig:DEMOBridgesub1-poi-coverage})-\protect (\subref{fig:DEMOBridgesub3-path-length}) displays the average of~$100$ MC execution samples (blue dots), corresponding to the CI of the expected performance (red line) as detailed in Lemma.~\ref{lemma:Bounding executed path's expected coverage},~\ref{lemma:Bounding executed path's collision}, and~\ref{lemma:Bounding executed path's expected length}. 
    }
    \label{fig:DEMOBridge_scenario_Result}
\end{figure*}

\check{
\subsubsection{Planning and execution with the same model}}
\check{
\iris achieved an average coverage of~$72\%$, which is significantly lower than the~$91\%$ coverage (more than the value of~$\kappa=0.9$)  achieved by \irisuu even when using only~$m=10$.
Additionally, the path computed by \iris was found to be in collision in $60\%$ of the time as opposed to \irisuu whose path was found to be less than~$4\%$ collision in all tested execution paths.
}

\check{
In the case of \lum, despite its shorter calculation time and the fact that the command path has a low collision probability, it achieved an average coverage of only $71\%$.
This highlights the critical significance of considering uncertainty not only in terms of localization uncertainty, which impacts collision probability and path length but also in projecting its effects on the primary objective of inspecting the POIs. 
}

\check{Finally, note that all statistical guarantees stated in Sec.~\ref{sec:Theoretical guarantees} hold (as expected).}

\check{\subsubsection{Planning and execution with different models}}
\check{
One may expect that paths computed when planning with a simplified model can be very poor in quality (coverage, collision probability and length) when evaluated with an exact motion model.
Despite~$\Msimplified$ being an approximation of~$\Mexact$, results across all algorithms differ only slightly.
}

\check{More importantly, as we use a different execution model from the one used when planning, CI bounds are not guaranteed to hold. However, we can see empirically that all results  \irisuu-\textsc{EEM} falls within the confidence bounds computed for \irisuu-\textsc{SEM}.}

\ignore{
Note that as we use a different execution model from the one used when planning, CI bounds do not necessarily hold.
The CI bound of \irisuu in this scenario may not accurately reflect the true uncertainty during the execution phase. 
Indeed, as depicted in Fig.~\ref{fig:Bridge_scenario_Result}, this results in disparities between the executed path length and the CI bounds of \irisuu.
However, despite using a simplified model, \irisuu is effective in achieving high execution coverage, with CI bounds that appropriately encapsulate this coverage. 

Furthermore, search time is longer here than for the simple motion model as the position error in GNSS outages in this setting is accumulated over time. 
As a result, the performance (i.e., coverage, collision, and path length) at each vertex in these regions depends on the path history and requires more complex updates.
However, we can use the same approach of reducing the search time by using the guidelines described in Sec.~\ref{subsec:Implication of statistical guarantees to irisuu}.
We plot in Fig.~\ref{fig:sub_optimal_bridge_equivalent_kappa} the search time for different values of~$\kappa$ and~$m$ and see similar trends to the results described in Sec.~\ref{subsec:Planning with simple motion model}.
Specifically, by accurately balancing $m$ and~$\kappa$, we can obtain a speedup of up to $\times 4$ and get a path whose coverage is still above~$\hat{p}_{\text{desired}}^-$.
}

\subsubsection{Sensitivity analysis of parameters to runtime}

\begin{figure}[tb]
    \centering
    \includegraphics[width=0.4\textwidth]{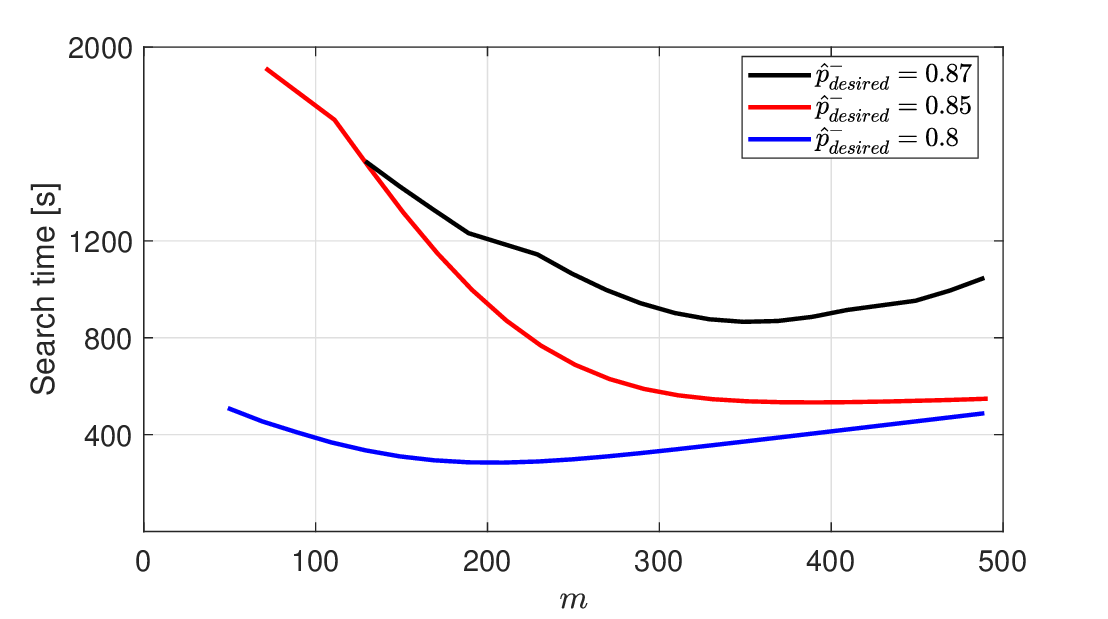}
  \caption{ 
  Search time in the planning phase 
  in the bridge scenario using the simple motion model $\Msimple^{3d}$ as a function of $m$ for several values of~$\hat{p}^-_{\text{desired}}$.
  Here, for each value of~$m$,~$\kappa$ is the smallest value for which~$\hat{p}^-(\kappa,m,\alpha) \geq \hat{p}^-_{\text{desired}}$ (see Fig.~\ref{fig:params}). 
  }
  \label{fig:sub_optimal_demo_bridge_equivalent_kappa_fix}
\end{figure}

To obtain better execution coverage and fewer collisions, \irisuu typically requires longer planning times~(see Fig.~\ref{fig:DEMOBridgesub4-search-time}).
However, as described in Sec.~\ref{subsec:Implication of statistical guarantees to irisuu}, for a desired confidence level, we can choose between several parameters.
We plot in Fig.~\ref{fig:sub_optimal_demo_bridge_equivalent_kappa_fix} the search time for different values of~$\kappa$ and~$m$.
Roughly speaking, increasing~$m$ (and thus decreasing~$\kappa$) reduces computation time. 
This is because higher values of $\kappa$ do not allow \irisuu to subsume nodes and the computational price of maintaining more nodes is typically larger than using more MC samples. However, after a certain number of MC samples is reached, this trend is reversed.
This trade-off is dramatic, 
for example, when considering a desired value of~$\hat{p}^-= 0.85$ 
   (i.e., at least~$85\%$ POIs will be covered for a CL of $1-\alpha = 0.95$), 
the planning times range 
from~$1,912$ seconds for $\langle m, \kappa \rangle  = \langle 71,0.934 \rangle$
to~$533$ seconds for $\langle m, \kappa \rangle  = \langle 371,0.885 \rangle$.

\ignore{
\subsection{Planning with an advanced motion model}
\subsection{Planning and execution with different models}
\label{subsce:A Bridge scenario with a simplified motion model}

In Sec.~\ref{subsec:Planning with simple motion model}, we used a simple motion model both for the planning and the execution phases.
A highly realistic model, which we denote by $\mathcal{M}_{\text{full}}$ requires fusing measurements from the GNSS and \emph{inertial navigation system}~(INS) via an EKF~\cite{bookGroves}, known as inertial navigation system GNSS-INS fusion.
We based our implementation\footnote{See \url{https://github.com/CRL-Technion/Simulator-IRIS-UU.git}} of $\mathcal{M}_{\text{full}}$ on ArduPilot\footnote{See \url{https://wilselby.com/research/arducopter/}}. 
The updated version of the UAV simulator, incorporating the EKF, is available at 
\href{}{https://github.com/CRL-Technion/Simulator-IRIS-UU.git}.

\check{Unfortunately, a straightforward implementation of such a model within our Matlab code base leads to overly long planning times for \irisuu. Although a more efficient implementation of the model can be achieved in C++, it is outside of the scope of our current work. Thus, we consider this full model only during execution to evaluate our approach.}  \kiril{Can we say a few words about why the C++ solution would be much faster?} 
\check{During the planning stage we consider instead a computationally efficient and well-established approximation of the full  model~\cite{bookGroves}, which we denote by $\Madvanced$. \kiril{Mention that this is true under the assumption of constant speed and straight line.} Indeed, our experimental results below demonstrate that $\Madvanced$ serves as a  good proxy for $\mathcal{M}_{\text{full}}$ as the predicted behavior during the planning stage correlates with the execution behavior.}

}

\ignore{
In Sec.~\ref{subsec:Planning with simple motion model} we run the bridge scenario with~$\Msimple$ both for the planning and the execution phase.
However, the true behavior of the UAV in the execution phase is expressed by a UAV simulator (see Appendix~\ref{app:Code resources}) that we denote as~$\mathcal{M}_{\text{full}}$ that allows us to test and evaluate the UAV's execution performance in different scenarios. The simulator employs an Extended Kalman Filter (EKF) motion model that fuses measurements from the GNSS and the INS to provide reliable navigation solutions. However, the~$\mathcal{M}_{\text{full}}$ motion model requires a significant amount of computation time. Therefore, to speed up calculations during the planning phase, we choose to use a simplified GNSS-INS motion model denoted as~$\MsimpleBridge$.

In~$\MsimpleBridge$, we assume that in regions of low uncertainty, the GNSS-INS system can achieve the same level of accuracy as in $\Msimple$. However, underneath the bridge, where GNSS signal reception is typically compromised, the uncertainty accumulates over time.
For this region, we adopt a simplified motion model where the robot moves at a constant speed in a straight line between nodes. According to \cite{bookGroves}, the position error can be expressed as follows:
\begin{equation}\label{eq:qt_wden}
[x_i^e,y_i^e,z_i^e] = [x_i^c,y_i^c,z_i^c] +\frac{1}{2}C_b^i \vec{b_a} t^2 + \frac{1}{6}C_b^i (\vec{b_g} \times \vec{g}) t^3.
\end{equation}
In the equation above,~$\vec{g}$ represents the gravity vector, defined as~$\left( 0,0,-g \right)$,~$C_b^i$ is the rotation matrix that transforms from the body frame to the inertial frame, and~$t$ is the continuous time spent in the GNSS outages region.
}


%

%

\ignore{The execution performance of \iris and \irisuu, is depicted in Fig.~\ref{fig:Bridgesub1-poi-coverage} and~Fig.~\ref{fig:Bridgesub3-path-length}.
The result reveals that \irisuu requires more computation time and a longer path length but produces better performance in terms of coverage and collision probability.
For instance, \iris and \lum achieved an average coverage of~$82\%$ and~$73\%$ respectively, significantly lower than the collision-free path with~$98\%$ coverage achieved by \irisuu with a setting of $m=10$.

Notice that, the CI bounds are not supposed to hold since we use~$\MsimpleBridge$ in the planning while using~$\mathcal{M}_{\text{full}}$ for testing the execution performance. This is the main reason for the discrepancies between the estimated and actual path lengths (see Fig.\ref{fig:Bridgesub3-path-length}).

Here, we can use again the same approach of reducing the search time by maintaining the same value of~$\hat{p}^-{\text{desired}}$ with increasing the value of~$m$ and decreasing the value~$\kappa$.
For instance, the blue line in Fig.~\ref{fig:sub_optimal_bridge_equivalent_kappa} indicates that for~$\hat{p}^-{\text{desired}}=0.91$, the planning times range 
from~$1160$ seconds for $\langle m, \kappa \rangle  = \langle 40,0.999 \rangle$
to~$250$ seconds for $\langle m, \kappa \rangle  = \langle 200,0.95 \rangle$.

Notice that, the search time of~$\MsimpleBridge$ is higher than using the~$\Msimple$.
The reason for this is that the position error of~$\MsimpleBridge$ in the GNSS outages is accumulated over time. As a result, the performance of each vertex in that region (coverage, collision, and path length) depends on the path history and we need to re-calculate the performance (i.e., coverage, collision, and path length) over and over again for the same vertex.
}

\ignore{
\begin{figure}[tb]
    \includegraphics[width=0.5\textwidth]{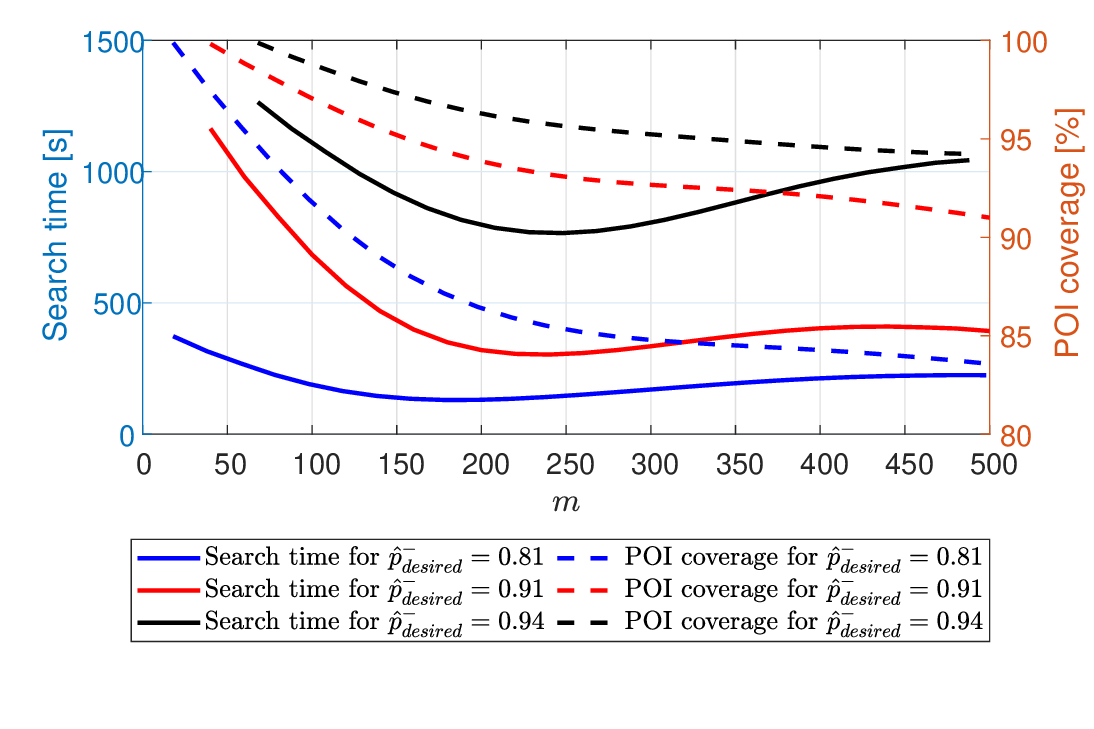}
  \caption{
    Search time in the planning phase (left~$y$-axis) and 
  POI coverage in the execution phase (right~$y$-axis) 
  in the bridge scenario using the simplified motion model as a function of $m$ for several values of~$\hat{p}^-_{\text{desired}}$.
  Here, for each value of $m$, $\kappa$ is the smallest value for which~$\hat{p}^-(\kappa,m,\alpha) \geq \hat{p}^-_{\text{desired}}$ (see Fig.~\ref{fig:params}). 
  }
  \label{fig:sub_optimal_bridge_equivalent_kappa}
\end{figure}
}

\subsubsection{Summary}
We summarize this section with a high-level comparison of the different approaches, also visualized in the accompanying \href{https://crl.cs.technion.ac.il/wp-content/uploads/2024/04/IRIS-UU-SuplumentaryVideo-compressed.mp4}{video}.
As \iris does not account for uncertainty, the executed path tends to miss POIs and may even collide with the bridge.
\lum, on the other hand, prioritizes low-uncertainty regions and hence typically yields a collision-free executed path, albeit it can still miss POIs.
Finally, the planned path of \irisuu is typically longer, often traversing edges several times to ensure that no POIs are missed. This is done while ensuring that the path is collision free, even in regions with high uncertainty.

\section{Conclusion and future work}
\label{Sec:Conclusion and future work}

In this study, we proposed \irisuu, an extension of the \iris offline path-planning algorithm that considers execution uncertainty through the use of MC sampling. 
%
Our empirical results demonstrate that \irisuu provides better performance under uncertainty in terms of coverage and collision while providing statistical guarantees via confidence intervals. In addition, we provide a guideline on how to choose parameters to reduce the computation time based on the statistical guarantees.

However, as we discussed in Sec.~\ref{subsec:Implication of statistical guarantees to irisuu}, the guarantees on the execution path could be affected by a false-negative bias, particularly when there are multiple optional paths in the planning process. As a promising direction, we suggest to explore the Bonferroni correction method~\cite{weisstein2004bonferroni}, and combine it with the information about the number of optional paths considered during planning. By doing so, we can strengthen our CI bounds and enhance the robustness and reliability of our guarantees.

Additionally, recall that we made two key assumptions in this work:
(i)~that inspecting different POIs is i.i.d  (Assumption~\ref{ass:iid})
and
(ii)~that we wish to minimize path length and not energy consumption or mission completion time (see note following Prob.~\ref{prob-1}).
In future work, we plan to relax both assumptions.

Another direction for future research involves exploring alternative methods to decrease search time.
One promising approach is to reduce the number of MC samples by implementing alternative sampling methods such as \emph{Latin hypercube sampling}~(LHS)~\cite{loh1996latin}, which better distributes samples across the parameter space.
The number of samples can be further reduced by leveraging information from the covariance matrix of the EKF (see, e.g.,~\cite{papachristos2019autonomous}), particularly when navigation sensors uniformly cover the entire uncertainty region.

\bibliographystyle{IEEEtran}       
\bibliography{references}


\appendices

\section{Statistical Background}
\label{app:stat}

In this paper we use CI to evaluate two different types of quantities with respect to the behavior of the \irisuu algorithm. The first type is the probability of success for some event, such as the collision probability of a given path being above a certain value.
The second type is the mean of a population, such as the expectation of a path's length.
Calculating the CI for these two quantities is done differently and we now detail each method:

\paragraph{Probability of success}
To calculate the CI of the success probability of a random variable, we  use the Clopper-Pearson method~\cite{habtzghi2014modified}. This method evaluates the maximum likelihood of the probability and its CI assuming a binomial distribution given finite independent trials. 
Specifically, let~$X$ be a random variable whose true unknown probability of success is~$\bar{p}$. 
Let~$x_1, \ldots, x_m$ be the outcome of~$m$ samples drawn from~$X$ (i.e.,~$x_i \in \{0, 1 \}$)
and let~$\hat{p}:= \frac{1}{m} \sum_{i=1}^{i=m} {x_i}$ be the estimated success probability.
Then, according to the Clopper-Pearson method for any~$\alpha \in [0,1]$, we can say with CL of~$1-\alpha$ that~$\bar{p}$, the true unknown success probability of~$X$, is within the following the CI:
\begin{equation}
\bar{p} \in \left[ \hat{p}^-(\hat{p},m,\alpha), \hat{p}^+(\hat{p},m,\alpha) \right],    
\end{equation}
where,
  \begin{subequations}\label{eq:Clopper-Pearson method}
     \begin{align}
            \hat{p}^-(\hat{p},m,\alpha) & :=  
                \frac{
                    \hat{p} \cdot F^{-1}\left( 2\hat{p},2  \lambda^-_{\hat{p},m} ,1-\frac{\alpha}{2} \right)}{
                    \lambda^-_{\hat{p},m}+\hat{p} \cdot F^{-1}\left( 2\hat{p},2\lambda^-_{\hat{p},m},1-\frac{\alpha}{2} \right)},\\
            \hat{p}^+(\hat{p},m,\alpha) &:= 
                \frac{
                    \hat{p}' \cdot F^{-1}\left( 2\hat{p}',2\lambda^+_{\hat{p},m},\frac{\alpha}{2} \right)}
                                {\lambda^+_{\hat{p},m}+\hat{p}' \cdot F^{-1}\left( 2\hat{p}',2\lambda^+_{\hat{p},m},\frac{\alpha}{2} \right)}.
    \end{align}
\end{subequations}
Here,~$F^{-1}$ is the inverse F-distribution function~\cite{knusel1986computation}, 
$\lambda^-_{\hat{p},m}:=m - \hat{p} + 1~$,
$\lambda^+_{\hat{p},m}:=m - \hat{p}~$
and
$\hat{p}' := \hat{p} + 1$.
\paragraph{Mean of a population}
To calculate the CI of the mean of a population, we follow Habtzghi et al.~\cite{habtzghi2014modified}.
Specifically, let~$X$ be some random process with unknown mean value~$\bar{X}$.
In addition, let~$x_1, \ldots, x_m$ be~$m$ be samples drawn from~$X$ and let~$\hat{X}:= \frac{1}{m}\sum_{i=1}^{i=m} {x_i}$ and~$\hat{s}:=\sqrt{\frac{1}{m-1}\sum_{i=1}^{i=m}{\left( x_i - \hat{x} \right)^2}}$ be their estimated mean and standard deviation, respectively.
Then, for any~$\alpha \in [0,1]$ we can say with CL of~$1-\alpha$ that~$\bar{X}$, the true unknown mean value of~$X$, is within the following CI:
\begin{equation}
\label{eq:Mean of a population-base}
\bar{X} \in \left[\bar{X}^-(\hat{X},m,\alpha), \bar{X}^+(\hat{X},m,\alpha) \right].    
\end{equation}
Where,
  \begin{subequations}\label{eq:Mean of a population}
\begin{align}
    \bar{X}^-(\hat{X},m,\alpha) := \hat{X} -  t^*\frac{\hat{s}}{\sqrt{m}}, \\
    \bar{X}^+(\hat{X},m,\alpha) := \hat{X} +  t^*\frac{\hat{s}}{\sqrt{m}}.
\end{align}
\end{subequations}
Here,~$t^*$ is computed according to standard t-tables~\cite{games1977improved}. 

\ignoreStat{
It can be useful to bound not only the mean of the population but also its standard deviation.
Specifically, following Sheskin et al.~\cite{sheskin2011handbook}
for any~$\alpha \in [0,1]$ we can say with CL of~$1-\alpha$ that~$\bar{s}^2$, the squared standard deviation of~$X$, is within the following CI:
\begin{equation}
\label{eq:variance_bound_Sheskin}
    \bar{s}^2 \in \left[\bar{s}^-(\hat{s},m,\alpha), \bar{s}^+(\hat{s},m,\alpha) \right],
\end{equation}
where
\begin{subequations}\label{eq:sheskin_Eq_ci_sigma}
\begin{align}
    \bar{s}^-(\hat{s},m,\alpha) = \sqrt{\frac{(m-1)\hat{s}^2}{\chi^2(1-\frac{\alpha}{2})}},\\
    \bar{s}^+(\hat{s},m,\alpha) = \sqrt{\frac{(m-1)\hat{s}^2}{\chi^2(\frac{\alpha}{2})}}.
\end{align}
\end{subequations}
Here,~$\chi$ is the chi-square distribution~\cite{knusel1986computation} and~$\hat{s}$ is the estimated standard deviation.

Now, we can use the notion of \emph{sigma levels} to provide some so-called safety probability that~$X$ will not exceed a certain range~\cite{shimoyama2008development}.
Specifically, the sigma level is expressed as a multiple of the standard deviation and is used to determine how many standard deviations a process deviates from its mean, or average, and indicate the degree to which a process is producing results within specifications.
Following Shimoyama et al.~\cite{shimoyama2008development}, given a sigma level~$n_l$, we have a probability~$p_{\text{sig lvl}}(n_l)$ that any possible value of~$X$ will be within the following CI:
\begin{equation}
    \left[\mu - n \sigma, \mu + n \sigma \right] 
  \end{equation}
where~$\mu$ and~$\sigma$ are the true unknown mean and standard deviation of~$X$, respectively. The number~$n_l$ is the sigma level (i.e., number of standard deviations) 
and~$p_{\text{sig lvl}}(n_l)$ is a probability that provided in~\cite{shimoyama2008development}. 
For example,~$p_{\text{sig lvl}}(1) = 68.25 \%, p_{\text{sig lvl}}(2) = 95.46\%,p_{\text{sig lvl}}(3) = 99.73\%$.
Thus, we can estimate~$\mu$ and~$\sigma$ (using Eq.~\eqref{eq:Mean of a population-base} and~\eqref{eq:variance_bound_Sheskin}, using the bound value of Eq.~\eqref{eq:Mean of a population} and~\eqref{eq:sheskin_Eq_ci_sigma} say with CL of~$1-\alpha$ that we have a probability~$p_{\text{sig lvl}}(n_l)$ that any possible value of~$X$ will be within the following CI:
\begin{equation}\label{eq:CI_sigma_level}
\begin{split}
        [ &\bar{X}^-(\hat{X},m,\alpha) - n \bar{s}^+(\hat{s},m,\alpha), \\
    &\bar{X}^+(\hat{X},m,\alpha) + n \bar{s}^+(\hat{s},m,\alpha)].
    \end{split}
  \end{equation}
}

\ignore{
\begin{enumerate} 
    \item \textbf{Probability of success:} 
    The CI value of the success probability can be calculated by the Clopper-Pearson method~\cite{habtzghi2014modified}. This method evaluates the maximum likelihood of the probability and its CI assuming a binomial distribution given finite independent trials.
    
    In particular, let~$X$ be a random variable whose true unknown probability of success is~$\bar{p}$.
    Let~$x_1, \ldots, x_m$ be the outcome of~$m$ samples drawn from~$X$ (i.e.,~$x_i \in \{0, 1 \}$)
    and let~$\hat{p}:= \sum_{i=1}^{m} \frac{x_i}{m}$ be the estimated success probability.
    Then, for any~$\alpha \in [0,1]$, the Clopper-Pearson method calculates the CI which has a CL of~$1-\alpha$ that~$\bar{p}$, the true unknown success probability of~$X$, is within it.
    The lower and upper bound of the CI is defined as:
     \begin{equation}\label{eq:Clopper-Pearson method}
     \begin{split}
            \hat{p}^- &=  \frac{\hat{p}F^{-1}\left( 2\hat{p},2\left( m-\hat{p}+1\right),1-\frac{\alpha}{2} \right)}
                                {m-\hat{p}+1+\hat{p}F^{-1}\left( 2\hat{p},2\left( m-\hat{p}+1\right),1-\frac{\alpha}{2} \right)},\\
            \hat{p}^+ &= \frac{\left( \hat{p}+1 \right)F^{-1}\left( 2\left( \hat{p}+1 \right),2\left( m-\hat{p}\right),\frac{\alpha}{2} \right)}
                                {m-\hat{p}+\left( \hat{p}+1 \right)F^{-1}\left( 2\left( \hat{p}+1 \right),2\left( m-\hat{p}\right),\frac{\alpha}{2} \right)},\\
    \end{split}
     \end{equation}
     where~$F^{-1}$ is the inverse F-distribution function.


    \item \textbf{Mean of a population:} 
    Similarly, the CI of the population mean can be computed following the equation detailed in~\cite{sheskin2011handbook}. 
    That is, let~$X$ be some random process with unknown mean value~$\bar{X}$ and unknown variance~$\bar{\sigma}_X$.
    In addition, let~$x_1, \ldots, x_m$ be~$m$ be samples drawn from~$X$ and let 
   ~$\hat{X}:= \sum_{i=1}^{m} \frac{x_i}{m}$ and~$\hat{s}:=\sqrt{\sum_{i=1}^{m}\frac{\left( x_i - \hat{x} \right)^2}{m}}$ be their mean and standard variation, respectively.
    Then, for any~$\alpha \in [0,1]$ we can say with CL of~$1-\alpha$ that~$\bar{\sigma}_x$, the true unknown variance value of~$X$, is within the following CI:
    \begin{equation}\label{eq:sheskin_Eq_ci_sigma}
    \bar{\sigma}_x \in \left[\bar{\sigma}_X^-, \bar{\sigma}_X^+ \right]:= \left[ \sqrt{\frac{(m-1)\hat{s}^2}{\chi^2(1-\frac{\alpha}{2})}},  \sqrt{\frac{(m-1)\hat{s}^2}{\chi^2(\frac{\alpha}{2})}} \right], 
    \end{equation}
    where,~$\chi$ is the chi-square distribution.

    Then, the variance estimation (Equation.\eqref{eq:sheskin_Eq_ci_sigma}), is used for bounding the value of~$\bar{\sigma}_X$, the true unknown mean value of~$X$.
    This is done using the objective of~\emph{sigma level} which corresponds to the safety probability that the disperse objective value does not exceed the given values~\cite{shimoyama2008development}.
    In particular, let~$n$ be the desired sigma level, then the CI of~$\bar{X}$ is defined as follow:
    \begin{equation}\label{eq:CI_sigma_level}
    \bar{X} \in \left[ \hat{X} - n  \bar{\sigma}_X^-, \hat{X} + n \bar{\sigma}_X^+  \right].
    \end{equation}
    For this definition, the safety probability is equal to~$\left(68.25,95.46,99.73, \ldots \right)$ for~$n=\left( 1,2,3,\ldots \right)$, respectively.  
\end{enumerate}
}
\section{Proofs}

\label{app:Lemma-proofs}
We provide proofs for our lemmas.


\myproof{Lemma~\ref{lemma:Bounding executed path's expected coverage}}
We treat the executed path's coverage probability as a random variable and recall that~$\hat{p}_j^{\pi}$ is the estimated probability to inspect POI~$j$ computed by~$m$ independent samples of execution paths. 
Then, for any desired CL of~$1 - \alpha$, the lower bound on the coverage is defined as $\hat{p}{_j^{\pi}}^- =  \hat{p}^-(\hat{p}_j^{\pi},m,\alpha)$.
Here, the function~$\hat{p}^-$ is defined in Eq.~\eqref{eq:Clopper-Pearson method}.
Since we assume that the inspection of each POI is independent of the inspection outcomes of the other POIs~(see Assumption.~\ref{ass:iid}),
we can add up the lower bounds of the individual POIs to obtain a lower bound on the executed path's coverage.
Namely, we can say with a CL of {at least}~$1-\alpha$, a lower bound on the executed path's coverage is:
\begin{equation}
        \vert \bar{\S}(\pi) \vert^-  
        :=  \sum_{j=1}^{j=k}  \hat{p}{_j^{\pi}}^- 
        = \sum_{j=1}^{j=k} (\hat{p}_j^{\pi},m,\alpha).
\end{equation}
\myendproof

\begin{figure*}[tb]
\begin{subfigure}[c]{.24\textwidth}
  \includegraphics[width=\textwidth]{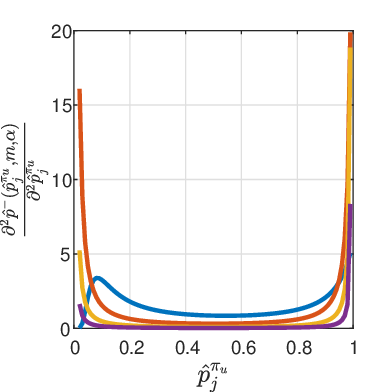}  
  \caption{$\alpha = 0.1$}
  \label{fig:sub-first}
\end{subfigure}
\begin{subfigure}[d]{.24\textwidth}
  \includegraphics[width=\textwidth]{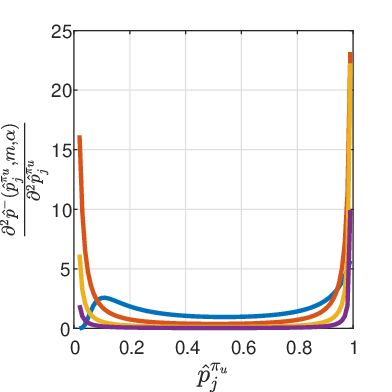}  
  \caption{$\alpha = 0.05$}
  \label{fig:sub-second}
\end{subfigure}
\begin{subfigure}[c]{.24\textwidth}
  \includegraphics[width=\textwidth]{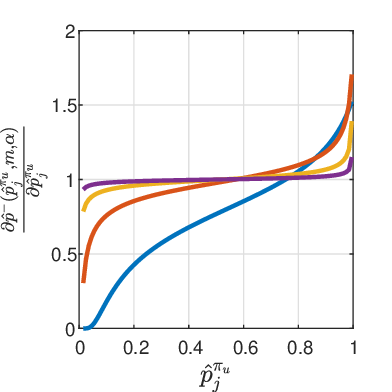}  
  \caption{$\alpha = 0.1$}
  \label{fig:sub-third}
\end{subfigure}
\begin{subfigure}[d]{.24\textwidth}
  \includegraphics[width=\textwidth]{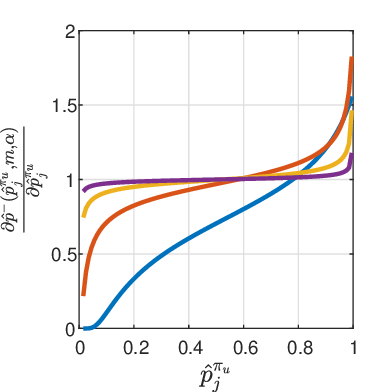}  
  \caption{$\alpha = 0.05$}
  \label{fig:sub-fourth}
\end{subfigure}

   \vskip\baselineskip
\centering
    \includegraphics[width=0.9\textwidth]{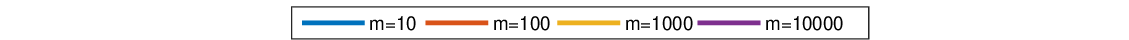}
\caption{
    Numerical demonstration that Eq.~\eqref{eq:assum-req1} 
    (%
    \protect \subref{fig:sub-first} 
    and
    \protect \subref{fig:sub-second}%
    )
    and
    Eq.~\eqref{eq:assum-req2}
    (%
    \protect \subref{fig:sub-third} 
    and
    \protect \subref{fig:sub-fourth}%
    )
    hold for different values of $\alpha$ and $m$.
}
\label{fig:numerically_p_plus}
\end{figure*}

We now proceed to address the next four lemmas,  whose proofs are straightforward. For Lemma~\ref{lemma:Bounding executed path's collision}, note that since we can treat the executed path's collision probability as a random variable,
Eq.~\eqref{eq:lemma2-collision} is an immediate application of the Clopper-Pearson method~(see Eq.~\eqref{eq:Clopper-Pearson method}). Similarly, for Lemma~\ref{lemma:Bounding executed path's expected length} we treat the executed path's expected length as random variable and 
Eq.~\eqref{eq:CI-length-lemma1} follows from  Eq.~\eqref{eq:Mean of a population-base}. 
\ignoreStat{
For Lemma~\ref{lemma:Bounding executed path's expected variance length}, Eq.~\eqref{eq:CI-length-variance} is an immediate application of Eq.~\eqref{eq:variance_bound_Sheskin}. 
Finally, for Lemma~\ref{lemma:Bounding executed path's expected possible length}, Eq.~\eqref{eq:CI_sigma_level_possible} immediately follows from Eq.~\eqref{eq:CI_sigma_level}.
}
We proceed to the final proof.

\myproof{Lemma~\ref{lemma:Bounding executed path's sub-optimal coverage}}
Considering that both values~$m$ and~$\alpha$ are fixed, it would be convenient to define:
\begin{equation}
    f(x): = \hat{p}^-(x,m,\alpha).
\end{equation}

Assume that minimization of $\sum_{j=1}^{j=k} f(x_j)$ is achieved for some values~$x_1^*, \ldots, x_k^*$ where $ x_j^* \in [0,1]$ and $\sum_{j=1}^{j=k} x_j^* \geq k \cdot \kappa$.
We need to show that $\forall j, x_j^* = \kappa$.
Finally, recall that $\kappa \in [0,1]$.
Thus, we distinguish between the cases where $\kappa = 1$ and $\kappa < 1$.

\noindent
\textbf{Case 1 ($\kappa = 1$)}:
Here, $\sum_{j=1}^{j=k} x_j^* \geq k \cdot \kappa  = k$. 
As $x_j^* \in [0,1]$, it follows that $\forall j, x_j^* = 1$.

\noindent
\textbf{Case 2 ($\kappa < 1$)}:
In this case, the proof will be done in two steps:
In step 1, we show that minimization of $\sum_{j=1}^{j=k} f(x_j)$ is achieved when all values of $x_j^*$ are equal.
Namely, $\exists \kappa' \geq \kappa$ s.t. $\forall j, x_j^* = \kappa'$.
In step 2, we show that this value is obtained for $\kappa' = \kappa$
These steps require that Assumption~\ref{ass:mon-convex} holds. 
Namely, that $f(x)$ is convex and monotonically increasing.

\noindent
\textbf{Case 2, step 1}:
Let $\kappa' \in [\kappa, 1)$ be a constant such that $\sum_{j=1}^{j=k} x_j^* = k \cdot \kappa'$
(note that such $\kappa'$ always exists).
We will prove by contradiction that $\forall j, x_j^* = \kappa'$ which concludes this step.

W.l.o.g., $x_1^* > \kappa$, i.e., there exists $\delta_1 >0$ such that $x_1^* = \kappa + \delta_1$.
As $\sum_{j=1}^{j=k} x_j^* = k \cdot \kappa'$, there at least one $j\neq1$ s.t. $x_j^* < \kappa'$.
Additionally, w.l.o.g.\ $x_2^* < \kappa'$. 
Namely, there exists $\delta_2 >0$ such that $x_2^* = \kappa' - \delta_2$.

Let~$\delta = \min(\delta_1, \delta_2)$ and consider the following solution $x_1', \ldots, x_k' $ to our minimization problem:
$x_1' = x_1^* - \delta$, 
$x_2' = x_2^* + \delta$
and
$x_j' = x_j^*$ for $2<j\leq k$.
Notice that this is a valid solution (i.e.,~$\forall j, x_j' \in [0,1]$ and~$\sum_{j=1}^{j=k} x_j' \geq k \cdot \kappa$). 
We will show that $\sum_{j=1}^{j=k} f(x_j') < \sum_{j=1}^{j=k} f(x_j^*)$ which will lead to a contradiction that minimizing~$\sum_{j=1}^{j=k} f(x_j)$ is  achieved for  $x_1^*, \ldots, x_k^*$.

As $x_j' = x_j^*$ for $2<j\leq k$, to show that $\sum_{j=1}^{j=k} f(x_j') < \sum_{j=1}^{j=k} f(x_j^*)$, it suffices to prove that:
\begin{equation} 
f(x_1') + f(x_2') < f(x_1^*) + f(x_2^*).
\end{equation}
This will be done using Assumption~\ref{ass:mon-convex} (i.e., that $f(\cdot)$ is a strictly convex function).
Namely, that~$\frac{\partial^2 f(x)}{\partial^2 x} >0,\forall x \in [0,1]$
which implies (see, e.g.,~\cite{grasmair2016basic}) that~$\forall \lambda$ s.t. $0 < \lambda < 1$  we have:
\begin{equation}\label{eq:convex_property_1}
 f(\lambda x_1 + (1-\lambda) x_2) < \lambda f(x_1) + (1-\lambda) f(x_2).
\end{equation}
Thus,
\begin{equation} 
\begin{split}
    f(x_1') + f(x_2') 
        & = f(x_1^* - \delta) + f(x_2^* + \delta) \\
        & \stackrel{(a)}{=} f(x_1^* - \lambda (x_1^* - x_2^*)) + f(x_2^* + \lambda (x_1^* - x_2^*))\\
        & = f(\lambda x_2^* + (1-\lambda)x_1^*) + f(\lambda x_1^* + (1-\lambda)  x_2^*)\\
        & \stackrel{(b)}{<} (1-\lambda) f(x_1^*) +\lambda f(x_2^*) + \\
           &~~~~ \lambda f(x_1^*) + (1-\lambda) f(x_2^*) \\
        & = f(x_1^*) + f(x_2^*).
\end{split}
\end{equation}
In (a) we use the equation $\lambda= \delta / (x_1^* - x_2^*)$ and note that~$0 < \lambda < 1$. (b) follows from Eq.~\eqref{eq:convex_property_1}.

\noindent
\textbf{Case 2, step 2}:
We will prove this step by contradiction.
Assume that minimizing~$\sum_{j=1}^{j=k} f(x_j)$ is achieved when $\sum_{j=1}^{j=k} x_j^* > k \cdot \kappa$.
Namely,~$\exists \delta >0$ s.t. $\sum_{j=1}^{j=k} x_j^* = k \cdot( \kappa + \delta)$.
Following step 1, $\forall j, x_j^* = \kappa + \delta$ and~$\sum_{j=1}^{j=k} f(x_j^*) = k \cdot f(\kappa + \delta)$.

However, setting $x_j^* = \kappa$ is also a valid solution  for which~$\sum_{j=1}^{j=k} f(x_j^*) = k \cdot f(\kappa)$. 
Following Assumption~\ref{ass:mon-convex}, $f(\cdot)$ is a monotonically increasing function which leads to a contradiction.

\myendproof

As a supplement to the proof of Lemma~\ref{lemma:Bounding executed path's sub-optimal coverage} we numerically demonstrate that Assumption~\ref{ass:mon-convex} holds.
Specifically, recall that we used $f(x)$ to denote $\hat{p}^-(x,m,\alpha)$.
Thus, to show that $f(\cdot)$ monotonically increases and is a strictly convex function we wish to show that $\forall \hat{p}_j^{\pi_u} \in [0,1]$ both:
\begin{equation}
\label{eq:assum-req1}
  \frac{\partial \hat{p}^-(\hat{p}_j^{\pi_u},m,\alpha)}{\partial \hat{p}_j^{\pi_u}}>0,  
\end{equation}
and
\begin{equation}
\label{eq:assum-req2}
    \frac{\partial^2 \hat{p}^-(\hat{p}_j^{\pi_u},m,\alpha)}{\partial^2 \hat{p}_j^{\pi_u}} >0.
\end{equation}
This is demonstrated in Fig.~\ref{fig:numerically_p_plus} for different values of $m$ and~$\alpha$.

\ignore{

\begin{color}{red}
Now, consider the case where $\kappa < 1$ and there exists a constant $\kappa' \in [\kappa,1)$ such that $\sum_{j=1}^{j=k} x_j = k \cdot \kappa' \geq k \cdot \kappa$. we can prove that $\min_{x_j} \sum_{j=1}^{j=k} f(x_j) = k \cdot f(\kappa')$ by contradiction.

That is, consider that minimizing~$\sum_{j=1}^{j=k} f(x_j)$ is achieved for some values~$x_1^*, \ldots, x_k^*$ which are not all equal to~$\kappa'$.
Thus, there must exist some~$j$ s.t. ~$x_j^* > \kappa'$, and assume w.l.o.g that this happens for $j=1$. Namely~$\exists \delta_1 >0$ s.t. $x_1^*  = \kappa'  + \delta_1$.
Since the average of all~$x_j$ is equal to $\kappa'$, there must exist some~$j > 1$s.t.~$x_j^* < \kappa'$, and assume w.l.o.g that this happens for~$j=2$. Namely~$\exists \delta_2 >0$ s.t. $x_2^*  = \kappa'  - \delta_2$.
\end{color}
Let~$\delta = \min(\delta_1, \delta_2)$. Then, consider the following solution $x_1', \ldots, x_k' $ to our minimization problem:
$x_1' = x_1^* - \delta$, 
$x_2' = x_2^* + \delta$
and
$x_j' = x_j^*$ for $2<j\leq k$.
Notice that this is a valid solution (i.e.,~$\forall j, x_j' \in [0,1]$ and~$\sum_{j=1}^{j=k} x_j' \geq k \cdot \kappa'$). 
We will show that $\sum_{j=1}^{j=k} f(x_j') < \sum_{j=1}^{j=k} f(x_j^*)$ which will lead to a contradiction that minimizing~$\sum_{j=1}^{j=k} f(x_j)$ is  achieved for  $x_1^*, \ldots, x_k^*$.

As $x_j' = x_j^*$ for $2<j\leq k$, to show that $\sum_{j=1}^{j=k} f(x_j') < \sum_{j=1}^{j=k} f(x_j^*)$, it suffices to show that
$$
f(x_1') + f(x_2') < f(x_1^*) + f(x_2^*).
$$
This will be done using the fact that $f(\cdot)$ is a strictly convex function (we will show this numerically shortly).
Namely, that~$\frac{\partial^2 f(x)}{\partial^2 x} >0,\forall x \in [0,1]$
which implies (see, e.g.,~\cite{grasmair2016basic}) that~$\forall \lambda$ s.t. $0 < \lambda < 1$  we have:
\begin{equation}\label{eq:convex_property_1}
 f(\lambda x_1 + (1-\lambda) x_2) < \lambda f(x_1) + (1-\lambda) f(x_2).
\end{equation}

Thus,
\begin{equation*} 
\begin{split}
    f(x_1') + f(x_2') 
        & = f(x_1^* - \delta) + f(x_2^* + \delta) \\
        & \stackrel{(1)}{=} f(x_1^* - \lambda (x_1^* - x_2^*)) + f(x_2^* + \lambda (x_1^* - x_2^*))\\
        & = f(\lambda x_2^* + (1-\lambda)x_1^*) + f(\lambda x_1^* + (1-\lambda)  x_2^*)\\
        & \stackrel{(2)}{<} (1-\lambda) f(x_1^*) +\lambda f(x_2^*) + \\
           &~~~~ \lambda f(x_1^*) + (1-\lambda) f(x_2^*) \\
        & = f(x_1^*) + f(x_2^*).
\end{split}
\end{equation*}
  Explanation for the non-trivial transitions:
  (1)~here, we use $\lambda= \delta / (x_1^* - x_2^*)$ and note that~$\lambda$ was defined~$0 < \lambda < 1$,
  (2)~here, we use Eq.~\eqref{eq:convex_property_1}.

\begin{color}{red}

We have now established that~$\min_{x_j} \sum_{j=1}^{j=k} f(x_j)$ is obtained when~$\forall j, x_j=\kappa'$ as expressed in Eq.~\eqref{eq:proof_kappa'}:
\begin{equation}\label{eq:proof_kappa'}
\min_{x_j} \sum_{j=1}^{j=k} f(x_j)=k \cdot f(x_j)=k \cdot f(\kappa').
\end{equation}

It remains to prove that minimizing~$\min_{x_j} \sum_{j=1}^{j=k} f(x_j)$ yield that~$\kappa' = \kappa$ which will be done by contradiction. 
Specifically, suppose that minimizing~$\min_{x_j} \sum_{j=1}^{j=k} f(x_j) = k \cdot f(\kappa')$ (as given by Eq.~\eqref{eq:proof_kappa'}) is achieved by values of~$x_j$ s.t. $\forall j, x_j = \kappa'> \kappa$.
Namely~$\exists \delta >0$ s.t. $\kappa' = \kappa  + \delta$.
However, the constraint $\sum_{j=1}^{j=k} x_j \geq k \cdot \kappa$ is still satisfied even when~$\sum_{j=1}^{j=k} x_j = k \cdot \kappa$, meaning that the solution~$\forall j,x_j=\kappa$ is also valid.
As a result, following Eq.~\eqref{eq:proof_kappa'} yield that minimizing~$\min_{x_j} \sum_{j=1}^{j=k} f(x_j)$ results with~$k \cdot f(\kappa)$.
Then, we will show shortly that~$f(\kappa) <f(\kappa') = f(\kappa+\delta)$ which will lead to a contradiction that~$\kappa'>\kappa$.

This will be done using the fact that~$f(\cdot)$ is an increased function (we will show this numerically shortly).
Namely, that~$\frac{\partial f(x)}{\partial x} >0,\forall x \in [0,1]$
which implies that for~$x_1,x_2$ and~$f(x_1),f(x_2)$:
\begin{equation}\label{eq:f_is_an increased_function}
    x_1 < x_2 \iff f(x_1)< f(x_2)
\end{equation}
Thus, by setting~$x_1 = \kappa, x_2 = \kappa'=\kappa+\lambda$ we get that:
$$
\kappa < \kappa+\lambda \Rightarrow f(\kappa)< f(\kappa+\lambda).
$$

Finally, showing that $f$ is an increased and a strictly convex function is non-trivial and we resort to numerical methods.
\begin{figure*}[tb]
\begin{subfigure}[c]{.24\textwidth}
  \includegraphics[width=\textwidth]{figs/derivative2_p_m_alpha01.eps}  
  \caption{$\alpha = 0.1$}
  \label{fig:sub-third}
\end{subfigure}
\begin{subfigure}[d]{.24\textwidth}
  \includegraphics[width=\textwidth]{figs/derivative2_p_m_alpha005.eps}  
  \caption{$\alpha = 0.05$}
  \label{fig:sub-fourth}
\end{subfigure}
\begin{subfigure}[c]{.24\textwidth}
  \includegraphics[width=\textwidth]{figs/derivative_p_m_alpha01.eps}  
  \caption{$\alpha = 0.1$}
  \label{fig:sub-third}
\end{subfigure}
\begin{subfigure}[d]{.24\textwidth}
  \includegraphics[width=\textwidth]{figs/derivative_p_m_alpha005.eps}  
  \caption{$\alpha = 0.05$}
  \label{fig:sub-fourth}
\end{subfigure}
\caption{Values of~$\frac{\partial^2 \hat{p}^-(\hat{p}_j^{\pi_u},m,\alpha)}{\partial^2 \hat{p}_j^{\pi_u}})$ (figs (a) and (b)) and~$\frac{\partial \hat{p}^-(\hat{p}_j^{\pi_u},m,\alpha)}{\partial \hat{p}_j^{\pi_u}}$ Versus~$\hat{p}_j^{\pi_u} \in [0,1]$ for different values of~$\alpha$ and~$m$.}
\label{fig:numerically_p_plus}
\end{figure*}

Specifically, recall that we used $f(x)$ to denote $\hat{p}^-(x,m,\alpha)$.
Thus, we wish to show that
$$\frac{\partial \hat{p}^-(\hat{p}_j^{\pi_u},m,\alpha)}{\partial \hat{p}_j^{\pi_u}},\frac{\partial^2 \hat{p}^-(\hat{p}_j^{\pi_u},m,\alpha)}{\partial^2 \hat{p}_j^{\pi_u}} >0),\forall \hat{p}_j^{\pi_u} \in [0,1].$$
This is demonstrated in Fig.~\ref{fig:numerically_p_plus} for different values of $m$ and $\alpha$.
\end{color}
}

\section{Illustrative example for possible false negatives}
\label{subsec:Illustrative example for possible false negatives}
In this section, we present an illustrative example that serves to elucidate the notion of false negatives eluded to in Sec.\ref{sec:Theoretical guarantees}. The purpose of this example is to shed light on how the use of statistical guarantees, such as those outlined in Sec.\ref{subsec:Guarantees for a test path}, can potentially lead to erroneous conclusions when applied to the output of the \irisuu algorithm under specific conditions.

Specifically, consider a scenario in which \irisuu is configured with $\rho_{\rm coll} = 0$, meaning that if the algorithm outputs a path $\pi$, it confidently asserts that $\hat{C}(\pi) = 0$, indicating that the path is collision-free.

Furthermore, assume that the algorithm uses~$m=120$. As established in Lemma~\ref{lemma:Bounding executed path's collision}, when considering a given path $\pi$ and setting $\alpha = 0.05$, we find that $\bar{C}(\pi)^+ \approx 0.03$. This implies a~$95\%$ probability that if this path is executed~$100$ times, at most~$3$ of these executions will result in collisions.

Now, let's examine a situation where the false-negative bias comes into play — when there is more than one option for the command path. 
For instance, assume we have~$k=100$ paths, denoted as~$\pi_1, \ldots, \pi_k$, connecting the start and the goal where each of these paths has a true collision probability of~$\bar{C}(\pi_i) = 0.04$.
The probability that a specific path will be estimated to be collision-free is:
$$
(1-\bar{C}(\pi_i))^m =0.96^{120} \approx 0.0075.
$$
However, the false-negative bias effect lies in the probability that at least one of the $k$ paths will be estimated as collision-free:
$$
1 - \left(1 - (\bar{C}(\pi_i))^m \right)^k
\approx
1 - \left(1 - 0.0075\right)^{100}
\approx
0.52.
$$
Namely, there is more than $50\%$ chance that the algorithm will output a path $\pi_i$ assumed to be  collision-free whose true collision probability is $4\%$
(which, of course, is larger than the upper bound of $\bar{C}(\pi_i)^+ \approx 0.03$ guaranteed with $95\%$ confidence if Lemma~\ref{lemma:Bounding executed path's collision} was wrongly used).

\end{document}